\newtheorem{lemma}{Lemma}
\newtheorem{theorem}{Theorem}
\newtheorem{assumption}{Assumption}
\begin{document}
\title{Integrated Decision and Control: Towards Interpretable and Computationally Efficient Driving Intelligence}

\author{Yang Guan\textsuperscript{1}, Yangang Ren\textsuperscript{1}, Qi Sun\textsuperscript{1}, Shengbo Eben Li*\textsuperscript{1}, Haitong Ma\textsuperscript{1}, Jingliang Duan\textsuperscript{1}, Yifan Dai\textsuperscript{2}, Bo Cheng\textsuperscript{1}
\thanks{This work is supported by International Science \& Technology Cooperation Program of China under 2019YFE0100200, NSF China with 51575293, and U20A20334. It is also partially supported by Tsinghua University-Toyota Joint Research Center for AI Technology of Automated Vehicle.}
\thanks{\textsuperscript{1}School of Vehicle and Mobility, Tsinghua University, Beijing, 100084, China. \textsuperscript{2}Suzhou Automotive Research Institute, Tsinghua University, Suzhou, 215200, China. All correspondence should be sent to S. Eben Li. $<$lisb04@gmail.com$>$.}
}
\maketitle

\begin{abstract}
Decision and control are core functionalities of high-level automated vehicles. Current mainstream methods, such as functionality decomposition and end-to-end reinforcement learning (RL), either suffer high time complexity or poor interpretability and adaptability on real-world autonomous driving tasks.
In this paper, we present an interpretable and computationally efficient framework called integrated decision and control (IDC) for automated vehicles, which decomposes the driving task into static path planning and dynamic optimal tracking that are structured hierarchically. First, the static path planning generates several candidate paths only considering static traffic elements. Then, the dynamic optimal tracking is designed to track the optimal path while considering the dynamic obstacles. To that end, we formulate a constrained optimal control problem (OCP) for each candidate path, optimize them separately and follow the one with the best tracking performance. To unload the heavy online computation, we propose a model-based reinforcement learning (RL) algorithm that can be served as an approximate constrained OCP solver. Specifically, the OCPs for all paths are considered together to construct a single complete RL problem and then solved offline in the form of value and policy networks, for real-time online path selecting and tracking respectively. We verify our framework in both simulations and the real world. Results show that compared with baseline methods IDC has an order of magnitude higher online computing efficiency, as well as better driving performance including traffic efficiency and safety. In addition, it yields great interpretability and adaptability among different driving tasks. The effectiveness of the proposed method is also demonstrated in real road tests with complicated traffic conditions.
\end{abstract}

\begin{IEEEkeywords}
Automated vehicle, Decision and control, Reinforcement learning, Model-based.
\end{IEEEkeywords}

\section{Introduction}
Intelligence of automobile technology and driving assistance system has great potential to improve safety, reduce fuel consumption and enhance traffic efficiency, which will completely change the road transportation. Decision and control are indispensable for high-level autonomous driving, which are in charge of computing the expected instructions of steering and acceleration relying on the environment perception results. It is generally believed that there are two technical routes for the decision and control of automated vehicles: decomposed scheme and end-to-end scheme.

Decomposed scheme splits the decision and control functionality into several serial submodules, such as prediction, behavior selection, trajectory planning and control \cite{paden2016survey}. Prediction is to predict the future trajectory of traffic participants to determine the feasible region in future time steps \cite{lefevre2014survey}. It is further decomposed into behavior recognition \cite{li2017estimation, sun2018proba} and trajectory prediction \cite{alahi2016social, hou2019interactive}.
Since the prediction algorithms usually works on each surrounding vehicle, it means that the more the number of vehicles, the more computation is needed. Behavior selection is then used to choose a high-level driving behavior relying on an expert system in which many designed rules are embedded. Typical methods include finite state machine \cite{montemerlo2008junior} and decision tree \cite{olsson2016dt}.
Based on the selected behavior, a collision free space-time curve satisfying vehicle dynamics is calculated according to the predicted trajectories and road constraints by the trajectory planning submodule. Three main categories of the planning algorithms include optimization-based, search-based and sample-based. Optimization-based methods formulate the planning problem into an optimization problem, where specific aspects of trajectory are optimized and constraints are considered \cite{gray2012predictive, nilsson2014manoeuvre}. However, it suffers from long computational time for large-scale nonlinear and non-convex problem. The search-based methods represented by A* and Rapidly-exploring Random Tree are more efficient \cite{dolgov2010path, lee2014local, ardakani2015decremental, lavalle1998rapidly, kuwata2009real,karaman2010incremental}, but they usually lead to low-resolution paths and can barely take dynamic obstacles into consideration. The sample-based methods also have poor computing efficiency because they sample points and interpolate them evenly in the whole state space \cite{shah2015autonomous,mouhagir2017trajectory}. Xin \emph{et al.} proposed a combination of the optimization-based and search-based methods, where a trajectory is searched in the space-time by A* and then smoothed by model predictive control (MPC), yielding the best performance in terms of planning time and comfort \cite{xin2021enable}. Finally, the controller is used to follow the planned trajectory and calculate the expected controls by linear quadratic regulator or MPC \cite{eben2013economy, li2010model}.
The decomposed scheme requires large amount of human design but is still hard to cover all possible driving scenarios due to the long tail effect. Besides, the real time performance cannot be guaranteed because it is time-consuming to complete all the works serially in a limited time for industrial computers.

End-to-end scheme computes the expected instructions directly from inputs given by perception module using a policy usually carried out by a deep neural network (NN).
Reinforcement learning (RL) methods do not rely on labelled driving data but learn by trial-and-error in real-world or a high fidelity simulator \cite{EbenRL,Kiran2021deep}.
Early RL applications on autonomous driving mainly focus on learning a single driving behavior, e.g., lane keeping \cite{sallab2017deep}, lane changing \cite{wang2018reinforcement} or overtaking \cite{ngai2011multiple}. They usually employ deep Q-networks \cite{mnih2015human} or deep deterministic policy gradient method \cite{lillicrap2015ddpg} to learn policy in discrete or continuous domain. Besides, they own different reward functions for their respective goals. Recently, RL has been applied in certain driving scenarios. Duan \emph{et al.} realized decision making under a virtual two-lane highway using hierarchical RL, which designs complicated reward functions for its high-level maneuver selection and three low-level maneuvers respectively. Guan \emph{et al.} achieved centralized control in a four-leg single-lane intersection with sparse rewards, in which only eight cars are considered \cite{guan2020centralized}. Chen \emph{et al.} designed a bird-view representation and used visual encoding to capture the low-dimensional latent states, solving the driving task in a roundabout scenario with dense surrounding vehicles in a high-definition driving simulator \cite{chen2019model}. However, they reported limited safety performance and poor learning efficiency. Current RL methods mostly work on a specific task, in which a set of complicated reward functions is required to offer guidance for policy optimization, such as, distance travelled towards a destination, collisions with other road users or scene objects, maintaining comfort and stability while avoiding extreme acceleration, braking or steering. It is non-trivial and needs a lot of human efforts to tune, causing poor adaptability among driving scenarios and tasks. Besides, the outcome of the policy is hard to interpret, which makes it barely used in real autonomous driving tasks. Moreover, they cannot deal with safety constraints explicitly and suffer from low convergence speed.

In this paper, we propose an integrated decision and control framework (IDC) for automated vehicles, which has great interpretability and online computing efficiency, and is applicable in different driving scenarios and tasks.
The contributions emphasize in three parts: 

1) We proposed an IDC framework for automated vehicles, which decomposes driving tasks into static path planning and dynamic optimal tracking hierarchically. The high-level static path planning is used to generate multiple paths only considering static constraints such as road topology, traffic lights. The low-level dynamic optimal tracking is used to select the optimal path and track it considering dynamic obstacles, wherein a finite-horizon constrained optimal control problem (OCP) is constructed and optimized for each candidate path. The optimal path is selected as the one with the lowest optimal cost function. The IDC framework is computationally efficient because we unload the heavy online optimizations by solving the constrained OCPs offline in the form of value and policy NNs using RL for path selecting and tracking thereafter. It is interpretable in the sense that the solved value and policy functions are the approximation for the optimal cost and the optimal action of the constrained OCP. Moreover, the IDC employs RL to solve a task-independent OCP with tracking errors as objective and safety constraints, making it applicable among a variety of scenarios and tasks.

2) We develop a model-based RL algorithm called generalized exterior point method (GEP) for the purpose of approximately solving OCP with large-scale state-wise constraints. The GEP is in fact an extension of the exterior point method in the optimization domain to the field of NN, in which it first constructs an extensive problem involving all the candidate paths and transforms it into an unconstrained problem with a penalty on safety violations. Afterward, the approximate feasible optimal control policy is obtained by alternatively performing gradient descent and enlarging the penalty. The convergence of the GEP is proved. The GEP is the core of IDC because it can deal with a large number of state-wise constraints explicitly and update NNs efficiently with the guidance of model. To the best of our knowledge, GEP is the first model-based solver for OCPs with large-scale state-wise constraints that are parameterized by NNs.

3) We evaluate the proposed method extensively in both simulations and in a real-world road to verify the performance in terms of online computing efficiency, safety, and task adaptation, etc. The results show the potential of the method to be applied in real-world autonomous driving tasks.

\section{Integrated decision and control framework}
In this section, we introduce the framework of integrated decision and control framework (IDC). As shown in Fig. \ref{fig.overall_framework}, the framework consists of two layers: static path planning and dynamic optimal tracking.

Different from existing schemes, the upper layer aims to generate multiple candidate paths only considering static information such as road structure, speed limit, traffic signs and lights. Note that these paths will not include time information. Each candidate path is attached with an expected velocity determined by rules from human experience. 

The lower layer further considers the candidate paths and the dynamic information such as surrounding vehicles, pedestrians and bicycles. For each candidate path, a constrained optimal control problem (OCP) is formulated and optimized to choose the optimal path and find the control command. The objective function is to minimize the tracking error within a finite horizon and the constraints characterize safety requirements. In each time step, the optimal path is chosen as the one with the lowest optimal cost function and thereafter tracked. The core of our method is to substitute all the expensive online optimizations with feed-forward of two neural networks (NNs) trained offline by reinforcement learning (RL). Specifically, we first formulate a complete RL problem considering all the candidate paths. And then we develop a model-based RL algorithm to solve this problem to obtain a policy NN called actor that is capable of tracking different shape of paths while maintaining the ability to avoid collisions. Meanwhile, a value NN called critic is learned to approximate the optimal cost of tracking different paths, for the purpose of online path selection.
\begin{figure*}[!htb]
\centering{\includegraphics[width=0.7\textwidth]{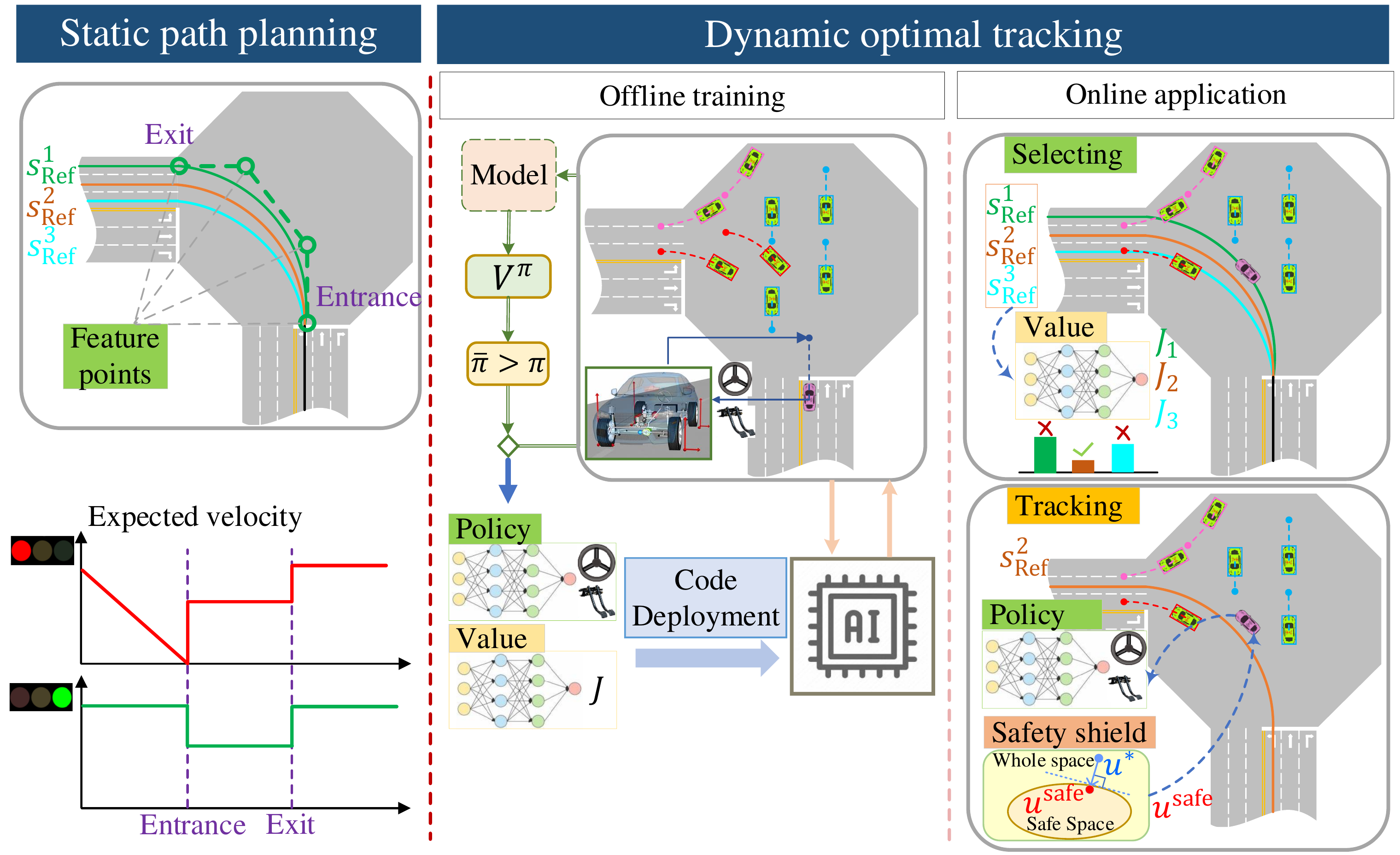}}
\caption{Illustration of the integrated decision and control framework.}
\label{fig.overall_framework}
\end{figure*}

The advantages of the IDC framework are summarized in three points. 
First, it has high online computing efficiency. The upper layer can be quite efficient because it involves only static information. It even allows to embed key parameters of paths planned in advance into the electronic map and read from it directly. On the other hand, the lower layer utilizes two trained networks to compute optimal path and control command, which is also time-saving due to the fast propagation of NNs.
Second, it can be easily transferred among different driving scenarios without a lot of human design. As the upper layer only uses the static information, the multiple paths can be easily generated by the road topology of various scenes such as intersections, roundabouts and ramps. Besides, the lower-layer always formulates a similar tracking problem with safety constraints no matter what the task is, which all can be solved by the developed model-based solver, saving time to design separate reward functions for different tasks.
Third, the IDC framework are interpretable in the way that the learned value and policy function approximate the optimal value and the optimal action of the constrained OCPs. These optimal solutions can also be obtained by model predictive control (MPC) to verify the correctness of the trained NNs. Besides, the functionality partitioning in the IDC framework helps us to explain the results of path selection and tracking accordingly.
\section{Static path planning}
This module aims to generate multiple candidate paths for optimal tracking of the lower layer meanwhile maintaining high computational efficiency. For that purpose, the Cubic Bezier curve is adopted to obtain a continuous and smooth path. Given the road map, we choose four feature points to shape of Bezier curve and generate multiple paths with considerations of static traffic information such as road topology, traffic rules, etc., following the Algorithm \ref{alg.multipath}. We summary two strategies to generate candidate paths. One is the pre-generating method, in which these paths are produced in advance and do not change with the riding of the ego vehicle, as illustrated in Fig.~\ref{fig.overall_framework}.
The other one is the real-time generating method where the paths are planned in real-time and always start from the ego vehicle.
Apparently, the former is simple and convenient to be directly embedded in the map, and has higher efficiency because it only needs to read the pre-stored paths every step. By contrast, the latter has higher flexibility and may conduct more complex driving behaviors, but its online computing efficiency will be a bit lower than the former. Both methods will be much more efficient than current existing methods as they simply generate regular candidate paths without considering collision avoidance with the dynamic obstacles. Note that the planned paths only serve as references for the lower layer, the actual travelled path can be largely different due to further considerations on dynamic obstacles.

\begin{algorithm}[htbp]
  \caption{Static path planning}
  \label{alg.multipath}
\begin{algorithmic}
  \STATE {\bfseries Initialize:} discrete point number $M$
    \FOR{each path}
        \STATE Choose four feature points: start point $(X_0,Y_0)$, control points $(X_1, Y_1)$,$(X_2,Y_2)$, and end point $(X_3,Y_3)$ 
    \FOR{$t=0:1/M:1$}
         \STATE \small$p^{\rm ref}_{\rm x}(t)=X_0(1-t)^3+3X_1t(1-t)^2+3X_2t^2(1-t)+X_3t^3$
         \STATE $p^{\rm ref}_{\rm y}(t)=Y_0(1-t)^3+3Y_1t(1-t)^2+3Y_2t^2(1-t)+Y_3t^3$
         \STATE $\phi^{\rm ref}(t)=\arctan(\frac{Y(t)-Y(t-1)}{X(t)-X(t-1)})$
    \ENDFOR
    \STATE Output $\left\{(p^{\rm ref}_{\rm x}, p^{\rm ref}_{\rm y}, \phi^{\rm ref})\right\}$
    \ENDFOR
\end{algorithmic}
\end{algorithm}

As for the expected velocity, we heuristically assign different speed levels with respect to road regions, traffic signals as well as traffic rules such as speed limits or stop signs, which can be quickly designed according to human knowledge. An example is shown in Fig.~\ref{fig.overall_framework}. Similar to the candidate paths, the expected velocity provides a goal for the lower layer to track but not necessarily to follow strictly, so that the ego vehicle seeks to minimize the tracking error while satisfying safety constraints. Actually, it can be simply a fixed value, the lower layer will still always learn a driving policy to balance the safety requirements and tracking errors.

\section{Dynamic optimal tracking}
\subsection{Problem formulation}\label{sec.problem_formulation}
In each time step $t$, provided multiple candidate paths generated, the lower layer is designed to first select an optimal path $\tau^*\in\Pi$ according to a certain criterion, where $\Pi$ denotes a collection of $N$ candidate paths. And then it obtains the control quantities $u_t$ by optimizing a finite horizon constrained OCP, in which the objective is to minimize the tracking error as well as the control energy, and the constraints are to keep a safe distance from obstacles:
\begin{equation}\label{eq.lower_layer_ocp}
\begin{aligned}
\min\limits_{u_{i|t}, i=0:T-1}\quad &J=\sum_{i=0}^{T-1}(x^{\rm ref}_{i|t}-x_{i|t})^\top Q (x^{\rm ref}_{i|t}-x_{i|t}) + u^{\top}_{i|t} R u_{i|t}\\
{\rm s.t.} \quad& x_{i+1|t} = F_{\rm ego}(x_{i|t}, u_{i|t}),\\
                & x^j_{i+1|t} = F_{\rm pred}(x^j_{i|t}),\\
                & (x_{i|t}-x^j_{i|t})^\top M (x_{i|t}-x^j_{i|t}) \ge D^{\rm safe}_{\rm veh},\\
                & (x_{i|t}-x^{\rm road}_{i|t})^\top M (x_{i|t}-x^{\rm road}_{i|t}) \ge D^{\rm safe}_{\rm road},\\
                & x_{i|t}\le L_{\rm stop}, {\rm if\  light=red}\\
                & x_{0|t} = x_t, x^j_{0|t} = x^j_t, u_{0|t} = u_t\\
                &  i=0:T-1, j\in I\\
\end{aligned}
\end{equation}
where $T$ is the prediction horizon, $x_{i|t}$ and $u_{i|t}$ are the ego vehicle state and control in the virtual predictive time step $i$ starting from the current time step $t$, where ``virtual" means the time steps within the predictive horizon. $x^{\rm ref}_{i|t}$ and $x^{\rm road}_{i|t}$ are the closest point from $x_{i|t}$ on the selected reference $\tau^*$ and on the road edge, respectively. $x^j_{i|t}$ is the state of the $j$-th vehicle in the interested vehicle set $I$. $Q, R, M$ are positive-definite weighting matrices. $F_{\rm ego}$ represents the bicycle vehicle dynamics with linear tire model. $F_{\rm pred}$, on the other hand, is the surrounding vehicle prediction model. Besides, $D^{\rm safe}_{\rm veh}$ and $D^{\rm safe}_{\rm road}$ denote the safe distance from other vehicles and the road edge. $L_{\rm stop}$ is the position of stop line. Note that in \eqref{eq.lower_layer_ocp} the virtual states are all produced by the dynamics model and the prediction model except that the $x_{0|t}$ is assigned with the current real state $x_t$. The variables and functions in \eqref{eq.lower_layer_ocp} are further defined as:
\begin{equation}\label{eq.variable_definition1}
\begin{aligned}
&x^{\rm ref}_{i|t}=
\begin{bmatrix}
p^{\rm ref}_{\rm x}\\
p^{\rm ref}_{\rm y}\\
v^{\rm ref}_{\rm lon}\\
0\\
\phi^{\rm ref}\\
0\\
\end{bmatrix}_{i|t}
x^{\rm road}_{i|t}=
\begin{bmatrix}
p^{\rm road}_{\rm x}\\
p^{\rm road}_{\rm y}\\
0\\
0\\
0\\
0\\
\end{bmatrix}_{i|t}
x_{i|t}=
\begin{bmatrix}
p_{\rm x}\\
p_{\rm y}\\
v_{\rm lon}\\
v_{\rm lat}\\
\phi\\
\omega\\
\end{bmatrix}_{i|t}\\
&x^j_{i|t}=
\begin{bmatrix}
p^j_{\rm x}\\
p^j_{\rm y}\\
v^j_{\rm lon}\\
0\\
\phi^j\\
0\\
\end{bmatrix}_{i|t}
u_{i|t}=
\begin{bmatrix}
\delta\\
a\\
\end{bmatrix}_{i|t}
M=
\begin{bmatrix}
1& 0\\
0& 1\\
\end{bmatrix}\\
\end{aligned}
\end{equation}

\begin{equation}\label{eq.variable_definition2}
\begin{aligned}
&F_{\rm ego}=
\begin{bmatrix}
p_{\rm x} +\Delta t (v_{\rm lon}\cos{\phi}-v_{\rm lat}\sin{\phi})\\
p_{\rm y} +\Delta t (v_{\rm lon}\sin{\phi}+v_{\rm lat}\cos{\phi})\\
v_{\rm lon}+\Delta t (a+v_{\rm lat}\omega)\\
\frac{mv_{\rm lon}v_{\rm lat}+\Delta t[(L_f k_f-L_r k_r)\omega-k_f \delta v_{\rm lon}-mv_{\rm lon}^2\omega]}{mv_{\rm lon}-\Delta t (k_f + k_r)}\\
\phi + \Delta t \omega\\
\frac{-I_z \omega v_{\rm lon}-\Delta t[(L_f k_f-L_r k_r)v_{\rm y}-L_f k_f \delta v_{\rm lon}]}{\Delta t(L^2_f k_f+L^2_r k_r)-I_z v_{\rm lon}}\\
\end{bmatrix}\\
\end{aligned}
\end{equation}

\begin{equation}\label{eq.variable_definition3}
\begin{aligned}
&F_{\rm pred}=
\begin{bmatrix}
p^j_{\rm x} +\Delta t (v^j_{\rm lon}\cos{\phi^j}-v^j_{\rm lat}\sin{\phi^j})\\
p^j_{\rm y} +\Delta t (v^j_{\rm lon}\sin{\phi^j}+v^j_{\rm lat}\cos{\phi^j})\\
v^j_{\rm lon}\\
0\\
\phi^j + \Delta t \omega^j_{\rm pred}\\
0\\
\end{bmatrix}\\
\end{aligned}
\end{equation}
where $p_{\rm x}, p_{\rm y}$ are the position coordinates, for ego and other vehicles, it is the position of their respective center of gravity (CG), $v_{\rm lon}, v_{\rm lat}$ are the longitudinal and lateral velocities, $\phi$ is the heading angle, $\omega$ is the yaw rate, $\delta$ and $a$ are the front wheel angle and the acceleration commands, respectively. The ego dynamics model is discretized by the first-order Euler method, which has been proved to be numerically stable at any low speed \cite{ge2020numerically}. Other vehicle parameters are listed in Table \ref{tab.vehicle_parameters}. The vehicle prediction model is a simple deduction from the current state with constant speed and turning rate $w^j_{\rm pred}$, which depends on the driving scenarios and will be specified in the experiments.

The optimal path is chosen as the one with the best tracking performance while satisfying the safety requirements, i.e.,
\begin{equation}\label{eq.path_selection}
\tau^* = \arg\min\limits_{\tau}\{J^*_{\tau}|\tau\in\Pi\}
\end{equation}
where $J_{\tau}^*$ is the optimal cost of the path $\tau$. This means that for each path candidate $\tau\in\Pi$ we ought to construct such an OCP \eqref{eq.lower_layer_ocp} and to optimize it to obtain its optimal value $J^*_{\tau}$.
Such a criterion of path selection is consistent with the objective of the optimal control problem \eqref{eq.lower_layer_ocp} in the sense that it optimizes the problem with respect to paths within the candidate path set, compared to the original optimization with respect to control quantities.

In such framework of selecting and tracking, the lower layer is able to well determine a control quantity that yields good driving efficiency and, meanwhile, the safety guarantee. But unfortunately, so far, the time complexity is extremely high for on-board vehicle computing devices, as in each time step we need to solve $N$ constrained optimal control problem, while each of them owns up to hundreds of variables and thousands of constrains. Therefore, we employ RL to unload the online optimization burden. Specifically, we show that this framework naturally corresponds to the actor-critic architecture of RL, where the critic, with the function of judging state goodness, can be served as the path selector while the actor is in charge of action output and used for tracking. With a paradigm of offline training and online application, the computation burden in the lower layer can be almost nearly eliminated.

\subsection{Offline training}
\subsubsection{Complete RL problem formulation}\label{sec.multitask}
We aim to solve the path selecting and path tracking problems \eqref{eq.lower_layer_ocp} and \eqref{eq.path_selection} by RL. Notice that there are several significant differences between the OCP \eqref{eq.lower_layer_ocp} and RL problems, originated from the online or offline optimizations. The OCP, which is designed for online optimization, seeks to find a single optimal control quantity of a single state given a specific path at time step $t$. RL problems, on the other hand, aim to solve a parameterized control policy called actor that maps from state space $\mathcal{S}$ to control space $\mathcal{A}$ as well as a parameterized value function called critic that evaluates the preference to a state, in an offline style. Therefore, in RL problems the variable to be optimized is no longer the control quantity but the parameters of the actor and critic that usually in the form of NN. In addition, the objective function and constraints of RL are not about a single state any more, but about a state distribution in the state space. The state $s\in\mathcal{S}$ is the input of the actor and critic, which is designed to contain necessary information to determine driving actions. Except for the information of the ego vehicle and surrounding vehicles, we also incorporate the path information as a part of states to get a policy that can handle tracking tasks for different paths. The resulting RL problem is shown as:
\begin{equation}\label{eq.converted_rl_problem_actor}
\begin{aligned}
\min\limits_{\theta} \quad &J_{\rm actor} = \mathbb{E}_{s_{0|t}}\bigg\{\sum^{T-1}_{i=0}l(s_{i|t}, \pi_{\theta}(s_{i|t}))\bigg\}\\
{\rm s.t.}\quad & s_{i+1|t}=f(s_{i|t}, \pi_{\theta}(s_{i|t}))\\
& g_e(s_{i|t})\ge0, e\in E\\
& s_{0|t} = s_t\leftarrow\{\tau, x_t, x_t^j, j\in J\} \sim d,\\
& i=0:T-1\\
\end{aligned}
\end{equation}

\begin{equation}\label{eq.converted_rl_problem_critic}
\begin{aligned}
\min\limits_{w} \quad &J_{\rm critic} = \mathbb{E}_{s_{0|t}}\bigg\{\bigg(\sum^{T-1}_{i=0}l(s_{i|t}, \pi_{\theta}(s_{i|t})) - V_w(s_{0|t})\bigg)^2\bigg\}\\
{\rm s.t.}\quad & s_{i+1|t}=f(s_{i|t}, \pi_{\theta}(s_{i|t}))\\
& s_{0|t} = s_t \sim d,\\
& i=0:T-1\\
\end{aligned}
\end{equation}
where $s_t\leftarrow\{\tau, x_t, x_t^j, j\in J\}$ denotes the state is constructed using the information of reference path, ego vehicle state and surrounding vehicle states. $l(s_{i|t}, \pi_{\theta}(s_{i|t})):=(x^{\rm ref}_{i|t}-x_{i|t})^\top Q (x^{\rm ref}_{i|t}-x_{i|t})+\pi_{\theta}^{\top}(s_{i|t}) R \pi_{\theta}(s_{i|t})$. $f$ denotes the system model, which is an aggregation of the $F_{\rm ego}$ and $F_{\rm pred}$. $g_e(s_{i|t}), e\in E$ denotes all the constraints about the state $s_{i|t}$, including that with other vehicles, road, and traffic rules. $d$ denotes the state distributions sampled from the environment. $\pi_{\theta}: \mathcal{S}\rightarrow\mathcal{A}$ and $V_w: \mathcal{S}\rightarrow\mathbb{R}$ are actor and critic, parameterized by $\theta$ and $w$ that are generally in form of NNs, respectively. From the results of \cite{allen2019convergence}, given over-parameterized NNs, the optimal policy $\pi_{\theta_*}$ of \eqref{eq.converted_rl_problem_actor} maps to an optimal action of the original OCP \eqref{eq.lower_layer_ocp} with arbitrary initial state $s_t$. Consequently, the optimal value $J^*$ from \eqref{eq.lower_layer_ocp}, of course, would be equal to the one mapped by the optimal value function $V_{w_*}$ of \eqref{eq.converted_rl_problem_critic} from $s_t$, i.e.,
\begin{equation}\label{eq.equivalence}
\begin{aligned}
u^*_t &= \pi_{\theta_*}(s_t), \ \forall{s_t \in \mathcal{S}}\\
J^* &= V_{w_*}(s_t), \ \forall{s_t \in \mathcal{S}}\\
\end{aligned}
\end{equation}
In other words, the optimal policy and value function can output the optimal control and value under arbitrary states, i.e., arbitrary combinations of paths, ego state and surrounding vehicle states.

\subsubsection{Solver - GEP}
To solve the converted RL problem, we adopt the policy iteration framework, wherein two procedures, namely policy evaluation and policy improvement, are alternatively performed to update the critic and actor. Since the critic update is an unconstrained problem that can be optimized by ordinary gradient descent methods, we mainly focus on the actor update which is quite challenging because of its large-scale parameter space, nonlinear property and infinite number of state constraints. To tackle this, we propose a model-based RL algorithm called generalized exterior point method (GEP) adapted from the one in the optimization field. It first transforms the constrained problem \eqref{eq.converted_rl_problem_actor} into an unconstrained one by the exterior penalty function, shown as:
\begin{equation}\label{eq.unconstrained_rl}
\begin{aligned}
\min\limits_{\theta} \quad J_p&=J_{\rm actor} + \rho J_{\rm penalty}\\
&=\mathbb{E}_{s_{0|t}}\bigg\{\sum^{T-1}_{i=0}l(s_{i|t}, \pi_{\theta}(s_{i|t}))\bigg\}+\rho\mathbb{E}_{s_{0|t}}\bigg\{\sum^{T-1}_{i=0}\varphi_i(\theta)\bigg\}\\
{\rm s.t.}\quad& s_{i+1|t}=f(s_{i|t}, \pi_{\theta}(s_{i|t}))\\
& \varphi_i(\theta) = \sum_{e\in E}[\max\{0, -g_e(s_{i|t})\}]^2\\
& s_{0|t} = s_t \sim d,\\
& i=0:T-1\\
\end{aligned}
\end{equation}
where $\varphi$ is the penalty function, $\rho$ is the penalty factor. After that, we alternatively optimize the policy parameters by performing $m$ iterations of gradient descent and increase the penalty factor by multiplying a scalar $c>1$. We call the former the optimizing procedure and the latter the amplifying procedure. Different from exterior point method, the optimizing procedure of GEP does not necessarily find the optimal solution of the unconstrained problem \eqref{eq.unconstrained_rl}, but we will prove that GEP still converges to the optimal policy under certain conditions. It can be seen that GEP is simple to implement and is powerful to deal with large-scale parameter space facilitated by the gradient descent technique. Besides, from the form of \eqref{eq.unconstrained_rl}, numerous state constraints can be handled naturally by regarding the constraint violation as a term of utility function multiplied by $\rho$. The training pipeline is shown in Algorithm \ref{alg.offline}.
\begin{algorithm}[tbp]
  \caption{Dynamic optimal tracking - Offline training}
  \label{alg.offline}
\begin{algorithmic}
  \STATE {\bfseries Initialize:} critic network $V_w$ and actor network $\pi_{\theta}$ with random paramaters $w, \theta$, buffer $\mathcal{B}\leftarrow\emptyset$, learning rates $\beta_w, \beta_{\theta}$, penalty factor $\rho=1$, penalty amplifier $c$, update interval $m$
  \FOR{each iteration $i$}
      \STATE // Sampling (from environment)
      \STATE Randomly select a path $\tau\in\Pi$, initialize ego state $x_t$ and vehicle states $x^j_t, j\in I$
      \FOR{each environment step}
      \STATE $s_{t}\leftarrow\{\tau, x_t, x^j_t, j\in I\}$
      \STATE $\mathcal{B}\cup\{s_t\}$
      \STATE $u_t=\pi_{\theta}(s_t)$
      \STATE Apply $u_t$ to observe $x_{t+1}$ and $x^j_{t+1}, j\in I$
      \ENDFOR
      \STATE 
      \STATE // Optimizing (GEP)
      \STATE Fetch a batch of states from $\mathcal{B}$, compute $J_{\rm critic}$ and $J_p$ by $f$ and $\pi_{\theta}$
      \STATE \textbf{PEV}: $w\leftarrow w-\beta_w \nabla_w J_{\rm critic}$
      \STATE \textbf{PIM}: if $i\mod m$:$\rho\leftarrow c\rho$; $\theta\leftarrow\theta-\beta_{\theta}\nabla_{\theta}J_p$
  \ENDFOR
\end{algorithmic}
\end{algorithm}

Next, we present the convergence proof of GEP. We say a ``round" completes when an optimizing procedure is finished. Since the optimizing procedure only improves \eqref{eq.unconstrained_rl} a fixed number of times to obtain a fair solution but does not necessarily find the optimal one, we first give an assumption about how well the solution is.
\begin{assumption}\label{assu1}
After the round $k$ completes, we have the penalty factor $\rho_k$ and an optimized policy parameter $\theta_{k}$. We assume that $\theta_{k}$ satisfies
\begin{equation}
J_p(\theta_k, \rho_{k})\le \min_{\theta} J_p(\theta, \rho_{k}) + \Delta_k, k=1,2,\dots
\end{equation}
where $\Delta_k\ge0, k\ge1$ is a positive non-increasing sequence that has finite series, i.e., $\Delta_k\ge\Delta_{k+1}, \sum_{i=0}^{\infty}\Delta_i< \infty$.
\end{assumption}
The Assumption \ref{assu1} describes that with the convergence of NNs, the gap between the solution of the optimizing procedure and the optimal one is gradually eliminated, i.e., $\lim_{k\rightarrow\infty}\Delta_k=0$, as indicated by the finite series. About the gap, we have the following Lemma.
\begin{lemma}\label{lemma.1}
There exists a positive non-increasing sequence $\delta_k, k\ge1$ that satisfies
\begin{equation}
\begin{aligned}
    &\Delta_k = \delta_k - \delta_{k+1},\\
    &\delta_k\ge\delta_{k+1},
    \lim_{k\rightarrow\infty}\delta_k=0
\end{aligned}
\end{equation}
\end{lemma}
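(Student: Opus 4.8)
The plan is to produce $\delta_k$ by an explicit formula rather than abstractly, namely to take it to be the tail of the series $\sum_i \Delta_i$ guaranteed convergent by Assumption~\ref{assu1}. Concretely I would set $\delta_k := \sum_{i=k}^{\infty}\Delta_i$ for $k\ge 1$. Since $\Delta_i\ge 0$ and $\sum_{i=0}^{\infty}\Delta_i<\infty$, this tail sum is a well-defined finite nonnegative number for every $k$, so the definition is legitimate; this is the only place the hypotheses of Assumption~\ref{assu1} are actually invoked.

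Having fixed this definition, the three required properties follow immediately. \emph{Telescoping identity:} $\delta_k-\delta_{k+1}=\sum_{i=k}^{\infty}\Delta_i-\sum_{i=k+1}^{\infty}\Delta_i=\Delta_k$, which is exactly $\Delta_k=\delta_k-\delta_{k+1}$. \emph{Monotonicity:} the same identity gives $\delta_k-\delta_{k+1}=\Delta_k\ge 0$, hence $\delta_k\ge\delta_{k+1}$, and each $\delta_k\ge 0$, so $(\delta_k)$ is a nonnegative non-increasing sequence. \emph{Vanishing limit:} because the full series $\sum_i\Delta_i$ converges, its tails tend to zero, i.e. $\lim_{k\to\infty}\delta_k=0$; alternatively one argues that $(\delta_k)$ is monotone and bounded below, so it has a limit $\delta_\infty\ge 0$, and summing the telescoping identity over all $k\ge 1$ forces $\delta_\infty=0$.

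I do not expect a genuine obstacle in this lemma: the argument is a one-line "tail of a convergent series" construction, and the non-increasing assumption on $\Delta_k$ is not even needed. The one thing worth flagging in the write-up is that ``positive'' in the statement has to be read as ``nonnegative'': if $\Delta_k=0$ for all large $k$ then necessarily $\delta_k=0$ for all large $k$, so no sequence satisfying $\Delta_k=\delta_k-\delta_{k+1}$ together with $\delta_k\to 0$ can be strictly positive — strict positivity is neither attainable nor required in the convergence argument that follows.
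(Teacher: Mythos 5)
Your proof is correct and is essentially the same as the paper's: the paper defines $\delta_1=\sum_{i=1}^{\infty}\Delta_i$ and $\delta_{k+1}=\delta_k-\Delta_k$ recursively, which unrolls to exactly your tail sum $\delta_k=\sum_{i=k}^{\infty}\Delta_i$, and both arguments then verify the telescoping identity, monotonicity, and vanishing limit in the same way. Your remark that ``positive'' must be read as ``nonnegative'' is apt (the paper's own line ``$\delta_1<0$'' is evidently a typo for $\delta_1\ge 0$).
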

\begin{proof}
We can construct such a sequence by setting
\begin{equation}
\begin{aligned}
    \delta_1 = \sum_{i=1}^{\infty}\Delta_i,
    \delta_{k+1} = \delta_k-\Delta_k, k\ge1
\end{aligned}
\end{equation}
Then $\delta_1<0$ holds by Assumption \ref{assu1} and the convergence of $\delta_k$ naturally holds by
\begin{equation}
\begin{aligned}
\lim_{k\rightarrow\infty}\delta_k =\lim_{k\rightarrow\infty} \delta_1-\sum_{i=1}^{k-1}\Delta_i
= \delta_1-\lim_{k\rightarrow\infty}\sum_{i=1}^{k-1}\Delta_i=0
\end{aligned}
\end{equation}
\end{proof}

Next, we first prove the following two Lemmas about the unconstrained objective.
\begin{lemma}\label{lemma.2}
For the solution sequence generated after each round \{$\theta_k$\}, we have
\begin{equation}
    J_p(\theta_{k+1}, \rho_{k+1}) - \delta_{k+1} \ge J_p(\theta_{k}, \rho_{k}) - \delta_{k} 
\end{equation}
\end{lemma}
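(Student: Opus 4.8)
The plan is to exploit two facts: the optimal value of the penalized objective is monotone in the penalty factor, and Assumption~\ref{assu1} together with Lemma~\ref{lemma.1} controls the suboptimality gap. First I would introduce the shorthand $q(\rho) := \min_{\theta} J_p(\theta,\rho)$ and observe that, since $J_{\rm penalty}(\theta)=\mathbb{E}_{s_{0|t}}\{\sum_{i=0}^{T-1}\varphi_i(\theta)\}\ge0$ (the $\varphi_i$ are sums of squared-max terms) and the penalty factors are strictly increasing along rounds ($\rho_{k+1}=c\rho_k$ with $c>1$, as in Algorithm~\ref{alg.offline}), the map $\rho\mapsto q(\rho)$ is non-decreasing. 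Indeed, $J_p(\theta,\rho)=J_{\rm actor}(\theta)+\rho J_{\rm penalty}(\theta)$ is non-decreasing in $\rho$ for every fixed $\theta$, and taking the infimum over $\theta$ preserves this monotonicity; hence $q(\rho_{k+1})\ge q(\rho_k)$.

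Then I would chain the two bounds. On one side, the policy $\theta_{k+1}$ returned after round $k+1$ satisfies the trivial inequality $J_p(\theta_{k+1},\rho_{k+1})\ge q(\rho_{k+1})\ge q(\rho_k)$. On the other side, Assumption~\ref{assu1} applied at round $k$ gives $J_p(\theta_k,\rho_k)\le q(\rho_k)+\Delta_k$. Subtracting the second from the first yields $J_p(\theta_{k+1},\rho_{k+1})-J_p(\theta_k,\rho_k)\ge -\Delta_k$. Finally I would substitute the identity from Lemma~\ref{lemma.1}, namely $\Delta_k=\delta_k-\delta_{k+1}$, to rewrite this as $J_p(\theta_{k+1},\rho_{k+1})-J_p(\theta_k,\rho_k)\ge \delta_{k+1}-\delta_k$, which rearranges to exactly the claimed inequality $J_p(\theta_{k+1},\rho_{k+1})-\delta_{k+1}\ge J_p(\theta_k,\rho_k)-\delta_k$. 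In words, the shifted sequence $\{J_p(\theta_k,\rho_k)-\delta_k\}$ is non-decreasing in $k$, which is precisely what the later steps of the convergence proof will combine with an upper bound to conclude.

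As for the main obstacle, there is no real analytic difficulty here: the argument is a monotonicity observation plus bookkeeping with the two auxiliary sequences $\Delta_k$ and $\delta_k$. The only point that needs care is justifying $q(\rho_{k+1})\ge q(\rho_k)$ cleanly — i.e., that pointwise monotonicity of $J_p$ in $\rho$ (which rests on $J_{\rm penalty}\ge0$) transfers to the minimized value, and that $\rho_k$ is genuinely non-decreasing across rounds as prescribed by the amplifying procedure. I would also double-check the direction of the inequality so it is consistent with how Lemma~\ref{lemma.2} is invoked afterwards; everything past the monotonicity step is a direct substitution requiring no further estimates.
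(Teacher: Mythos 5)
Your proof is correct and takes essentially the same approach as the paper: both arguments combine the monotonicity of $J_p$ in $\rho$ (which follows from $J_{\rm penalty}\ge 0$ and $\rho_{k+1}>\rho_k$) with the suboptimality bound of Assumption~\ref{assu1}, then substitute $\Delta_k=\delta_k-\delta_{k+1}$ from Lemma~\ref{lemma.1}. The only cosmetic difference is that you route the chain of inequalities through the minimized value $q(\rho)=\min_\theta J_p(\theta,\rho)$, whereas the paper applies the pointwise monotonicity directly at $\theta_{k+1}$ via the intermediate quantity $J_p(\theta_{k+1},\rho_k)$.
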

\begin{proof}
By $J_p(\theta, \rho) = J_{\rm actor}(\theta)+\rho J_{\rm penalty}(\theta)$ and $\rho_{k+1}>\rho_k$,
\begin{equation}
\begin{aligned}
J_p(\theta_{k+1}, \rho_{k+1})
&=J_{\rm actor}(\theta_{k+1})+\rho_{k+1}J_{\rm penalty}(\theta_{k+1})\\
&\ge J_{\rm actor}(\theta_{k+1})+\rho_k J_{\rm penalty}(\theta_{k+1})\\
&=J_p(\theta_{k+1}, \rho_k)
\end{aligned}
\end{equation}
Then by the Assumption \ref{assu1}, for $\forall{\theta}$, $J_p(\theta, \rho_k)\ge \min_{\theta}J_p(\theta, \rho_k)\ge J_p(\theta_k, \rho_k)-\Delta_k$, thus we have $J_p(\theta_{k+1}, \rho_k)\ge J_p(\theta_k, \rho_k)-\Delta_k$, therefore
\begin{equation}
\begin{aligned}
    &J_p(\theta_{k+1}, \rho_{k+1})\ge J_p(\theta_k, \rho_k)-\Delta_k\\
    &J_p(\theta_{k+1}, \rho_{k+1})-\delta_{k+1}\ge J_p(\theta_k, \rho_k)-\delta_k\\
\end{aligned}
\end{equation}
\end{proof}

\begin{lemma}\label{lemma.3}
Suppose $\theta_*=\arg\min_{\theta}J_{\rm actor}(\theta)$, then for $\forall{k}\ge1$,
\begin{equation}
J_{\rm actor}(\theta_*)-\delta_{k+1} \ge J_p(\theta_k, \rho_k)-\delta_k \ge J_{\rm actor}(\theta_k)-\delta_k
\end{equation}
\end{lemma}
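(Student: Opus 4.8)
The plan is to prove the two inequalities in the chain separately; both are short once we combine the definition $J_p(\theta,\rho)=J_{\rm actor}(\theta)+\rho J_{\rm penalty}(\theta)$, Assumption~\ref{assu1}, and the identity $\Delta_k=\delta_k-\delta_{k+1}$ supplied by Lemma~\ref{lemma.1}.

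For the right-hand inequality $J_p(\theta_k,\rho_k)-\delta_k\ge J_{\rm actor}(\theta_k)-\delta_k$, I would simply note that $J_{\rm penalty}(\theta)=\mathbb{E}_{s_{0|t}}\{\sum_{i=0}^{T-1}\varphi_i(\theta)\}\ge0$, because each $\varphi_i(\theta)=\sum_{e\in E}[\max\{0,-g_e(s_{i|t})\}]^2$ is a nonnegative sum of squares, and $\rho_k>0$. Hence $J_p(\theta_k,\rho_k)=J_{\rm actor}(\theta_k)+\rho_k J_{\rm penalty}(\theta_k)\ge J_{\rm actor}(\theta_k)$, and subtracting $\delta_k$ from both sides finishes this half.

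For the left-hand inequality, the key step is to observe that $\theta_*$, being an optimal (hence feasible) solution of the constrained actor problem \eqref{eq.converted_rl_problem_actor}, satisfies $g_e(s_{i|t})\ge0$ along its induced trajectory, so $\varphi_i(\theta_*)=0$ for every $i$ and therefore $J_{\rm penalty}(\theta_*)=0$. Consequently $J_p(\theta_*,\rho_k)=J_{\rm actor}(\theta_*)$ for every $\rho_k$, which gives $\min_{\theta}J_p(\theta,\rho_k)\le J_p(\theta_*,\rho_k)=J_{\rm actor}(\theta_*)$. Invoking Assumption~\ref{assu1}, $J_p(\theta_k,\rho_k)\le\min_{\theta}J_p(\theta,\rho_k)+\Delta_k\le J_{\rm actor}(\theta_*)+\Delta_k$. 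I would then substitute $\Delta_k=\delta_k-\delta_{k+1}$ and rearrange to obtain $J_p(\theta_k,\rho_k)-\delta_k\le J_{\rm actor}(\theta_*)-\delta_{k+1}$, which is exactly the claimed bound.

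I expect the only delicate point to be the vanishing of $J_{\rm penalty}(\theta_*)$: it relies on interpreting $\theta_*=\arg\min_{\theta}J_{\rm actor}(\theta)$ as the minimizer over the feasible set of \eqref{eq.converted_rl_problem_actor} (equivalently, on the standing assumption that the constrained problem admits a feasible optimal policy, the usual premise for the exterior point method), rather than as an unconstrained minimizer of the tracking cost — in the latter case the penalty need not be zero and the inequality could fail. Everything else is routine bookkeeping with the nonnegativity of $J_{\rm penalty}$ and the telescoping construction of $\delta_k$ from Lemma~\ref{lemma.1}.
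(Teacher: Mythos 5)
Your proposal is correct and follows essentially the same route as the paper's own proof: the right-hand inequality from $\rho_k J_{\rm penalty}(\theta_k)\ge 0$, and the left-hand inequality from $J_{\rm penalty}(\theta_*)=0$ combined with Assumption~\ref{assu1} and the identity $\Delta_k=\delta_k-\delta_{k+1}$. Your closing remark is also consistent with the paper's intent, since its proof explicitly treats $\theta_*$ as the optimal solution of the constrained problem \eqref{eq.converted_rl_problem_actor}, which is what justifies $J_{\rm penalty}(\theta_*)=0$.
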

\begin{proof}
Because $\theta_*$ is the optimal solution of the problem \eqref{eq.converted_rl_problem_actor}, it has $J_{\rm penalty}(\theta_*)=0$. Then from the Assumption \ref{assu1} the first inequality is obtained,
\begin{equation}
\begin{aligned}
&J_{\rm actor}(\theta_*) = J_p(\theta_*, \rho_k) \ge \min_{\theta}J_p(\theta, \rho_k) \ge J_p(\theta_k, \rho_k) - \Delta_k\\
&J_{\rm actor}(\theta_*) - \delta_{k+1}\ge J_p(\theta_k, \rho_k)-\delta_k
\end{aligned}
\end{equation}
The second inequality is got by $\rho_k J_{\rm penalty}(\theta_k) \ge 0$
\begin{equation}
J_p(\theta_k, \rho_k) = J_{\rm actor}(\theta_k) + \rho_k J_{\rm penalty}(\theta_k)\ge J_{\rm actor}(\theta_k)
\end{equation}
\end{proof}

The convergence can be revealed by Theorem \ref{theorem.1}.
\begin{theorem}\label{theorem.1}
Assume that $J_{\rm actor}$ and $J_{\rm penalty}$ are continuous functions defined on the parameter space. Suppose \{$\theta_k$\} is the solution sequence generated after each round. The limit of any of its convergent subsequence is the optimal solution.
\end{theorem}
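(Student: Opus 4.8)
The plan is to run the classical exterior-penalty convergence argument, but carried through with the \emph{slackened} optimality of Assumption~\ref{assu1} (i.e.\ with the $\Delta_k$'s) instead of exact per-round minimization. Let $\{\theta_k\}$ be the round-wise iterates, write $J_p(\theta,\rho)=J_{\rm actor}(\theta)+\rho J_{\rm penalty}(\theta)$, and recall that the amplifying step gives $\rho_{k+1}=c\rho_k$ with $c>1$, hence $\rho_k\to\infty$. Fix the reference point $\theta_*=\arg\min_{\theta}J_{\rm actor}(\theta)$; as already used in Lemma~\ref{lemma.3}, this unconstrained minimizer is feasible, so $J_{\rm penalty}(\theta_*)=0$.

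First I would extract from Lemma~\ref{lemma.3} — equivalently, from the chain $J_{\rm actor}(\theta_*)=J_p(\theta_*,\rho_k)\ge\min_{\theta}J_p(\theta,\rho_k)\ge J_p(\theta_k,\rho_k)-\Delta_k$ in its proof, together with $\rho_kJ_{\rm penalty}\ge0$ and the global optimality of $\theta_*$ — the sandwich
\begin{equation*}
J_{\rm actor}(\theta_*)\ \le\ J_{\rm actor}(\theta_k)\ \le\ J_p(\theta_k,\rho_k)\ \le\ J_{\rm actor}(\theta_*)+\Delta_k.
\end{equation*}
Since $\Delta_k\to0$ (the finite-series hypothesis of Assumption~\ref{assu1} forces $\lim_k\Delta_k=0$), the squeeze yields $J_{\rm actor}(\theta_k)\to J_{\rm actor}(\theta_*)$ and $J_p(\theta_k,\rho_k)\to J_{\rm actor}(\theta_*)$. (The same two limits also follow by combining the monotonicity of $\{J_p(\theta_k,\rho_k)-\delta_k\}$ from Lemma~\ref{lemma.2} with the upper bound of Lemma~\ref{lemma.3} and $\delta_k\to0$, which is probably the route intended by the ordering of the lemmas; either works.)

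Next I would bound the constraint violation. From the sandwich,
\begin{equation*}
0\ \le\ \rho_k J_{\rm penalty}(\theta_k)\ =\ J_p(\theta_k,\rho_k)-J_{\rm actor}(\theta_k)\ \le\ \bigl(J_{\rm actor}(\theta_*)+\Delta_k\bigr)-J_{\rm actor}(\theta_k)\ \longrightarrow\ 0,
\end{equation*}
so $\rho_k J_{\rm penalty}(\theta_k)\to0$; dividing by $\rho_k\to\infty$ gives $J_{\rm penalty}(\theta_k)\to0$. Now take any convergent subsequence $\theta_{k_j}\to\bar\theta$. Continuity of $J_{\rm actor}$ and $J_{\rm penalty}$ gives $J_{\rm actor}(\bar\theta)=\lim_j J_{\rm actor}(\theta_{k_j})=J_{\rm actor}(\theta_*)$ and $J_{\rm penalty}(\bar\theta)=\lim_j J_{\rm penalty}(\theta_{k_j})=0$. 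Hence $\bar\theta$ is feasible for \eqref{eq.converted_rl_problem_actor} and attains the unconstrained infimum of $J_{\rm actor}$; since the constrained optimum cannot be smaller than the unconstrained one, $\bar\theta$ is an optimal solution of \eqref{eq.converted_rl_problem_actor}, which is the claim.

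The algebra and the limit passages are routine; the two points that actually need care are (i) recording that the amplifying step makes $\rho_k\to\infty$, which is precisely what upgrades $\rho_k J_{\rm penalty}(\theta_k)\to0$ to $J_{\rm penalty}(\theta_k)\to0$ (so that the limit point is feasible rather than merely near-feasible), and (ii) the fact that the unconstrained minimizer $\theta_*$ of $J_{\rm actor}$ is feasible — this is the hypothesis implicit in Lemma~\ref{lemma.3} ($J_{\rm penalty}(\theta_*)=0$), and it is the genuinely load-bearing assumption here: without it a penalty scheme would in general only reach a stationary/KKT-type point, not a true optimum. Note also that existence of a convergent subsequence is not part of the burden, since the statement is conditional on one being supplied (in applications it would come from boundedness of $\{\theta_k\}$ or compactness of a sublevel set of $J_p$).
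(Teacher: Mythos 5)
Your proof is correct and follows essentially the same exterior-penalty convergence argument as the paper: bound $J_p(\theta_k,\rho_k)$ above by $J_{\rm actor}(\theta_*)+\Delta_k$ via Assumption~\ref{assu1} and the feasibility of the unconstrained minimizer $\theta_*$, deduce $J_{\rm penalty}(\theta_k)\to 0$ so the subsequential limit is feasible, and conclude optimality by continuity. Your direct squeeze is a mild streamlining that bypasses the monotone-sequence machinery of Lemmas~\ref{lemma.1} and~\ref{lemma.2}, and you correctly identify the implicit load-bearing hypothesis (that $\arg\min_\theta J_{\rm actor}$ is feasible, i.e.\ $J_{\rm penalty}(\theta_*)=0$) which the paper uses without comment in Lemma~\ref{lemma.3}.
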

\begin{proof}
Suppose $\{\theta_{k_j}\}$ is an arbitrary convergent subsequence of \{$\theta_k$\} with the limit $\bar{\theta}$. By the continuity of $J_{\rm actor}$, $\lim_{k_j\rightarrow\infty}J_{\rm actor}(\theta_{k_j})=J_{\rm actor}(\bar{\theta})$. Define $J_{\rm actor}^*=\min_{\theta}J_{\rm actor}(\theta)$ as the optimal value of the problem \eqref{eq.converted_rl_problem_actor}. From Lemma \ref{lemma.2} and Lemma \ref{lemma.3}, we can see that \{$J_p(\theta_{k_j}, \rho_{k_j})-\delta_{k_j}$\} is a non-increasing sequence with upper limit $J_{\rm actor}^*$, therefore
\begin{equation}
\lim_{k_j\rightarrow\infty} (J_p(\theta_{k_j}, \rho_{k_j})-\delta_{k_j}) = \lim_{k_j\rightarrow\infty} J_p(\theta_{k_j}, \rho_{k_j})= J_p^*\le J_{\rm actor}^*
\end{equation}
Then because $J_p(\theta_{k_j}, \rho_{k_j})=J_{\rm actor}(\theta_{k_j})+\rho_{k_j}J_{\rm penalty}(\theta_{k_j})$,
\begin{equation}
\lim_{k_j\rightarrow\infty}\rho_{k_j}J_{\rm penalty}(\theta_{k_j}) = J_p^*-J_{\rm actor}(\bar{\theta})
\end{equation}
by $J_{\rm penalty}(\theta_{k_j})\ge 0$, $\rho_{k_j}\rightarrow\infty$ and the continuity of $J_{\rm penalty}$, we have
\begin{equation}
    \lim_{k_j\rightarrow\infty} J_{\rm penalty}(\theta_{k_j}) = J_{\rm penalty}(\bar{\theta}) = 0
\end{equation}
which indicates that the limit $\bar{\theta}$ is a feasible solution. Furthermore, by Lemma \ref{lemma.3}, $J_{\rm actor}(\theta_{k_j})\le J_{\rm actor}^*$, together with the continuity of $J_{\rm actor}$, we have
\begin{equation}\label{eq.gep_optimality}
J_{\rm actor}(\bar{\theta})\le J_{\rm actor}^* \Rightarrow J_{\rm actor}(\bar{\theta})= J_{\rm actor}^*
\end{equation}
where the equation holds by the definition of $J_{\rm actor}^*$. Equation \eqref{eq.gep_optimality} indicates the optimality of the limit $\bar{\theta}$.
\end{proof}

\subsection{Online application}
Ideally, we expect to get an optimal policy which is able to output the optimal action within the safety action space in any given point in the state space. Unfortunately, it is impossible to acquire such a policy in both theory and practice. First, the converted RL problem \eqref{eq.converted_rl_problem_actor} enforces constraint on every single state point, leading to an infinite number of constraints in the continuous state space. But nevertheless when we solve the equivalent unconstrained problem \eqref{eq.unconstrained_rl}, we approximate the expectation by an average of samples, that means we only consider finite constraints of the set of sample that can vary in different iterations. That is why there is no strict safety guarantee of the policy but only an approximately safe performance. Second, the condition of \eqref{eq.equivalence} that the approximation function has infinite fitting power cannot be established in practical, resulting in a suboptimal solution without safety guarantee. To ensure the safety performance, we adopt a multi-step safety shield after the output of the policy. 

\subsubsection{Multi-step safety shield}
The safety shield aims to find the nearest actions in the safe action space in state $s_t$, which is formulated as a quadratic programming problem:
\begin{equation}\label{eq:QP}
  u^{\text{safe}}_t = \left\{
\begin{aligned}
    &u^*_t, \quad \text{if}\quad u^*_t \in \mathcal{U}_{\text{safe}}(s_t) \\
    &\arg\min\limits_{u \in\mathcal{U}_{\text{safe}}(s_t)} \Vert u-u_t^*\Vert_2^2 , \quad \text{else}\\
\end{aligned}\right.
\end{equation}
where $u^*_t$ is the policy output. Rather than designing $\mathcal{U}_{\text{safe}}(s_t)$ to guarantee the safety of only the next state, we design it to guarantee that the next $n_{ss}$ prediction states are safe, i.e., collision-free with the surrounding vehicles and road edges. Formally,
\begin{equation}
\mathcal{U}_{\text{safe}}(s_t) = \{u_t|g_e(s_{i|t})\ge 0, i=1,\dots,n_{ss},e\in E\}
\end{equation}

\subsubsection{Algorithm for online application}
Given the trained policy and value functions, in online application, we simply construct a set of states for different paths, then pass them to the trained value function to select the one with the lowest value, which is next passed to the trained policy to get the optimal control, as summarized in Algorithm \ref{alg.online}.

\begin{algorithm}[htbp]
  \caption{Dynamic optimal tracking - Online application}
  \label{alg.online}
\begin{algorithmic}
  \STATE {\bfseries Initialize:} Path set $\Pi$ from upper layer, trained critic network $V_{w^*}$ and actor network $\pi_{\theta^*}$, $\lambda$, ego state $x_t$ and vehicle states $x^j_t, j\in I$
  \FOR{each environment step}
      \STATE // Selecting
      \FOR{each $\tau \in \Pi$}
      \STATE $s_{t, \tau}\leftarrow\{\tau, x_t, x^j_t, j\in I\}$
      \STATE $V^*_{\tau} = V_{w^*}(s_{t, \tau})$
      \ENDFOR
      \STATE $\tau^*=\arg\min\limits_{\tau}\{V^*_{\tau}|\tau\in\Pi\}$
      \STATE 
      \STATE // Tracking
      \STATE $s_t\leftarrow\{\tau^*, x_t, x^j_t, j\in I\}$
      \STATE $u^*_t = \pi_{\theta^*}(s_t)$
      \STATE Calculate $u^*_{\rm safe}$ by \eqref{eq:QP}
      \STATE Apply $u^*_{\rm safe}$ to observe $x_{t+1}$ and $x^j_{t+1}, j\in I$
  \ENDFOR
\end{algorithmic}
\end{algorithm}

\section{Simulation verification}\label{sec.simulation}
\subsection{Scenario and task descriptions}
We first carried out our experiments on a regular signalized four-way intersection built in the simulation, where the roads in different directions are all the six-lane dual carriageway, as shown in Fig. \ref{fig.scenario}. The junction is a square with a side length of 50m. Each entrance of the intersection has three lanes, each with a width of 3.75m, for turning left, going straight and turning right, respectively. With the help of SUMO \cite{SUMO2018}, we generate a dense traffic flow of 800 vehicles per hour on each lane. These vehicles are controlled by the car-following and lane-changing models in the SUMO, producing a variety of traffic behaviors. Moreover, a two-phase traffic signal is included to control the traffic flow of turning left and going straight. We verify our algorithm in three tasks: turn left, go straight and turn right. In each task, the ego vehicle is initialized randomly from the south entrance and is expected to drive safely and efficiently to pass the intersection.

\begin{figure}[htbp]
\centerline{\includegraphics[width=0.8\linewidth]{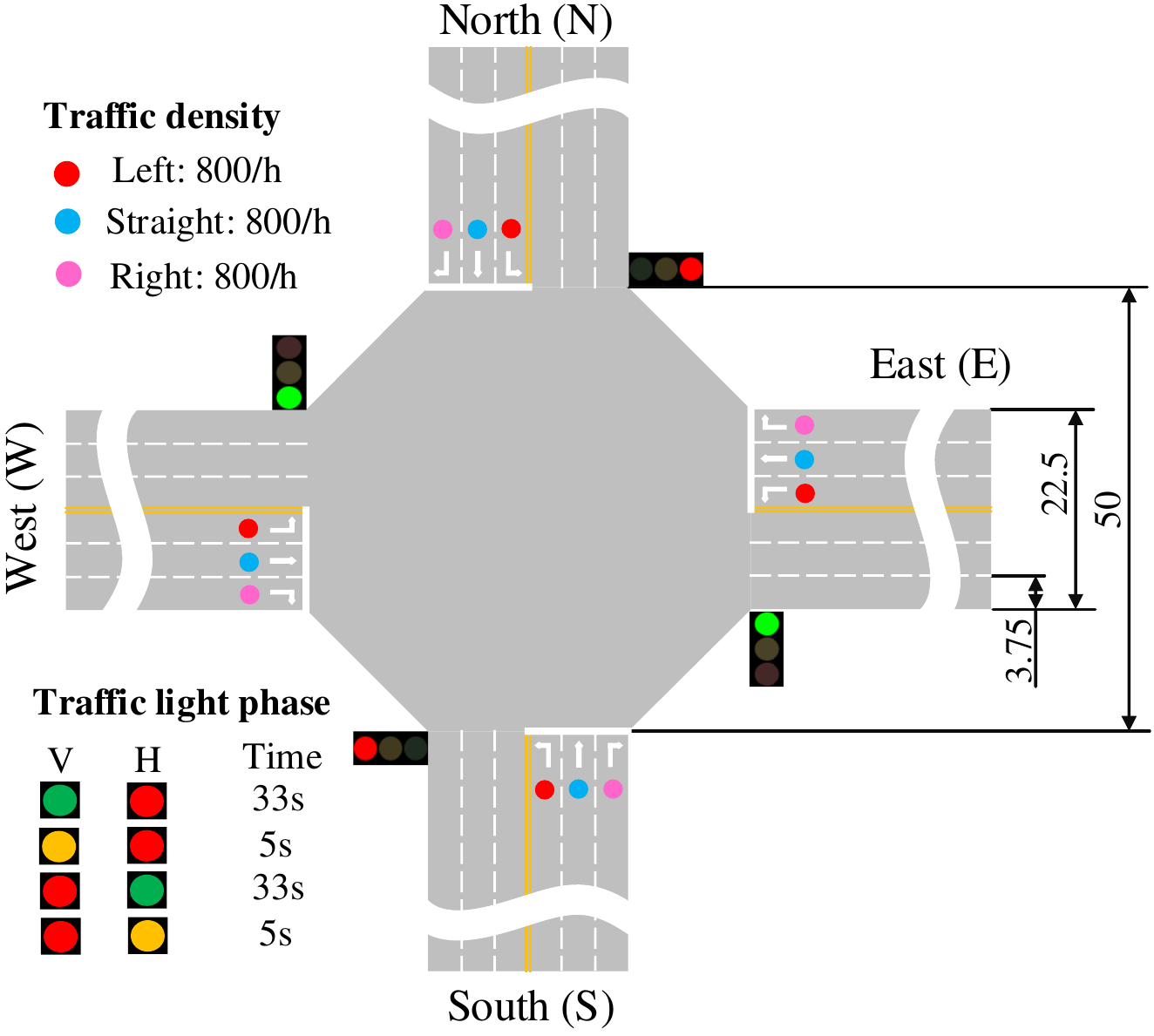}}
\caption{The scenario used for experiment verification.}
\label{fig.scenario}
\end{figure}

\subsection{Implementation of our algorithm}
\subsubsection{Path planning}
In this paper, we adopt the static path planning method to generate multiple candidate paths. Specially, in our scenario, each task is assigned three paths according to the lane number of exits. The paths are generated by the cubic Bezier curve featured by four key points. The expected velocity is chosen as a fixed value for simplicity.

\subsubsection{State and utility function}\label{sec.state_action_utility}
As mentioned in section \ref{sec.multitask}, the state should be designed to include information of the ego vehicle, the surrounding vehicles and the reference path, i.e.,
\begin{equation}\label{eq.state_design}
s_t = [s^{\rm ego}_t, s^{\rm other}_t, s^{\rm ref}_t]
\end{equation}
where $s^{\rm ego}_t=x_t$ is the ego dynamics defined in section \ref{sec.problem_formulation}, $s^{\rm other}_t$ is the concatenation of the interested surrounding vehicles. Take the turn left task as an example, it is defined as:
\begin{equation}\label{eq.other_vector}
\begin{aligned}
s_t^{\rm other} &= [p^j_x, p^j_y, \phi^j, v^j_{\rm lon}]_{t, j\in I_{\rm left}}\\
I_{\rm left} &= [{\rm SW1},{\rm SW2},{\rm SN1},{\rm SN2},{\rm NS1},{\rm NS2},{\rm NW1},{\rm NW2}]
\end{aligned}
\end{equation}
where $I_{\rm left}$ is an ordered list of vehicles that have potential conflicts with the ego. They are encoded by their respect route start and end, as well as the order on that. Correspondingly, one can define the go straight and turn right task in a similar way. The information of the reference $s^{\rm ref}_t$, however, is designed in an implicit way by the tracking errors with respect to the position, the heading angle, and the velocity:
\begin{equation}\label{eq.ref_vector}
s^{\rm ref}_t = [\delta_p, \delta_{\phi}, \delta_{v}]_t
\end{equation}
where $\delta_p$ is the position error, $|\delta_p|=\sqrt{(p_x-p^{\rm ref}_x)^2+(p_y-p^{\rm ref}_y)^2}$, ${\rm sign}(\delta_p)$ is positive if the ego is on the left side of the reference path, or else is negative. $\delta_{\phi} = \phi-\phi^{\rm ref}$ is the error of heading angle, and $\delta_v=v_{\rm lon}-v^{\rm ref}_{\rm lon}$ is the velocity error.
The overall state design is illustrated in Fig. \ref{fig.statedesign}.
\begin{figure}[htbp]
\centerline{\includegraphics[width=0.7\linewidth]{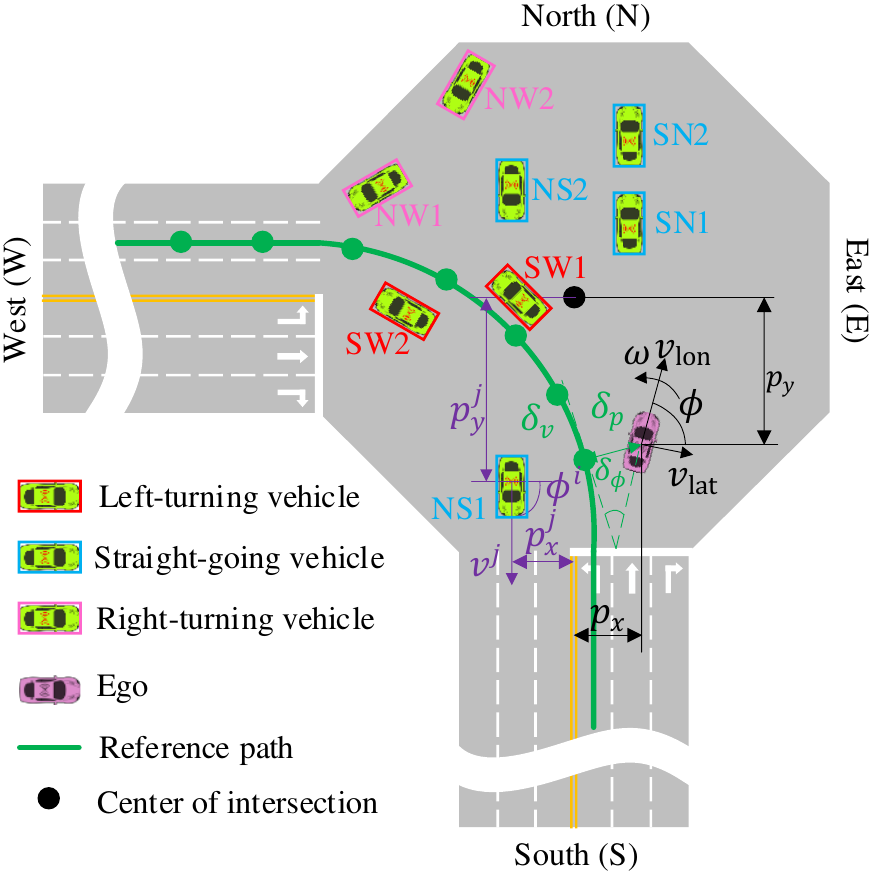}}
\caption{State design in the scenario.}
\label{fig.statedesign}
\end{figure}
The weighting matrices in the utility function are designed as $Q=diag(0.04,0.04,0.01,0.01,0.1,0.02)$, $R=diag(0.1, 0.005)$. The predictive horizon $T$ is set to be 25, which is 2.5s in practical. 

\subsubsection{Constraint construction}
Slightly different from the one in \eqref{eq.lower_layer_ocp}, we further refine the constraint in a way that represents the ego vehicle and each of the surrounding vehicles by two circles as illustrated by Fig. \ref{fig.constraint}, where $r_{\rm veh}$ and $r_{\rm ego}$ are radii of circles of a vehicle and the ego. Then, in each time step, we impose four constraints for each vehicle between each center of the ego circle and that of the other vehicle rather than between only their CGs. Similar are the constraints between the ego and the road edge. Parameters for constraints: $M=diag(1,1,0,0,0,0)$, $D^{\rm safe}_{\rm veh}=r_{\rm veh}+r_{\rm ego}$, $D^{\rm safe}_{\rm road}=r_{\rm ego}$, where $r_{\rm ego}=r_{\rm veh}=2.5$m. For the traffic light constraint, we convert it to constraints between ego and vehicles by placing two virtual vehicles on the stop line, as shown in Fig. \ref{fig.constraint}. 

\subsubsection{Vehicle dynamics and prediction model}
$F_{\rm ego}$ has been shown in \eqref{eq.variable_definition2}, where all the vehicles parameters are displayed in Table \ref{tab.vehicle_parameters}.
Moreover, according to the type and position of the vehicle $j, j\in I$, the turning rate $\omega^j_{\rm pred}$ in the prediction model is determined, as shown in Table \ref{tab.pred_parameters}. 
\begin{figure}[htbp]
\centerline{\includegraphics[width=0.6\linewidth]{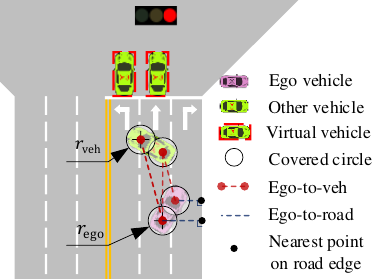}}
\caption{Design of the state constraints.}
\label{fig.constraint}
\end{figure}

\begin{table}
\centering
\caption{Parameters for $F_{\rm ego}$}
\label{tab.vehicle_parameters}
\begin{tabular}{clr}
\toprule
Parameter & Meaning & Value \\
\midrule
$k_f$ & Front wheel cornering stiffness & -155495 [N/rad]  \\
$k_r$ & Rear wheel cornering stiffness & -155495 [N/rad] \\
$L_f$ & Distance from CG to front axle & 1.19 [m] \\
$L_r$ & Distance from CG to rear axle & 1.46 [m] \\
$m$ & Mass & 1520 [kg] \\
$I_z$ & Polar moment of inertia at CG & 2640 [kg$\cdot\mathrm{m}^2$] \\
$\Delta t$& System frequency & 0.1 [s]\\ 
\bottomrule
\end{tabular}
\end{table}

\begin{table}
\centering
\caption{Parameters for $F_{\rm pred}$}
\label{tab.pred_parameters}
\begin{tabular}{cc|c|c}
\hline
\multirow{2}*{$\omega^j_{\rm pred}$} &  & \multicolumn{2}{c}{Position reletive to the intersection}\\
\cline{3-4}                          & & Out of & Within \\
\hline
\multirow{3}*{Vehicle type}  & Left-turning & 0 & $v^j_{\rm lon}/26.875$\\
\cline{3-4}                 & Straight-going & 0 & 0  \\
\cline{3-4}                 & Right-turning & 0 & $-v^j_{\rm lon}/15.625$ \\
\hline
\end{tabular}
\end{table}

\subsubsection{Training settings}
We implement the offline training Algorithm \ref{alg.offline} in an asynchronous learning architecture proposed in \cite{guan2021mixed}. For value function and policy, we use a multi-layer perceptron (MLP) with 2 hidden layers, consisting of 256 units per layer, with Exponential Linear Units (ELU) each layer \cite{clevert2015fast}. The Adam method \cite{kingma2014adam} with a polynomial decay learning rate is used to update all the parameters. Specific hyperparameter settings are listed in Table \ref{table.hyper}. We train 5 different runs with different random seeds on a single computer with a 2.4 GHz 50 core Inter Xeon CPU, with evaluations every 100 iterations.
\begin{table}[!htp] 
\centering
\caption{Detailed hyperparameters.}
\label{table.hyper}
\begin{threeparttable}[h]
\begin{tabular}{lc}
\toprule
Hyperparameters & Value \\
\hline
\quad Optimizer &  Adam ($\beta_{1}=0.9, \beta_{2}=0.999$)\\
\quad Approximation function  & MLP \\
\quad Number of hidden layers & 2\\
\quad Number of hidden units & 256\\
\quad Nonlinearity of hidden layer& ELU\\
\quad Replay buffer size & 5e5\\
\quad Batch size & 1024\\
\quad Policy learning rate & Linear decay 3e-4 $\rightarrow$ 1e-5 \\
\quad Value learning rate & Linear decay 8e-4 $\rightarrow$ 1e-5\\
\quad Penalty amplifier $c$ & 1.1\\
\quad Total iteration & 200000 \\
\quad Update interval $m$ & 10000 \\
\quad Safety shield $n_{ss}$ & 5 \\
\quad Number of Actors &4\\
\quad Number of Buffers &4\\
\quad Number of Learners &30\\
\bottomrule
\end{tabular}
\end{threeparttable}
\end{table}

\subsection{Simulation results}
Followed the Algorithm \ref{alg.multipath}, the planned paths are shown in Fig. \ref{fig.multipath}. We also demonstrate the tracking and safety performances of different tasks during the training process in Fig.~\ref{fig.simu_results}, indicated by $J_{\rm actor}$ and $J_{\rm penalty}$ respectively, and the value loss $J_{\rm critic}$ to exhibit the performance of the value function. 
\begin{figure}[htbp]
\centering
\captionsetup[subfigure]{justification=centering}
\subfloat[]{\label{fig.multipath}\includegraphics[width=0.24\textwidth]{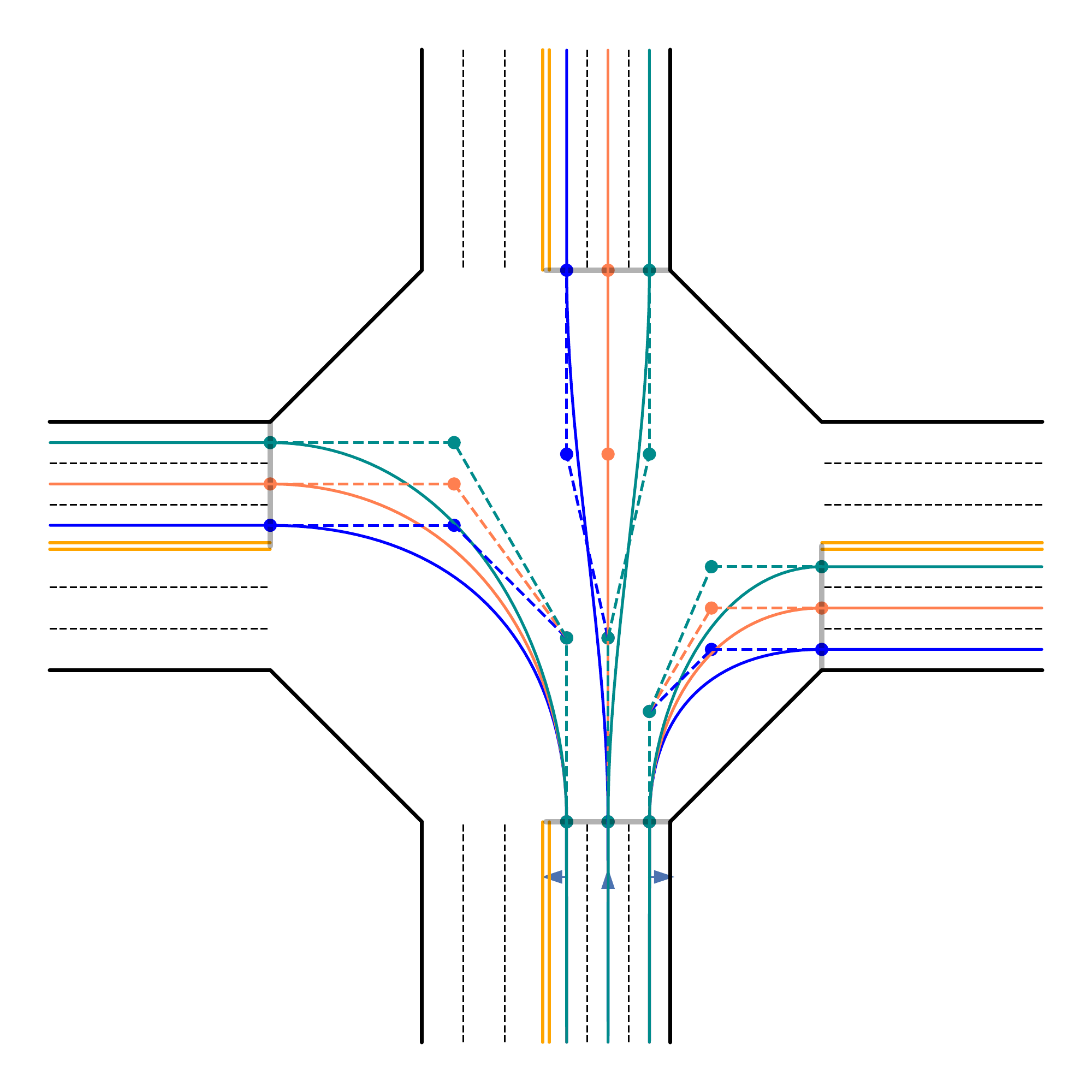}}
\subfloat[]{\label{fig.offline_tracking}\includegraphics[width=0.24\textwidth]{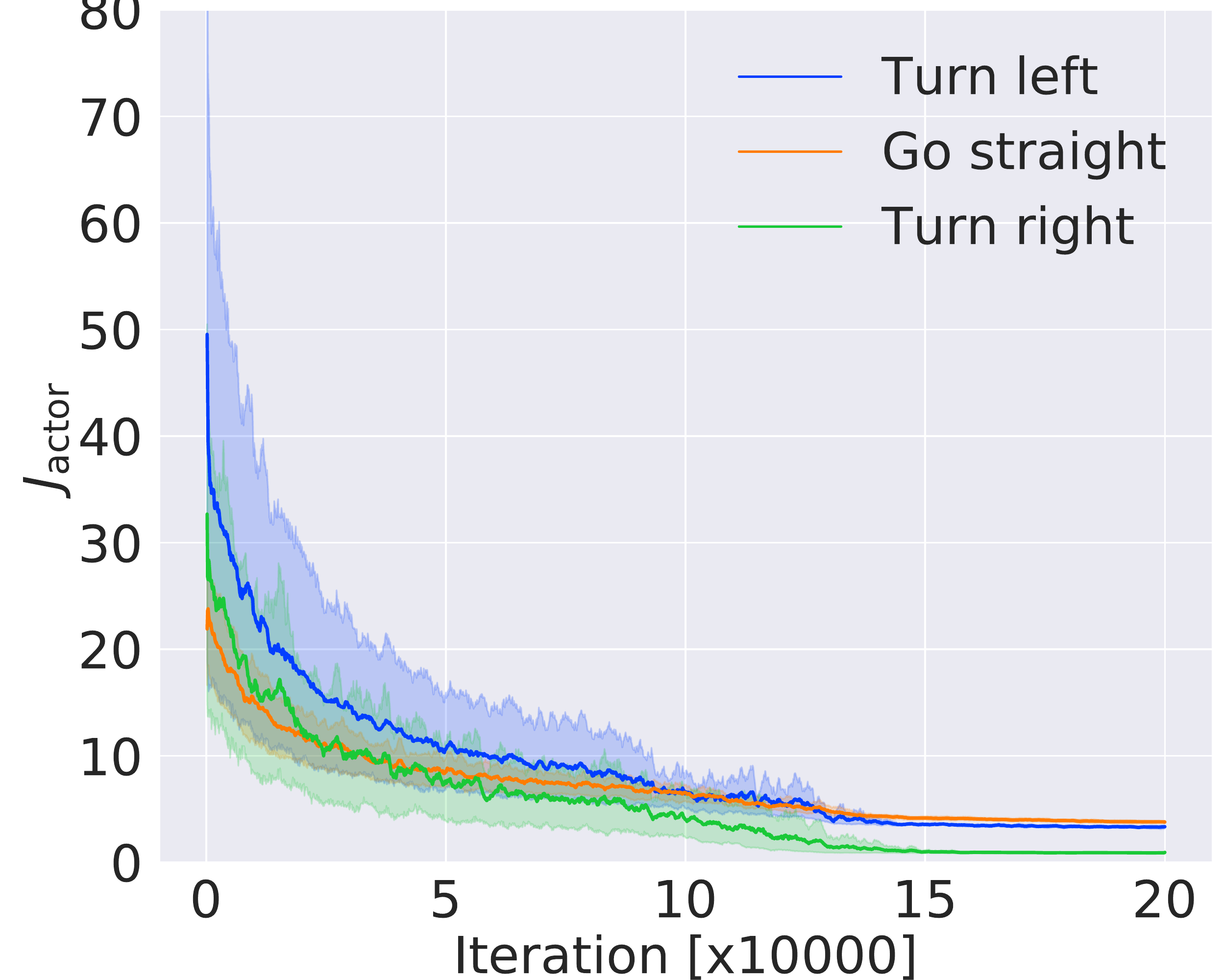}}\\
\subfloat[]{\label{fig.offline_safety}\includegraphics[width=0.24\textwidth]{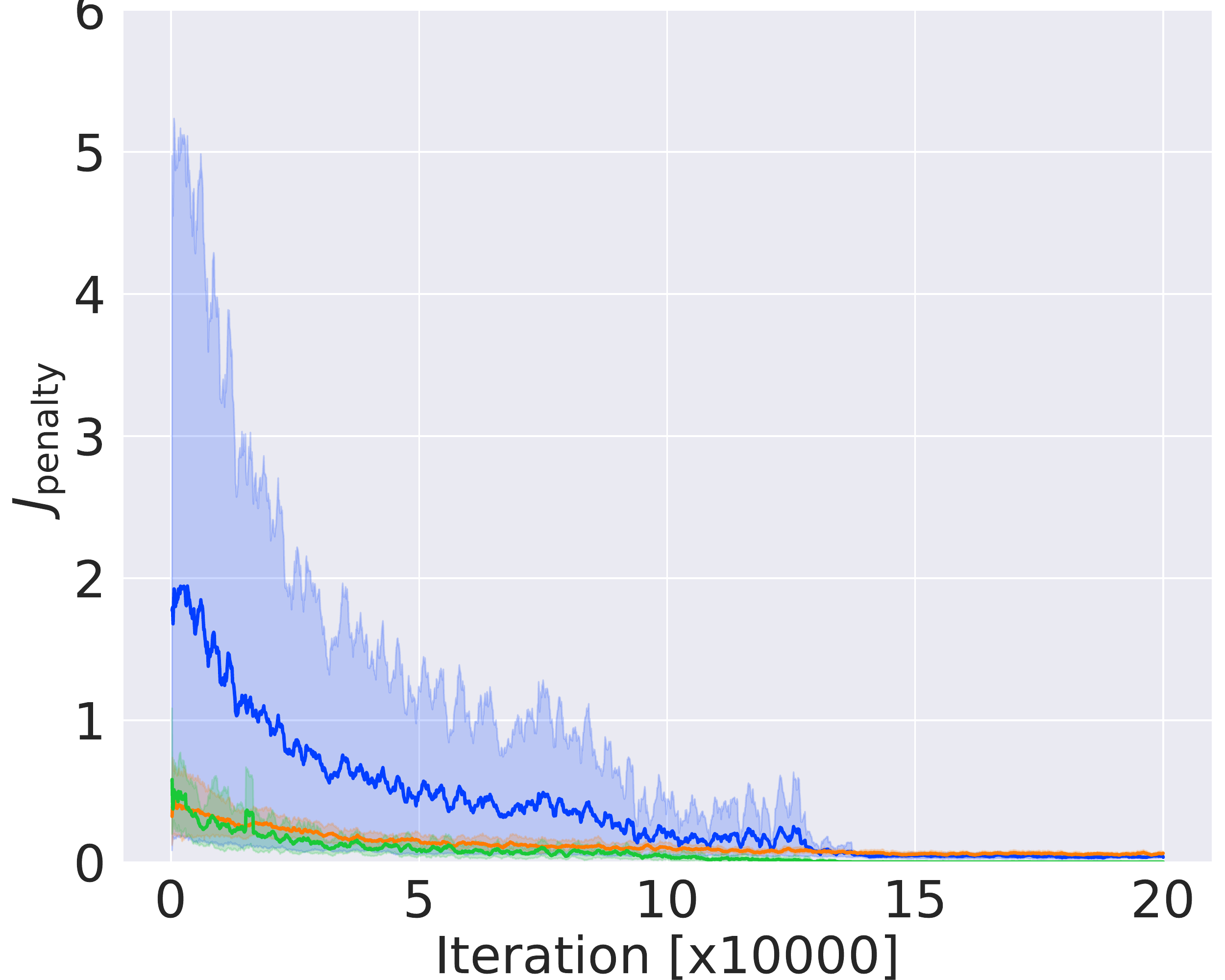}}
\subfloat[]{\label{fig.offline_lossv}\includegraphics[width=0.24\textwidth]{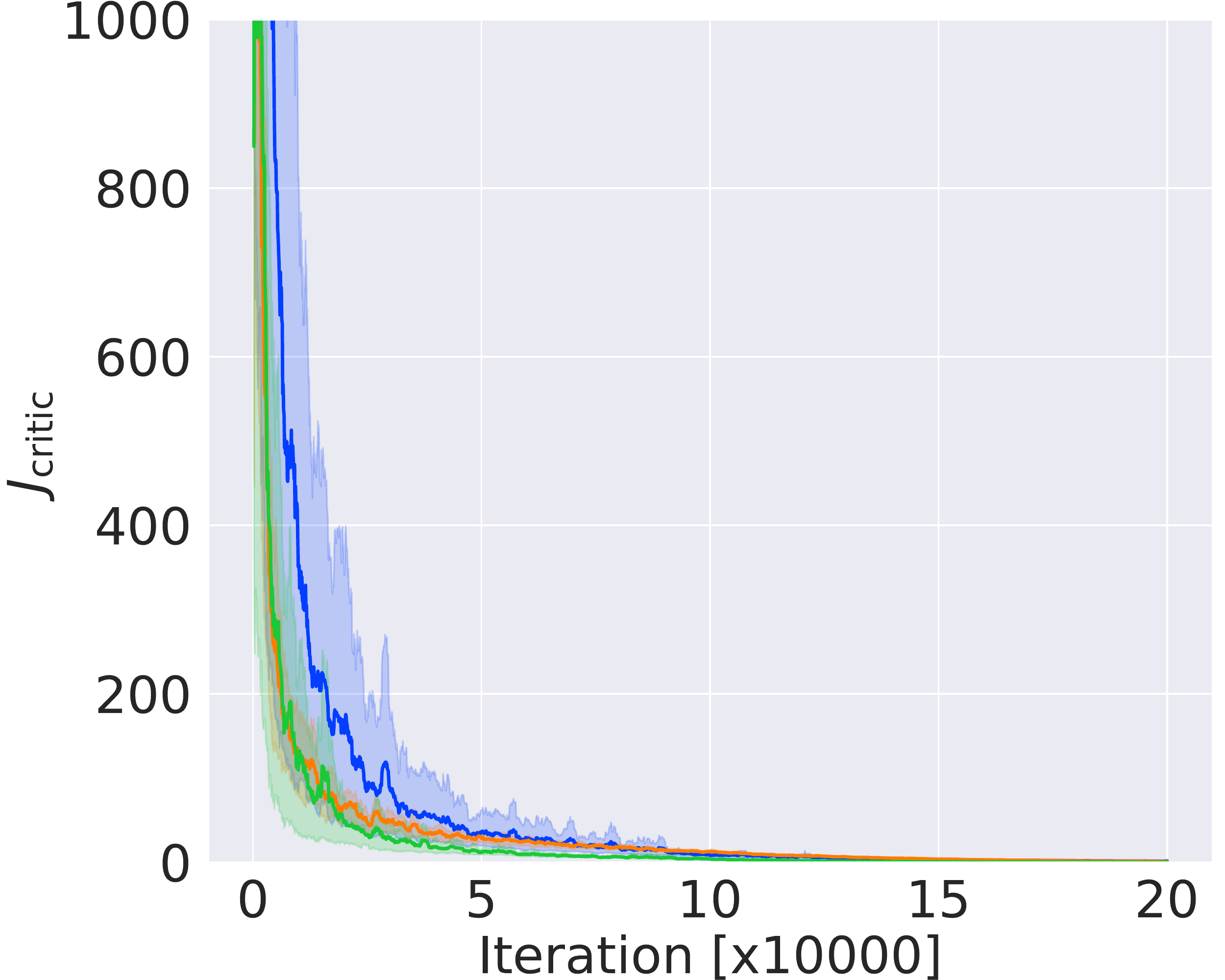}}
\caption{Results of static path planning and dynamic optimal tracking. (a) Planned paths for each task. (b) Tracking performance during training process. (c) Safety performance during training process. (d) Value loss during training process. For (b)-(d), The solid lines correspond to the mean and the shaded regions correspond to 95\% confidence interval over 5 runs.}
\label{fig.simu_results}
\end{figure}
Along the training process, the policy loss, the penalty and the value loss decrease consistently for all the tasks, indicating an improving tracking and safety performance. Specially, the penalty and value loss decrease to zero approximately, proving the effectiveness of the proposed RL-based solver for constrained OCPs. In addition, the convergence speed, variance across different seeds, and the final performance vary with tasks. That is because the surrounding vehicles that have potential conflicts with the ego are different across tasks, leading to differences in task difficulty.

In addition, we apply the trained policy of the left-turning task in the environment to visualize one typical case where a dense traffic and different phase of traffic light are included. Note that different from the training, we carry out all the simulation tests on a 2.90GHz Intel Core i9-8950HK CPU. As shown in Fig. \ref{fig.simulation_single_demo}, when the traffic light is red, the ego pulls up to avoid collision and to obey the rule. Then when the light turns green, the ego starts itself off to enter the junction, where it meets several straight-going vehicles from the opposite direction. Therefore, the ego chooses the upper path and slows down to try to bypass the first one. Notice that the safety shield works here to avoid potential collisions. After that, it speeds up to follow the middle path, the one with the lowest value in that case, with which it can go first and meanwhile avoid the right-turning vehicles so that the velocity tracking error can be largely reduced. The computing time in each step is under 10ms, making our method considerably fast to be applied in the real world.

\begin{figure*}[htbp]
\centering
\captionsetup[subfigure]{justification=centering}
\subfloat[t=0.3s]{\includegraphics[width=0.2\linewidth]{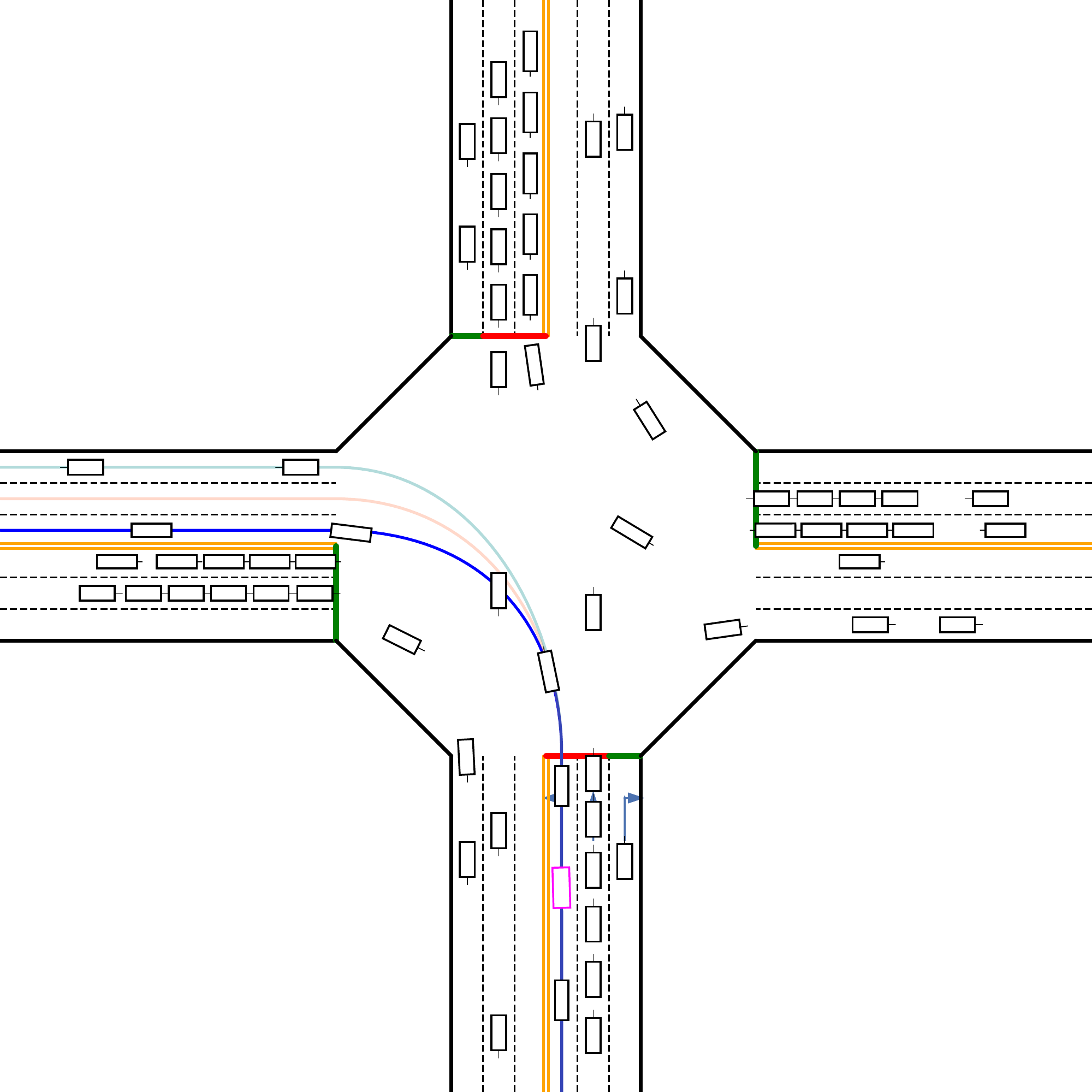}}
\subfloat[t=6.5s]{\includegraphics[width=0.2\linewidth]{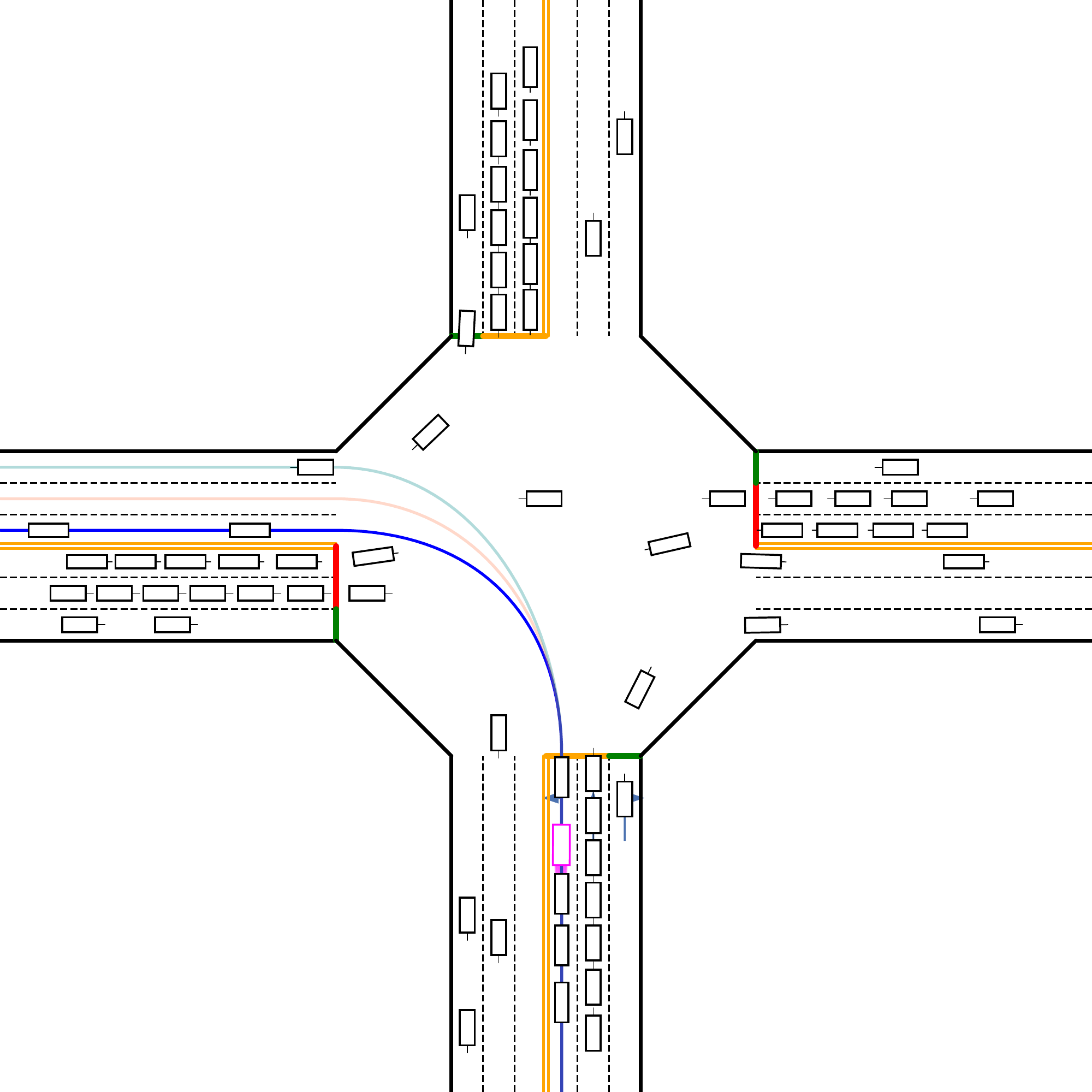}}
\subfloat[t=12.8s]{\includegraphics[width=0.2\linewidth]{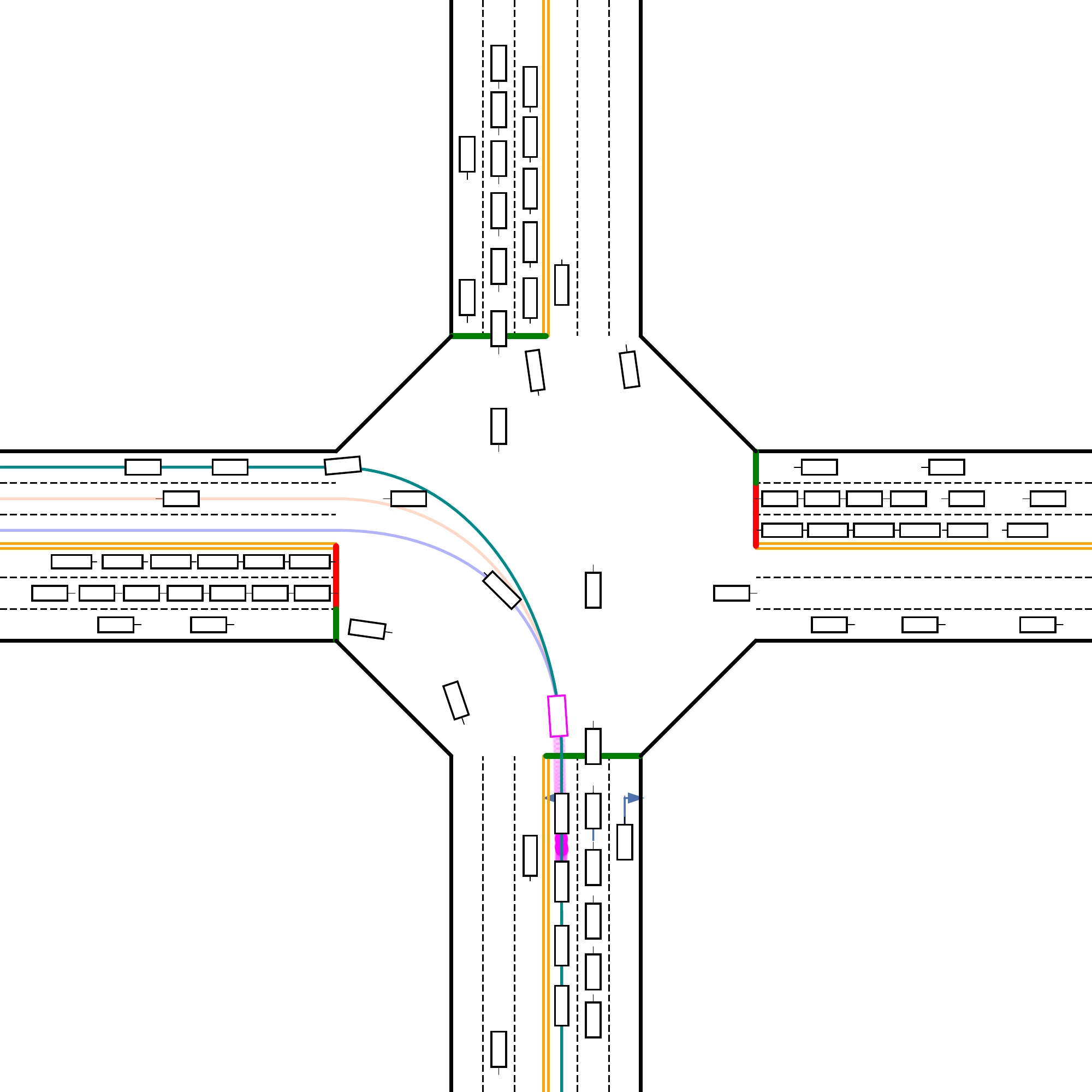}}
\subfloat[t=19.1s]{\includegraphics[width=0.2\linewidth]{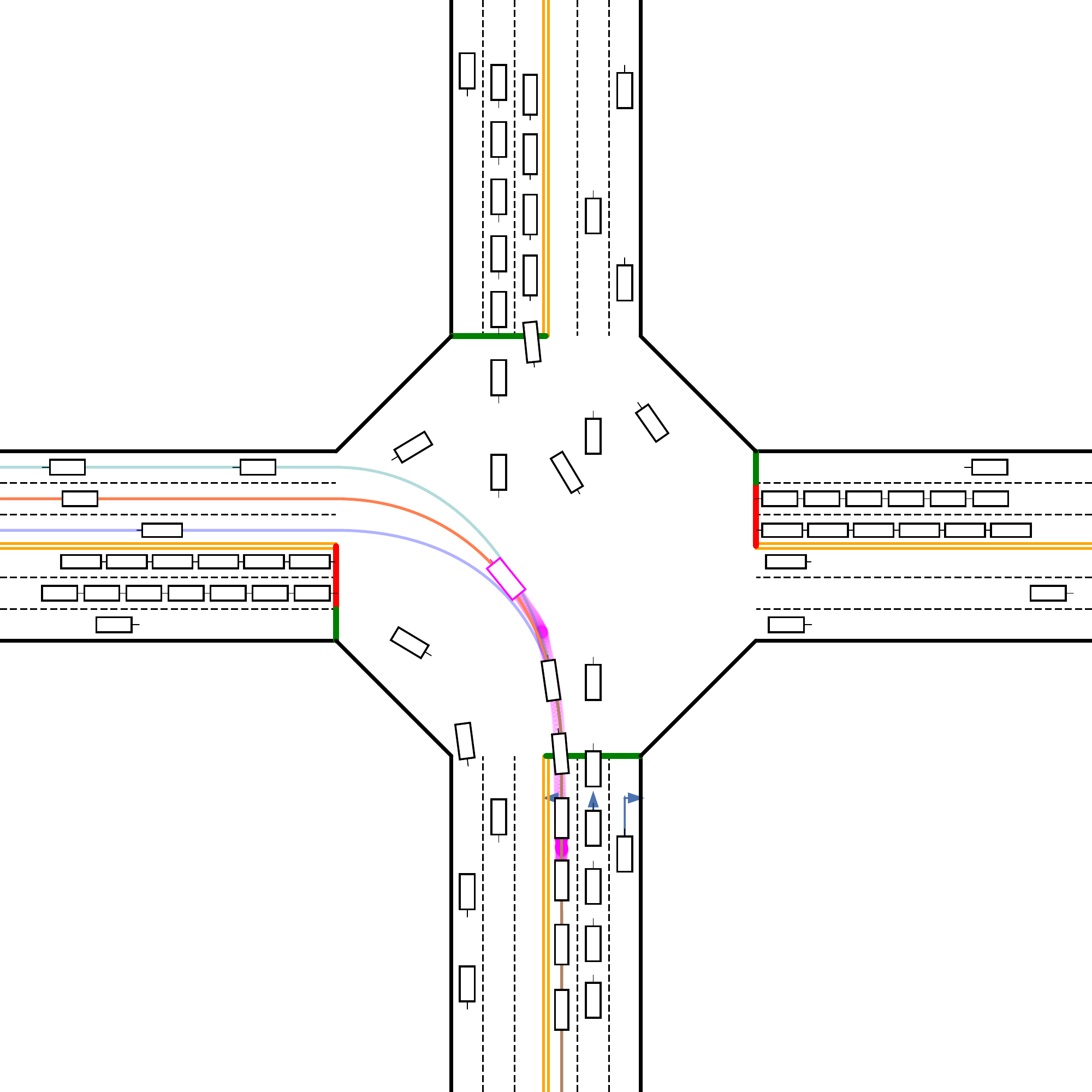}}
\subfloat[t=23.3s]{\includegraphics[width=0.2\linewidth]{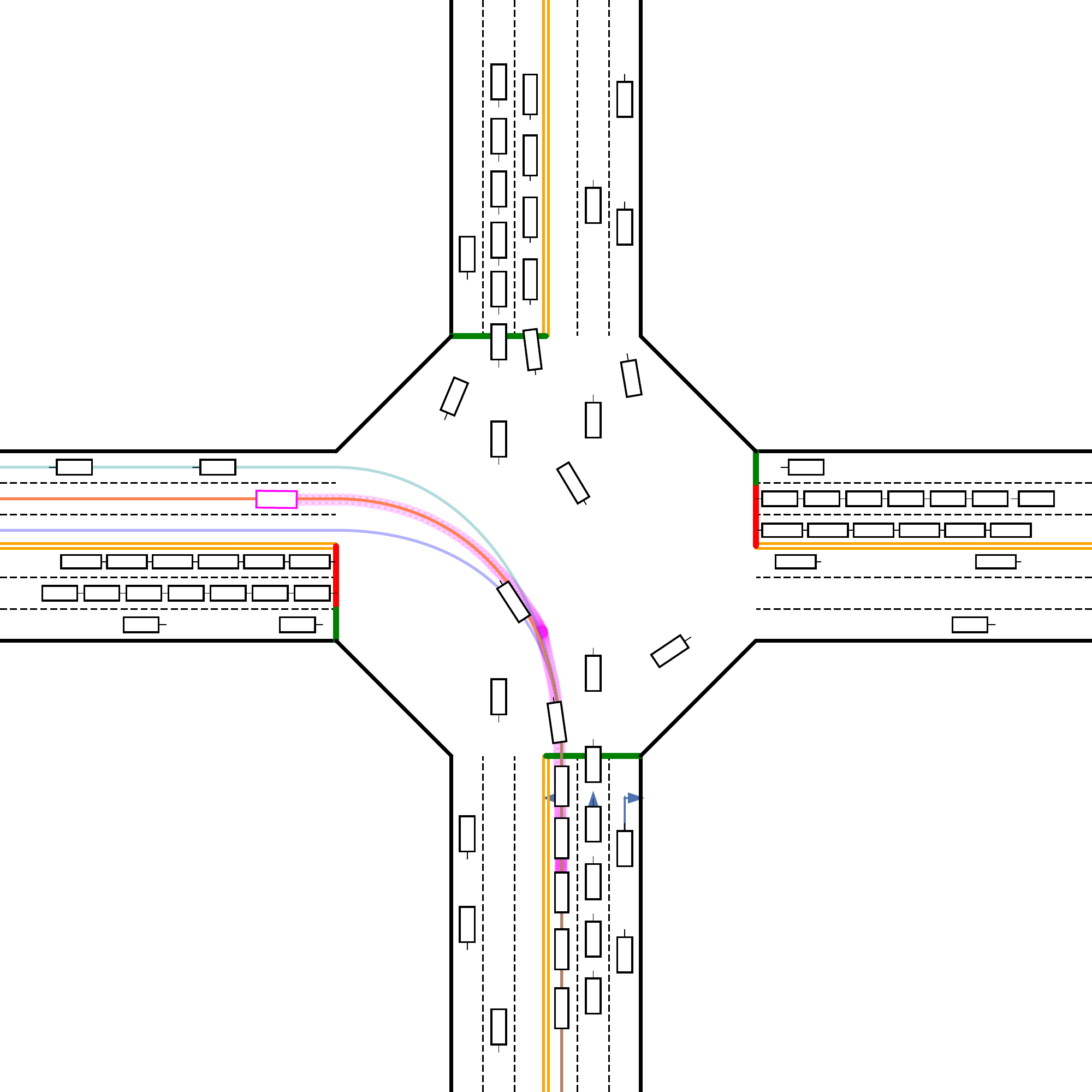}}\\
\subfloat[Speed]{\includegraphics[width=0.2\linewidth]{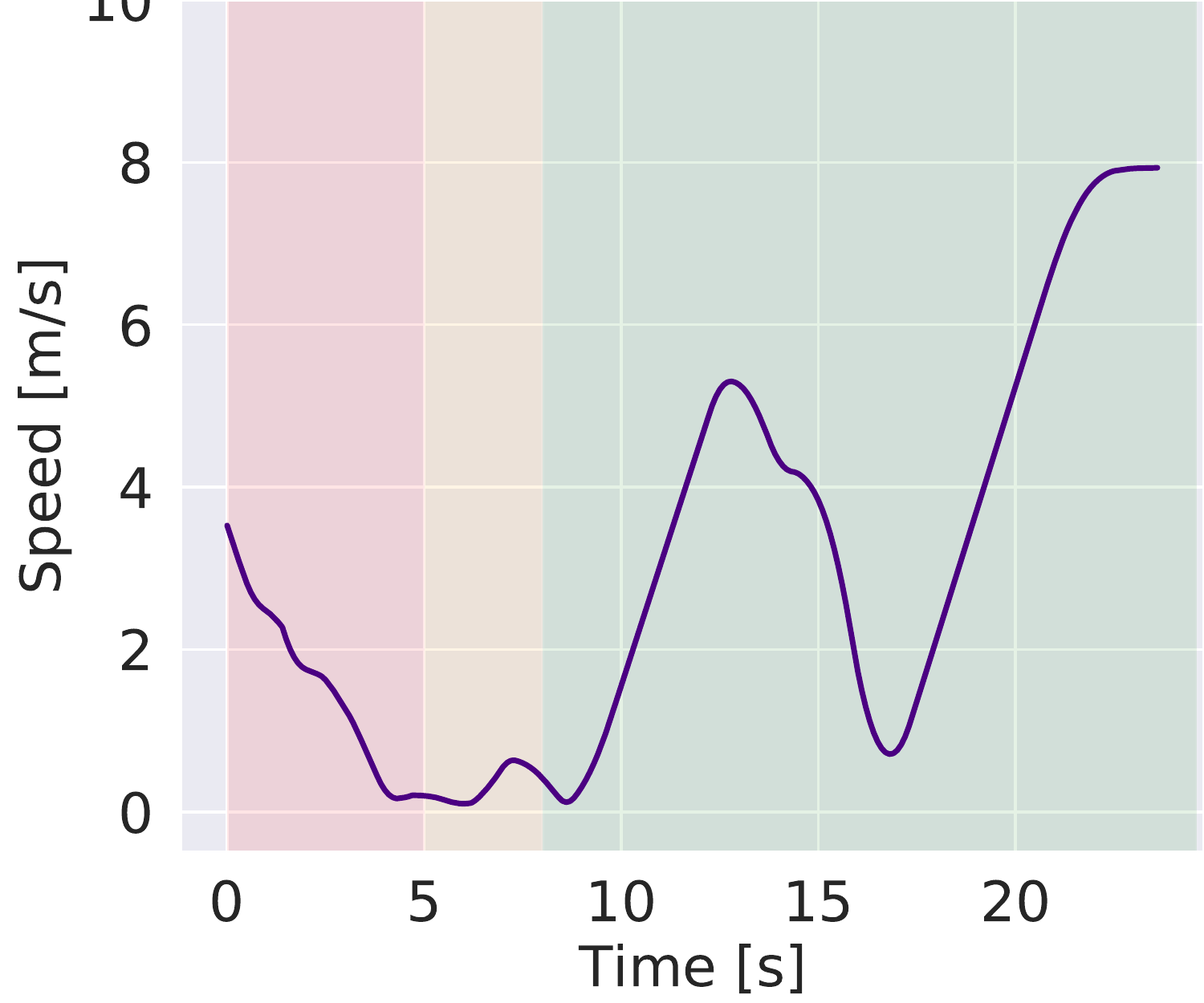}}
\subfloat[Ref index]{\includegraphics[width=0.2\linewidth]{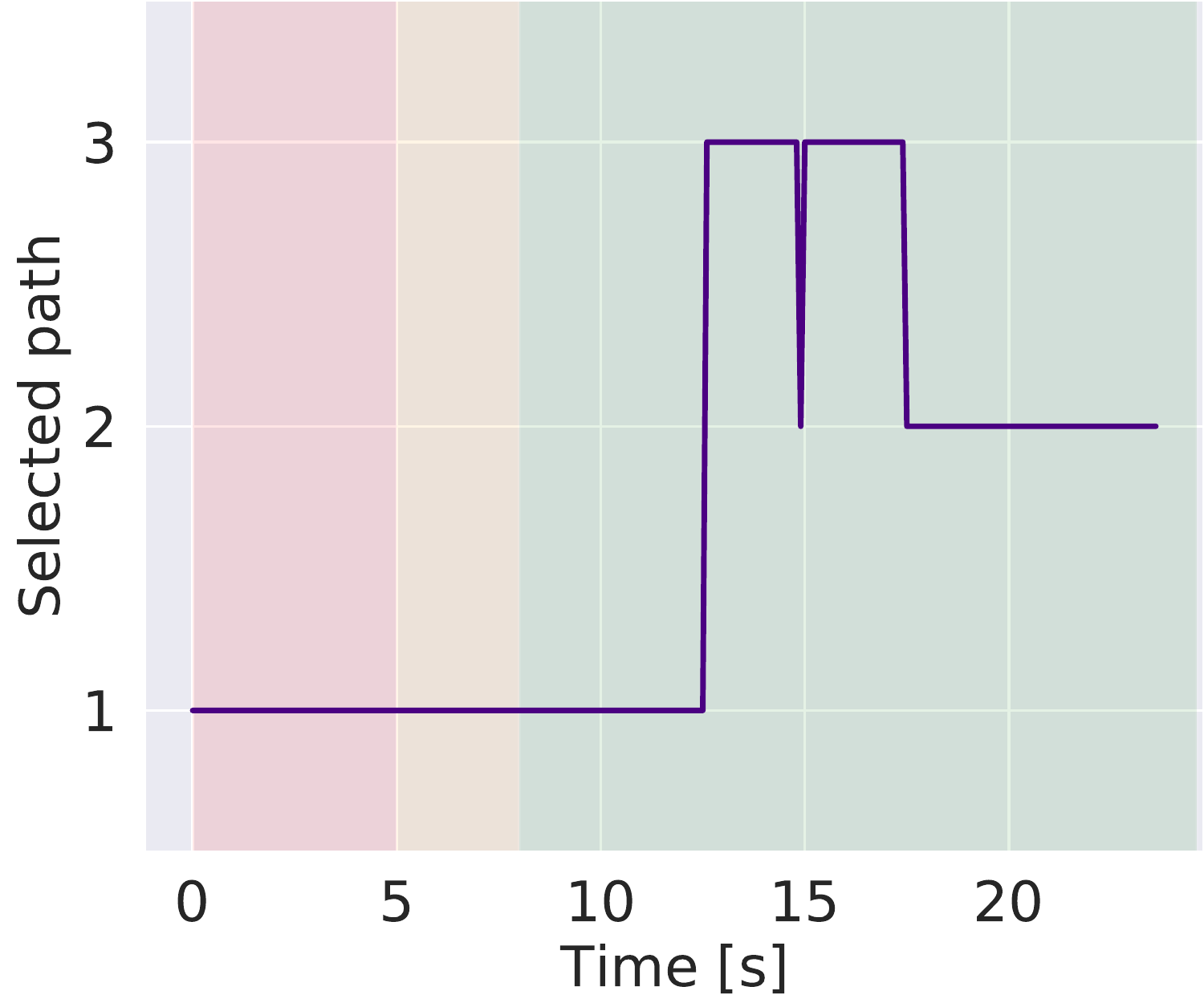}}
\subfloat[Safety shield]{\includegraphics[width=0.2\linewidth]{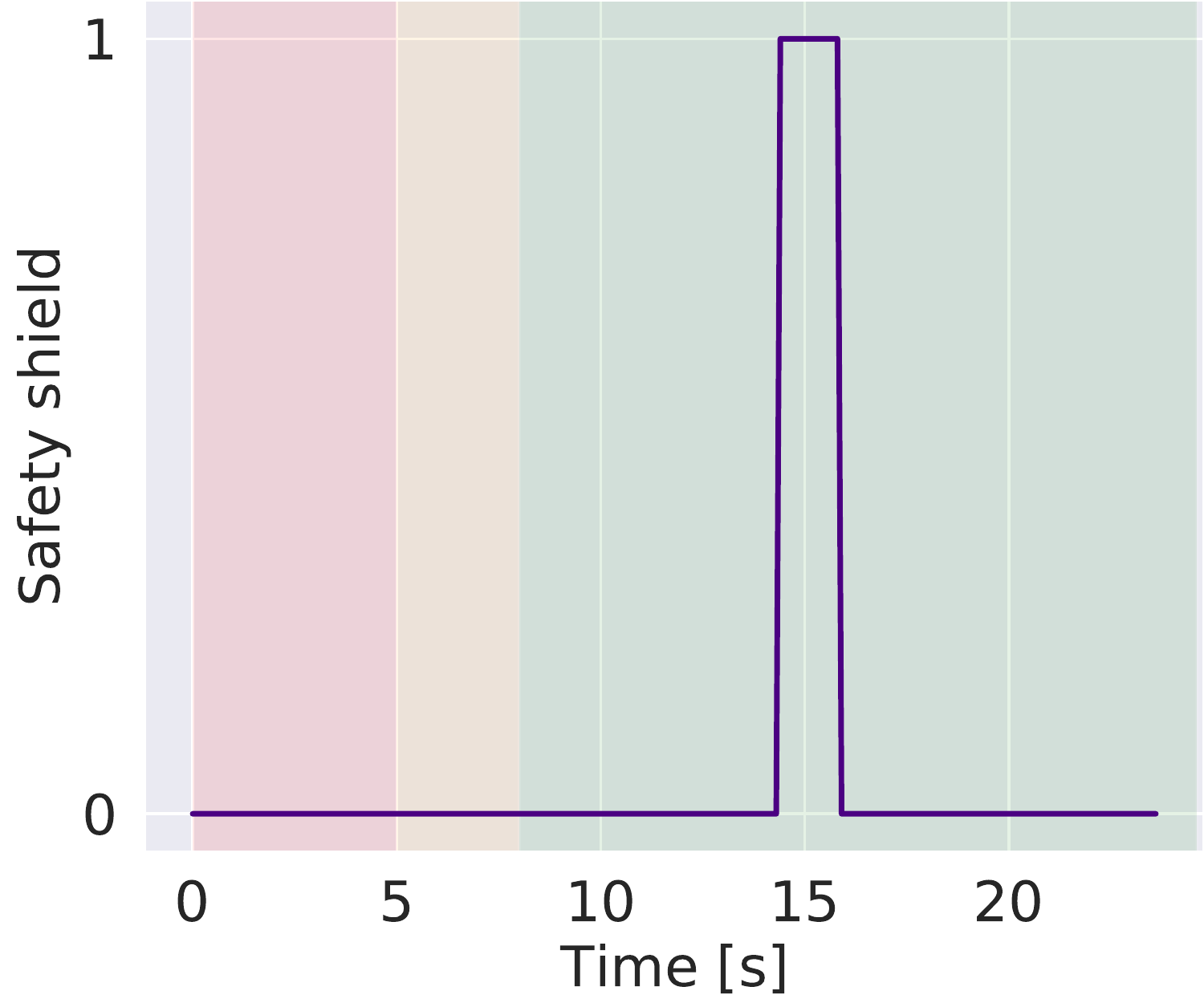}}
\subfloat[Steering angle]{\includegraphics[width=0.2\linewidth]{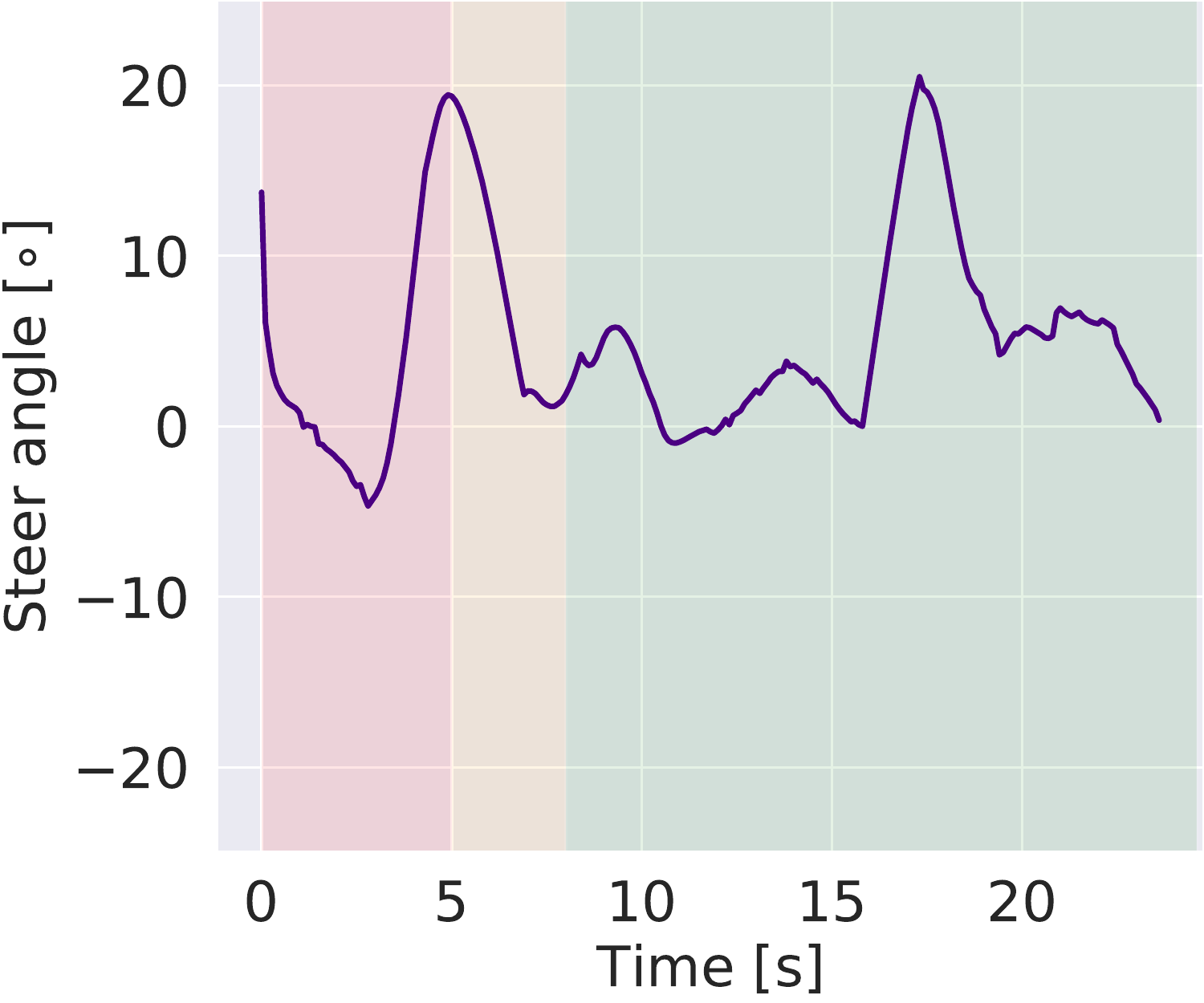}}
\subfloat[Acceleration]{\includegraphics[width=0.2\linewidth]{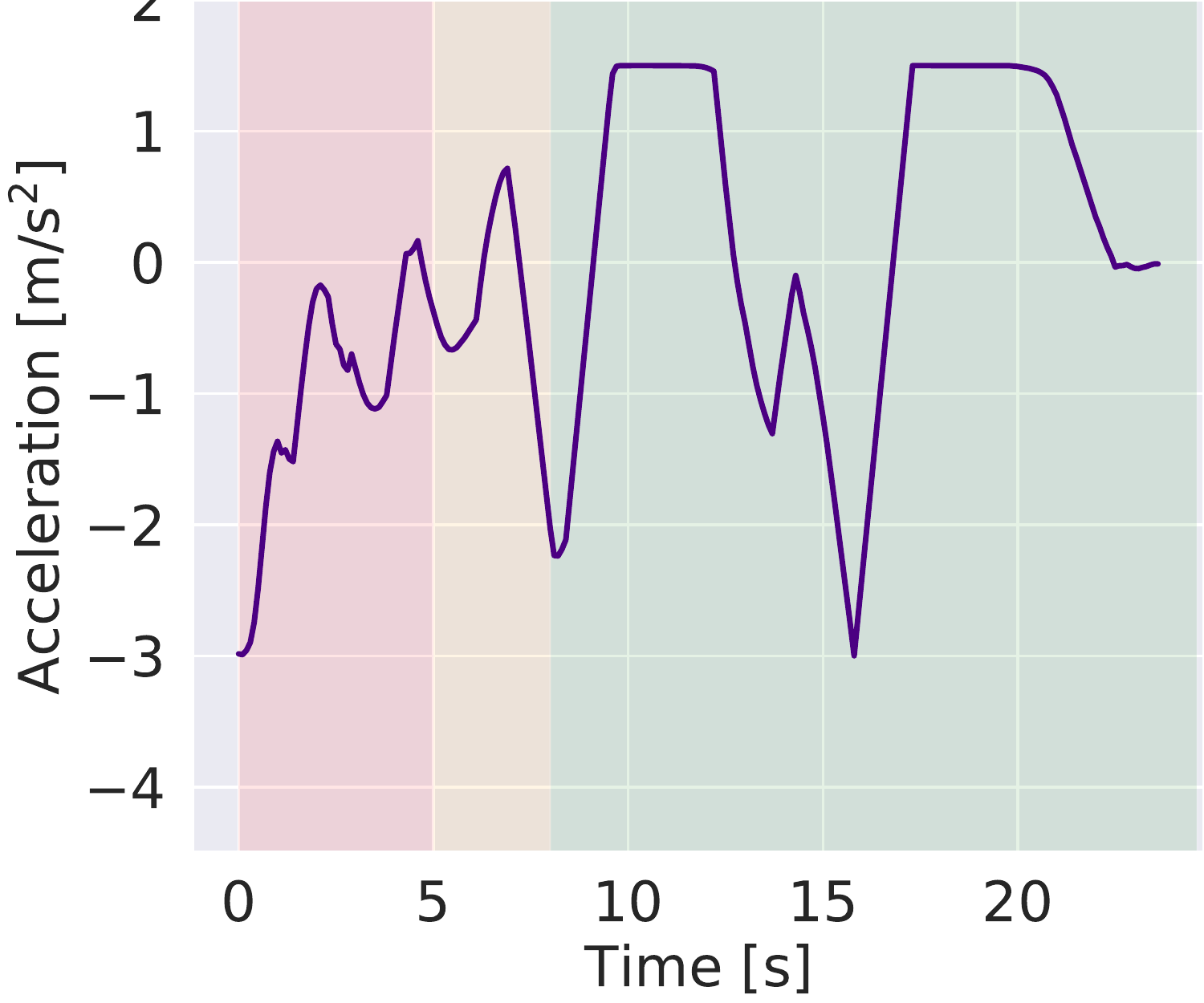}}\\
\caption{Visualization of one typical episode driven by the trained policy.}
\label{fig.simulation_single_demo}
\end{figure*}

\subsection{Experiment 1: Performance of GEP}
To verify the control precision and computing efficiency of our model-based solver on constrained optimal problem, we conduct comparison with the classic Model Predictive Control (MPC), which utilizes the receding horizon optimization online and can deal with constraints explicitly.
Formally, the problem \eqref{eq.lower_layer_ocp} is defined on one certain path, and thus MPC method solves the same number of problems as that of candidate paths and calculate cost function of each path respectively. Then optimal path is selected by the minimum cost function and its corresponding action is used as the input signal of ego vehicle. Here we adopt the Ipopt solver to obtain the exact solution of the constrained optimal control problem, which is an open and popular package to solve nonlinear optimization problem.
Fig.~\ref{fig.exp1} demonstrates the comparison of our algorithm on control effect and computation time.
Results show that the optimal path of two methods are identical and the output actions, steer wheel and acceleration, have similar trends, which indicates our proposed algorithm can approximate the solution of MPC with a small error. However, there exists the obvious difference in computation time that our method can output the actions within 10ms while MPC will take 1000ms to perform that. Although MPC can find the optimal solution by its online optimization, its computation time also increases sharply with the number of constraints, probably violating the real-time requirements of autonomous driving.

\begin{figure}[htbp]
\centering
\captionsetup[subfigure]{justification=centering}
\subfloat[]{\label{fig.exp1.front}\includegraphics[width=0.22\textwidth]{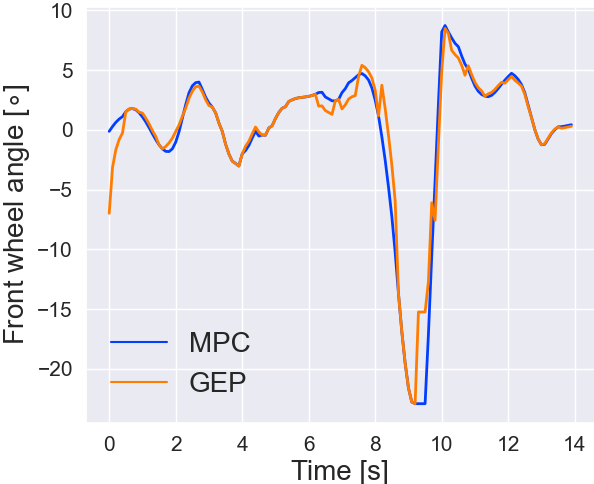}}
\subfloat[]{\label{fig.exp1.acc}\includegraphics[width=0.225\textwidth]{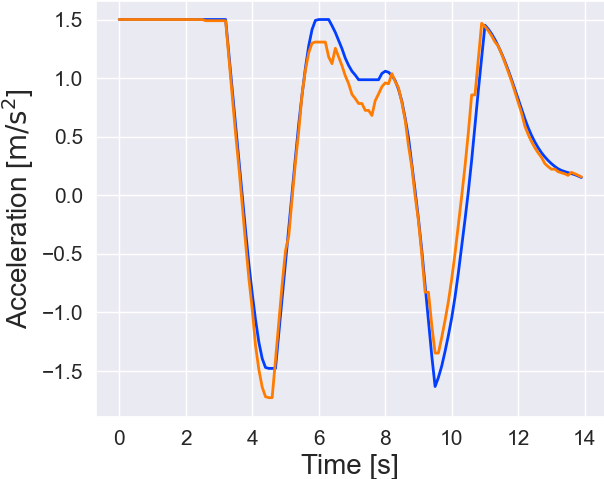}}\\
\subfloat[]{\label{fig.exp1.path}\includegraphics[width=0.21\textwidth]{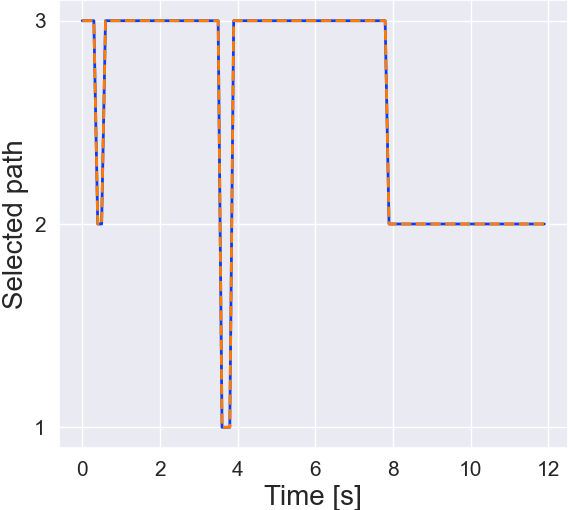}}
\subfloat[]{\label{fig.exp1.time}\includegraphics[width=0.23\textwidth]{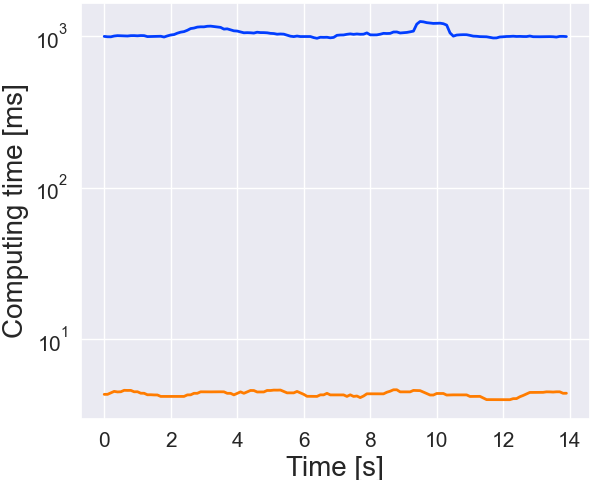}}

\caption{Comparison with MPC. (a) Front wheel angle. (b) Acceleration. (c) Optimal path. (d) Computing time}
\label{fig.exp1}
\end{figure}

\subsection{Experiment 2: Comparison of driving performance}
We compare our method with a rule-based method which adopts A* algorithm to generate a feasible trajectory and a PID controller to track it\cite{xin2021enable}, as well as a model-free RL method which uses a punish and reward system to learn a policy for maximizing the long-term reward (If the ego vehicle passes the intersection safely, a large positive reward 100 is given, otherwise -100 is given wherever a collision happens)\cite{haarnoja2018soft}. We choose six indicators including computing time, comfort, travel efficiency, collisions, failure rate and driving compliance to evaluate the three algorithms. Comfort is reflected by the mean root square of lateral and longitudinal acceleration, i.e., $I_{\text {comfort }}=1.4 \sqrt{\left( a_{x}^{2}\right)+\left(a_{y}^{2}\right)}$. Travel efficiency is evaluated by the average time used to pass the intersection. Failure rate means the accumulated times of that decision signal is generated for more than 1 seconds and driving compliance shows times of breaking red light. The results of 100 times simulation are shown in Table \ref{table.comp}. Rule-based method is more likely to stop and wait for other vehicles, leading to higher passing time but better safety performance. However, it tends to take much more time to give a control signal in the dense traffic flow and suffers the highest failure rate. Model-free RL is eager to achieve the goal, but incurs the most collisions and decision incompliance due to its lack of safety guarantee. Benefiting from the framework design, the computational efficiency of our method remains as fast as the model-free approach, and the driving performance such as safety, compliance and failure rate, is better than the other two approaches.
\begin{table}[h]
\caption{Comparison of driving performance}
\begin{tabular}{lccc}
\hline
                    &IDC (Ours)    & Rule-based    & Model-free RL \\ \hline
Computing time [ms]\\
\quad Upper-quantile & 5.81 & 73.99  & 4.91          \\
\quad Standard deviation & 0.60 & 36.59  & 0.65          \\
Comfort index       & 1.84         & 1.41          & 3.21          \\
Time to pass [s]    & 7.86($\pm$3.52) & 24.4($\pm$16.48)  & 6.73($\pm$3.32)          \\
Collisions          & 0            & 0             & 31            \\
Failure Rate        & 0            & 13            & 0             \\
Decision Compliance & 0            & 0             & 17            \\ \hline
\end{tabular}
\label{table.comp}
\end{table}

\subsection{Experiment 3: Application of distributed control}
We also apply our trained policies in multiple vehicles for distributed control. Our method yields a surprisingly good performance by showing a group of complex and intelligent driving trajectories (Fig. \ref{fig.simulation_multi_demo}), which demonstrates the potential of the proposed method to be extended to the distributed control of large-scale connected vehicles. More videos are available online (https://youtu.be/J8aRgcjJukQ).

\begin{figure*}[htbp]
\centering
\captionsetup[subfigure]{justification=centering}
\subfloat[t=0.3s]{\includegraphics[width=0.2\linewidth]{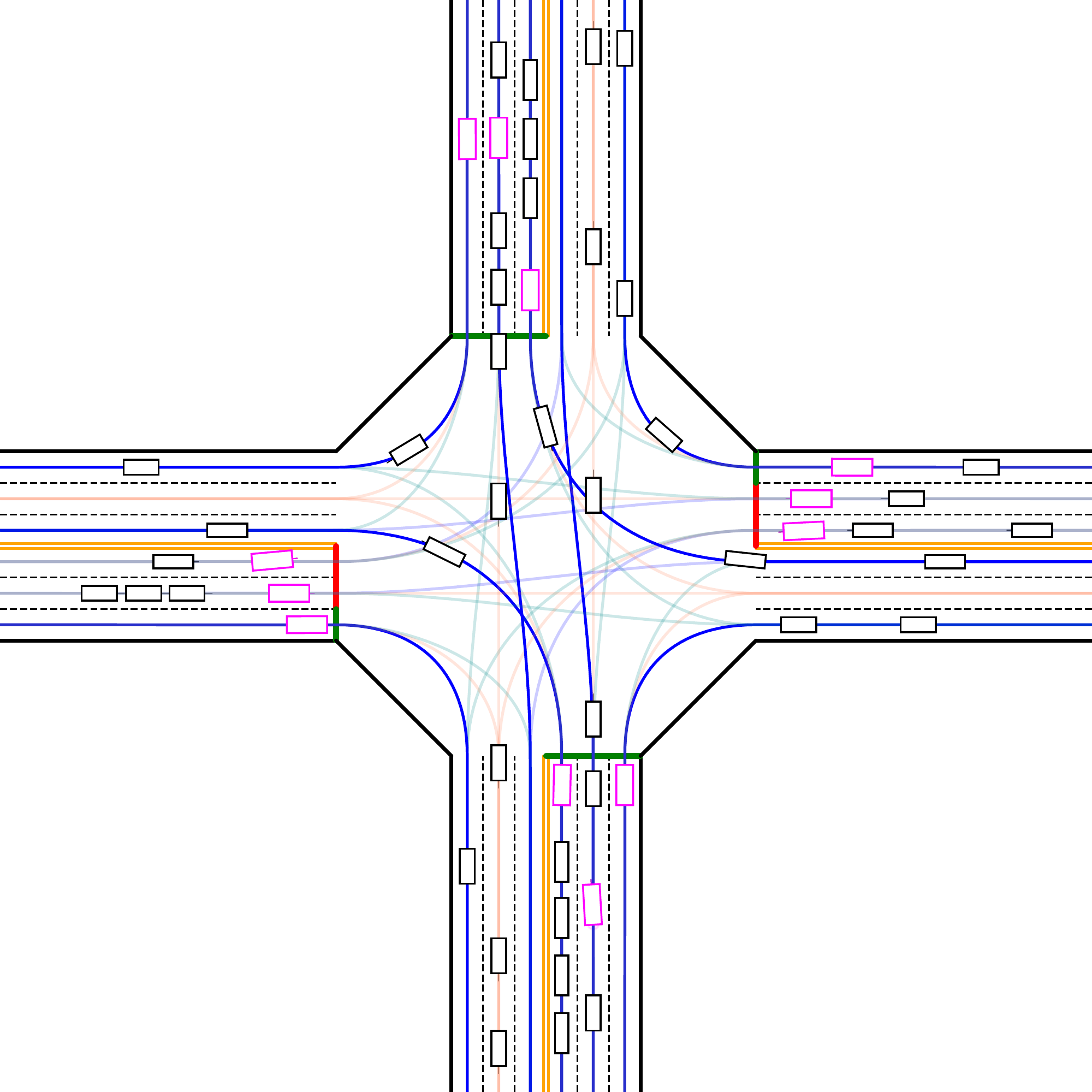}}
\subfloat[t=2.8s]{\includegraphics[width=0.2\linewidth]{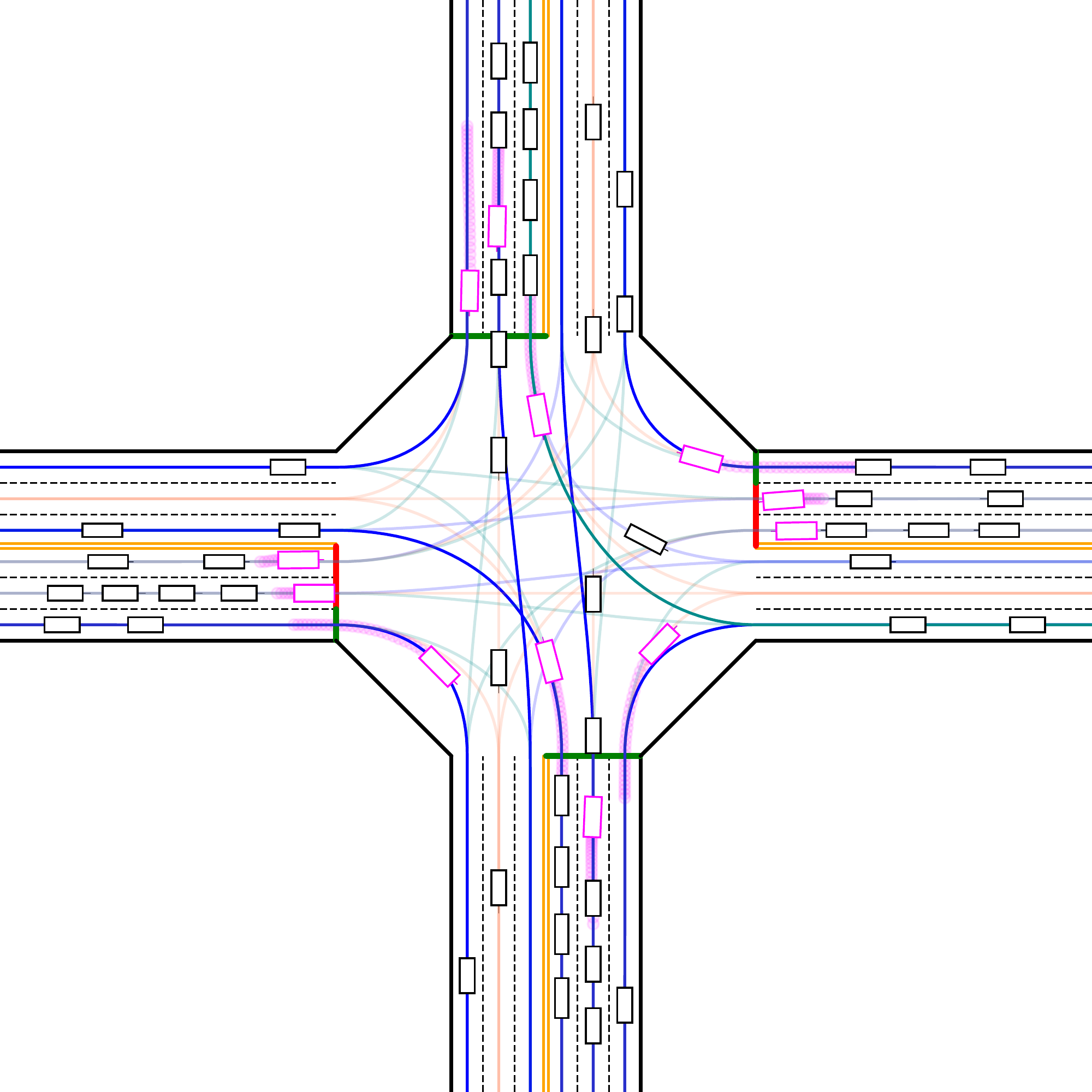}}
\subfloat[t=5.4s]{\includegraphics[width=0.2\linewidth]{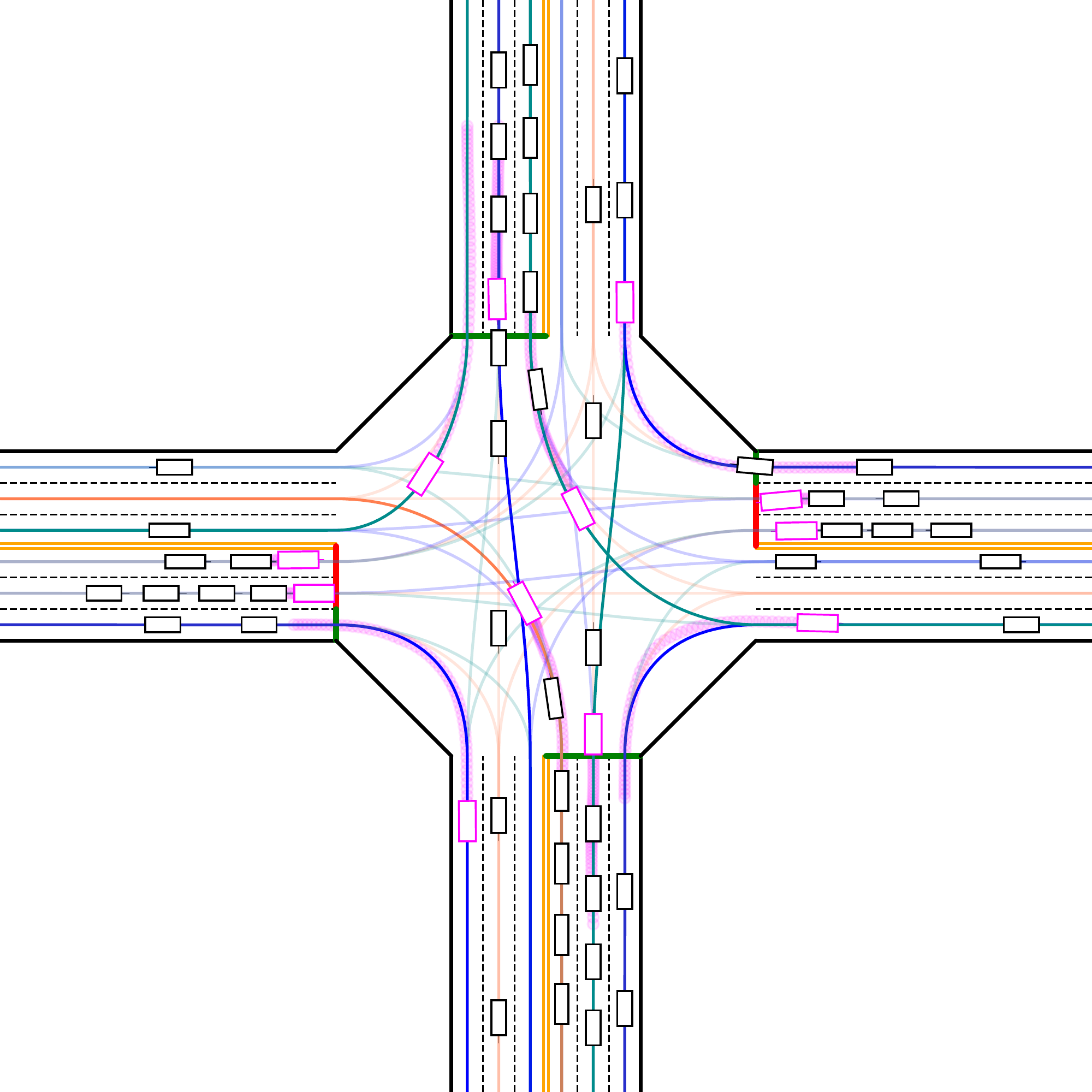}}
\subfloat[t=8.0s]{\includegraphics[width=0.2\linewidth]{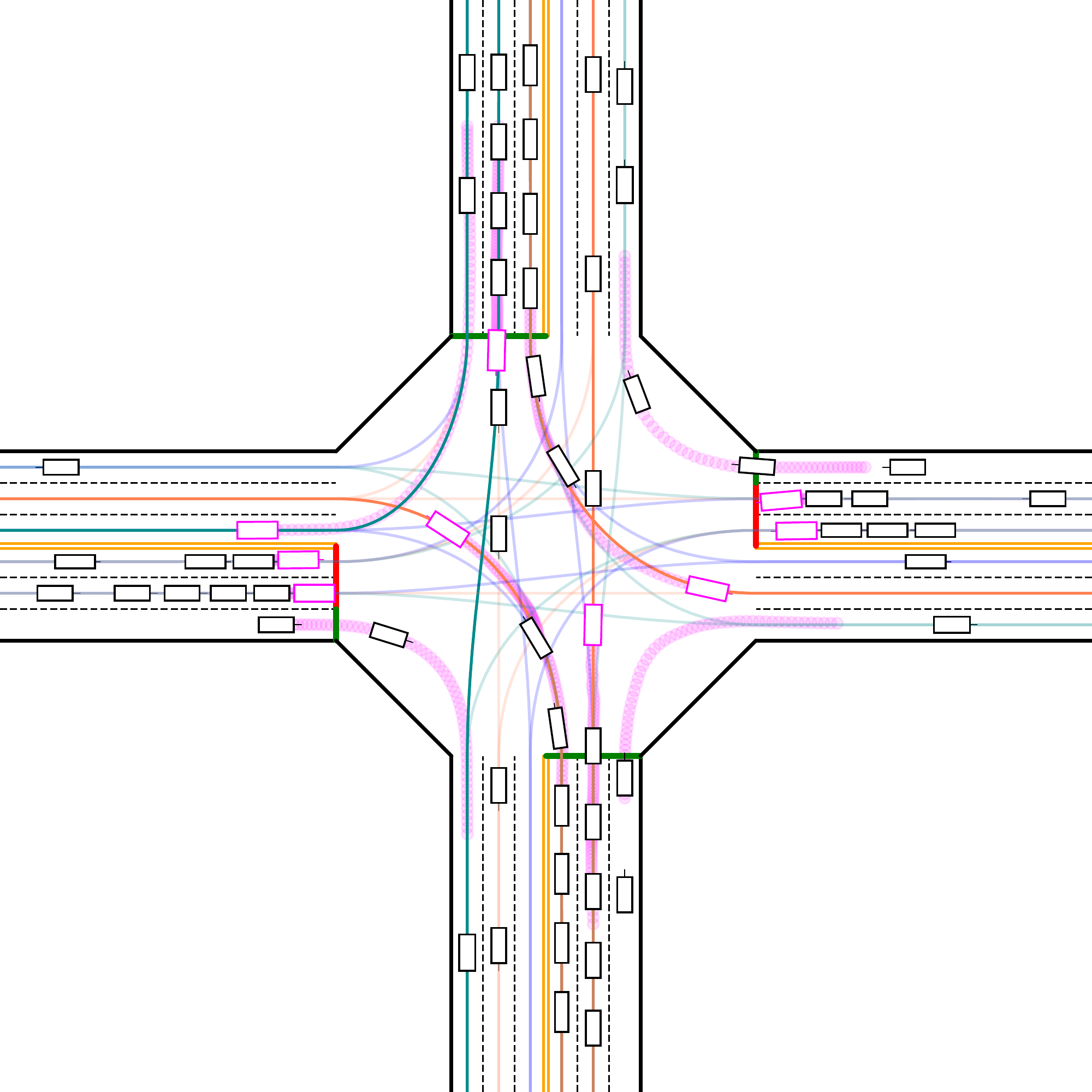}}
\subfloat[t=10.6s]{\includegraphics[width=0.2\linewidth]{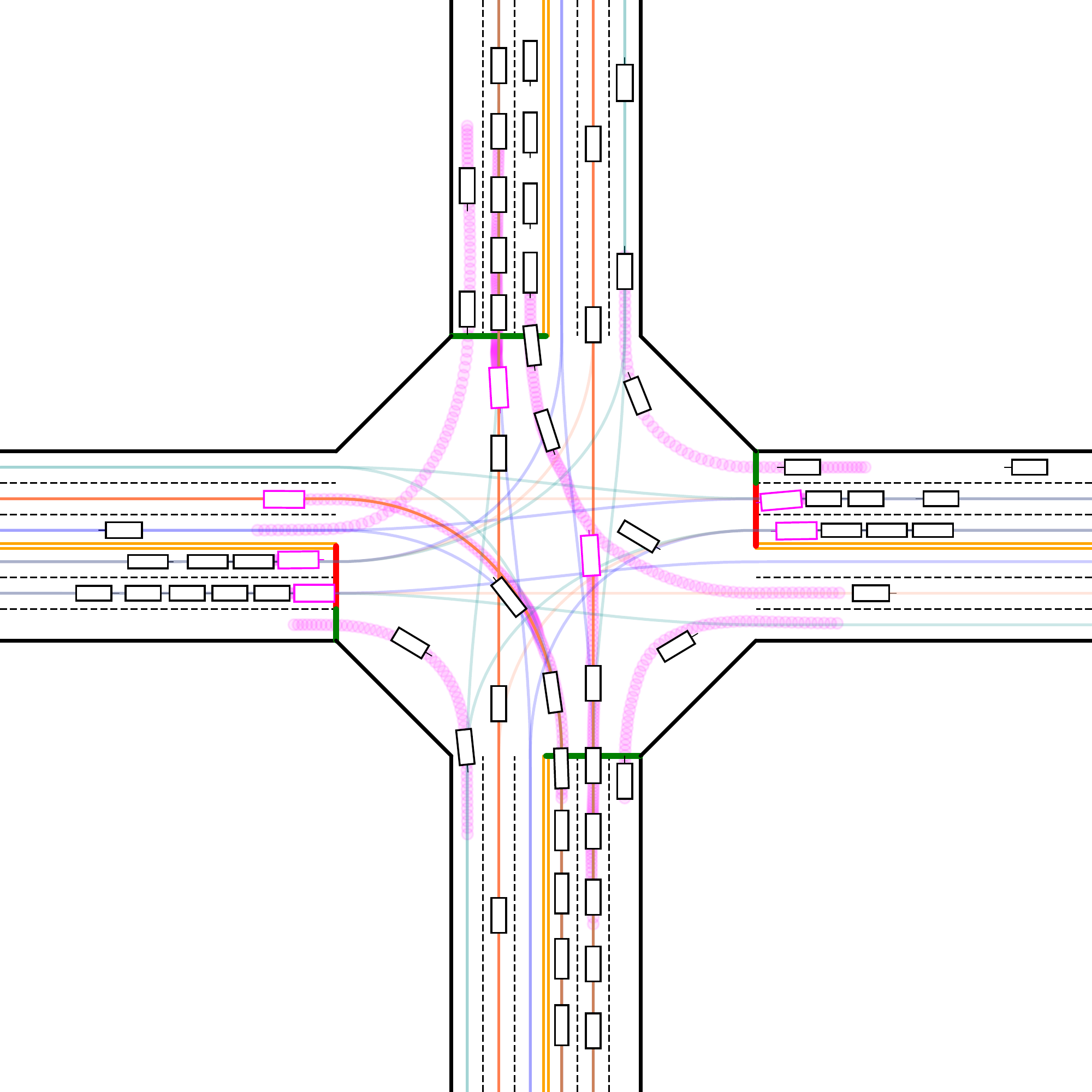}}\\
\subfloat[t=13.2s]{\includegraphics[width=0.2\linewidth]{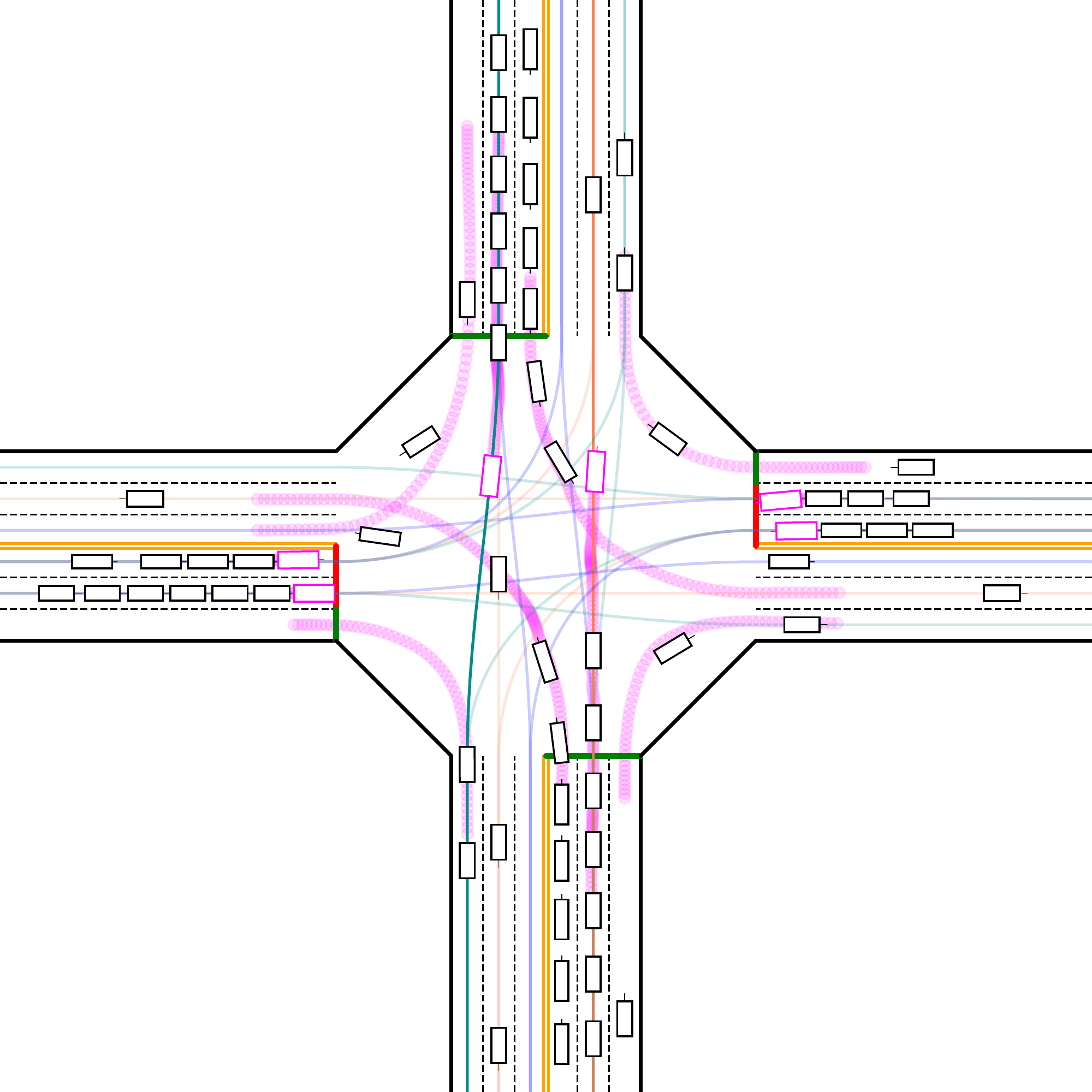}}
\subfloat[t=15.8s]{\includegraphics[width=0.2\linewidth]{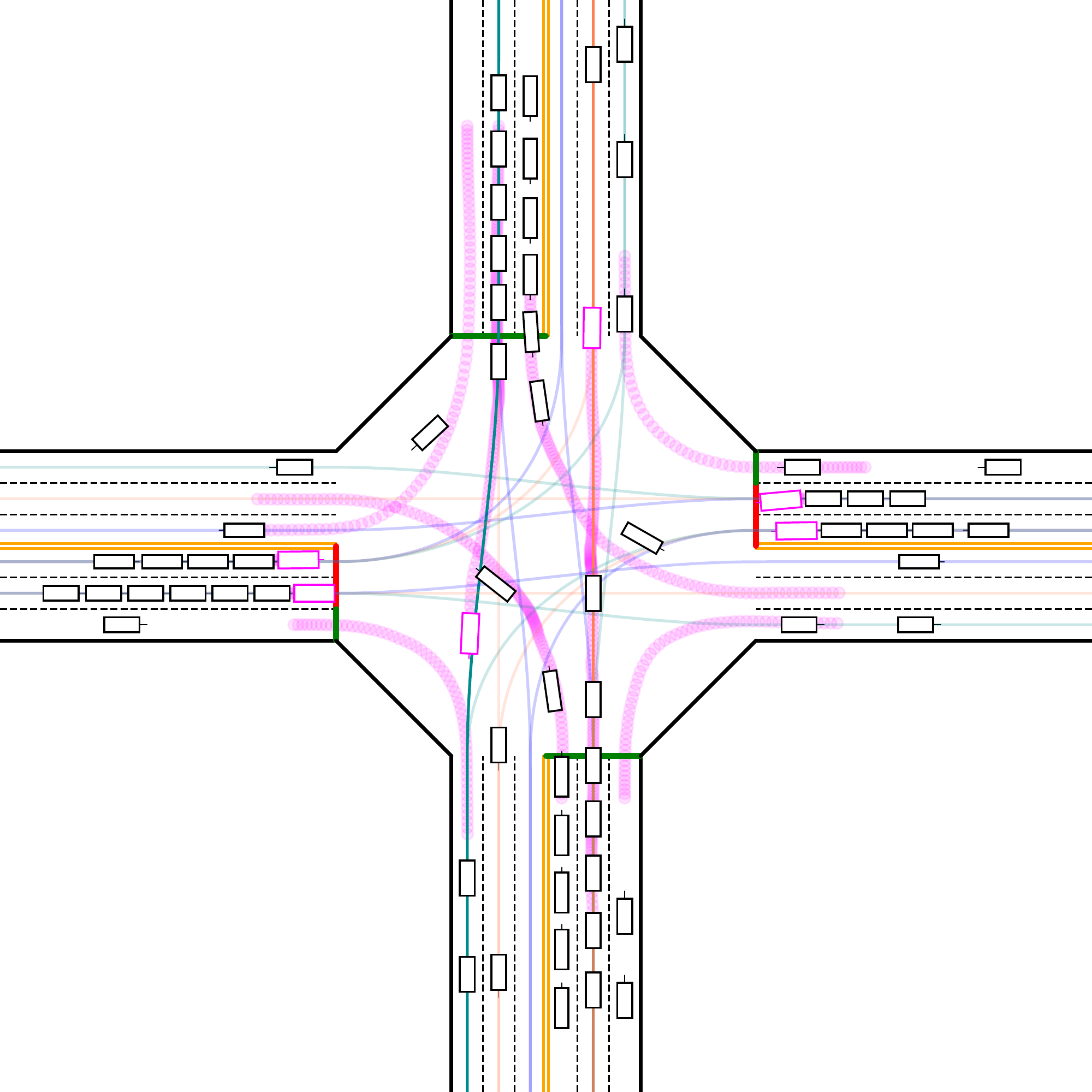}}
\subfloat[t=18.4s]{\includegraphics[width=0.2\linewidth]{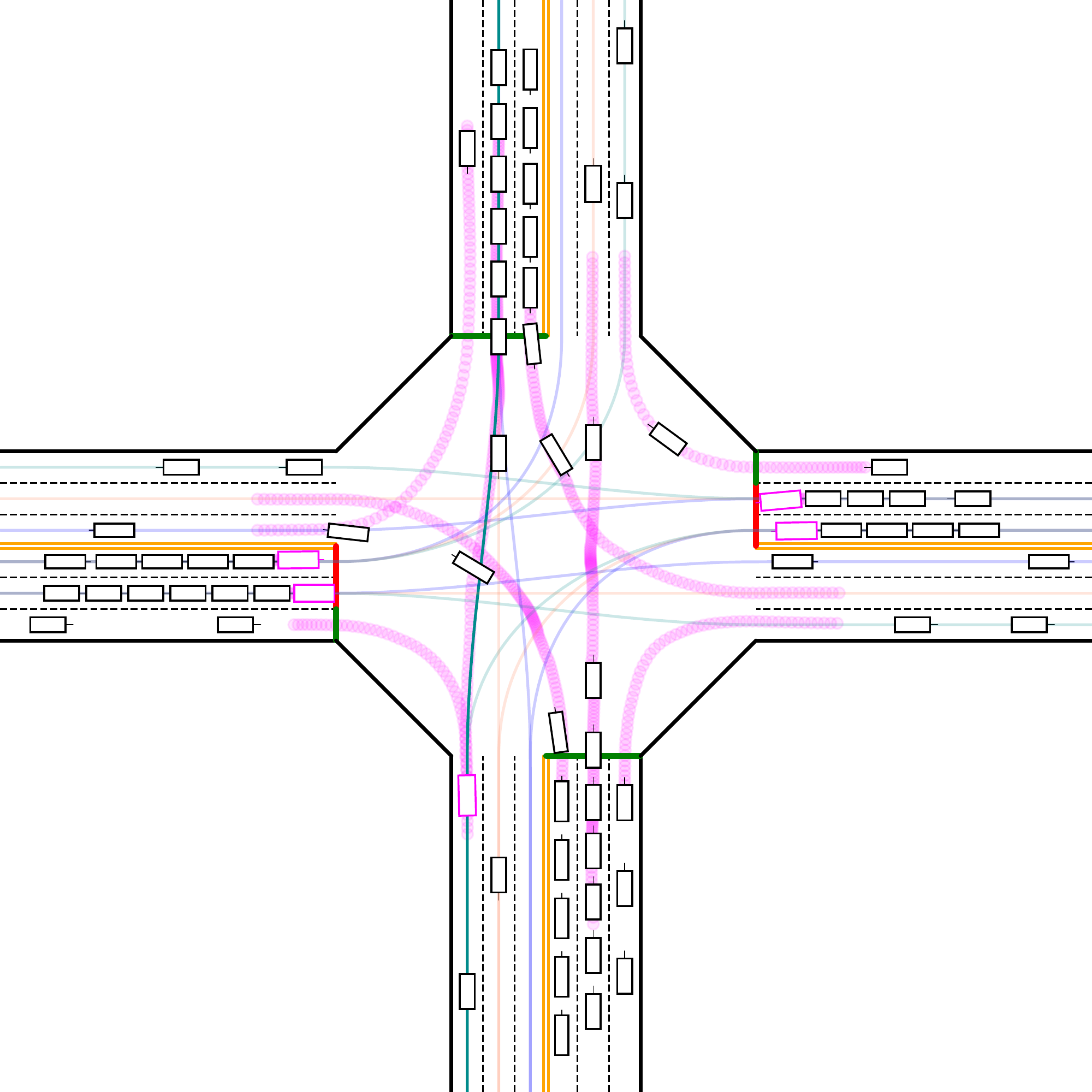}}
\subfloat[t=28.8s]{\includegraphics[width=0.2\linewidth]{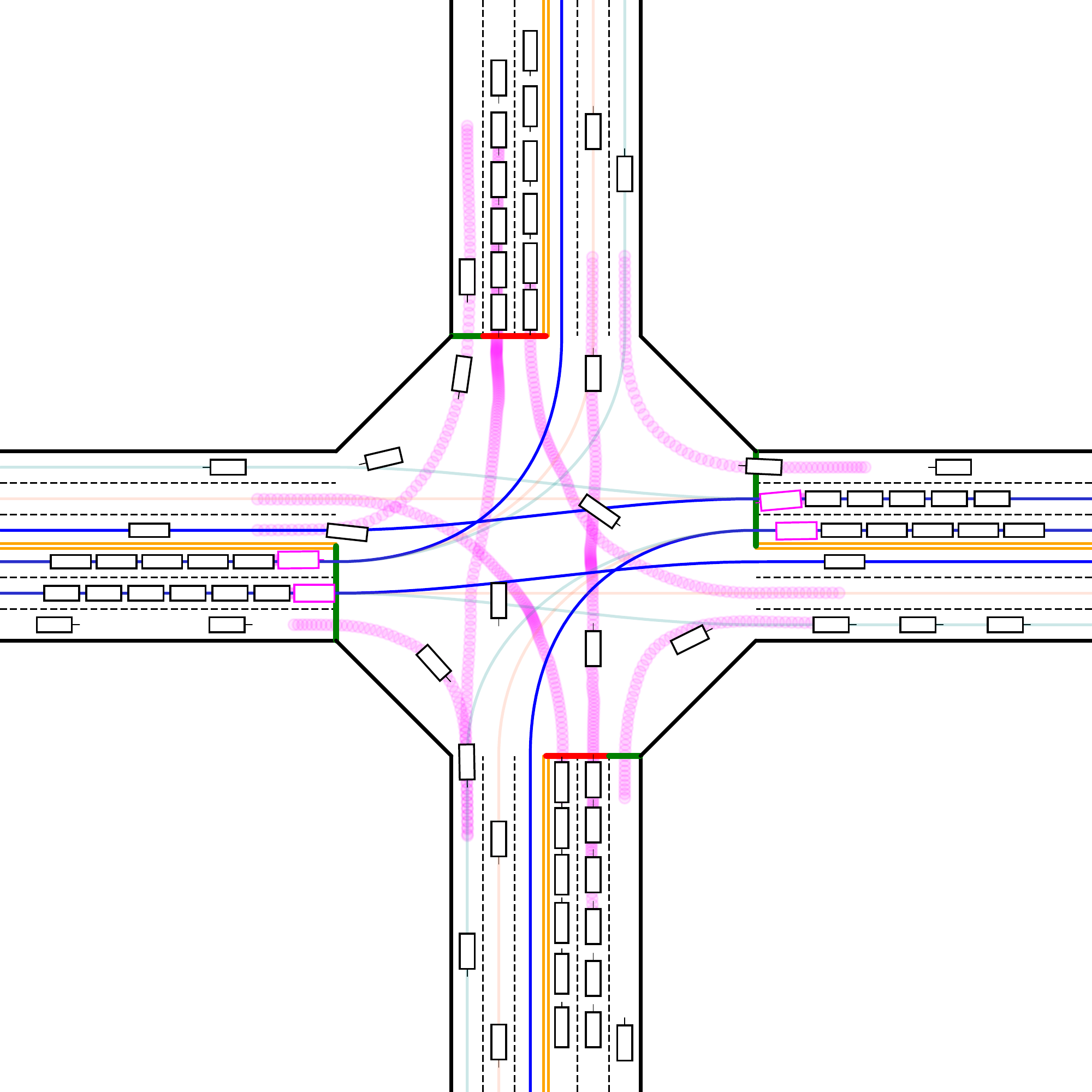}}
\subfloat[t=31.4s]{\includegraphics[width=0.2\linewidth]{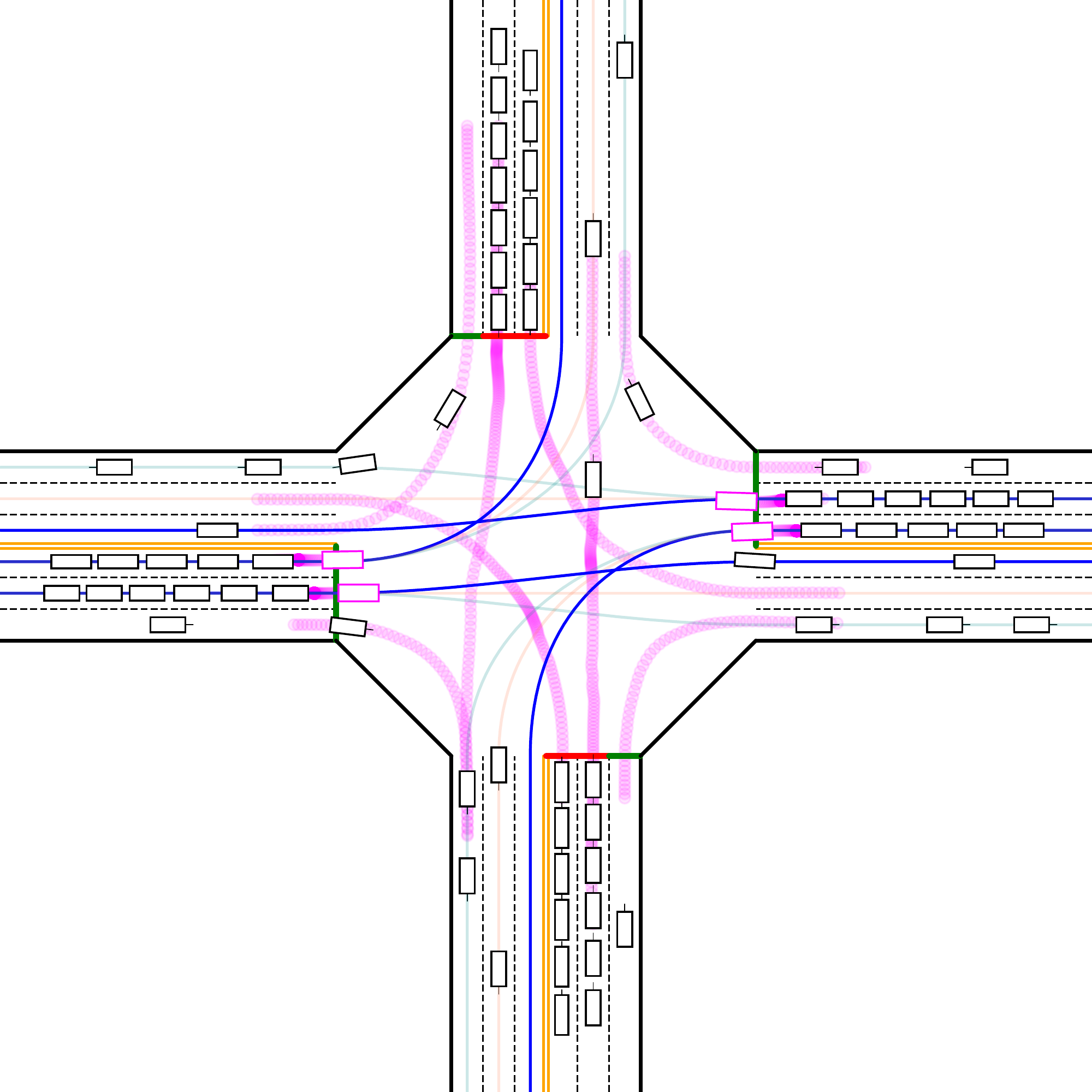}}\\
\subfloat[t=34.0s]{\includegraphics[width=0.2\linewidth]{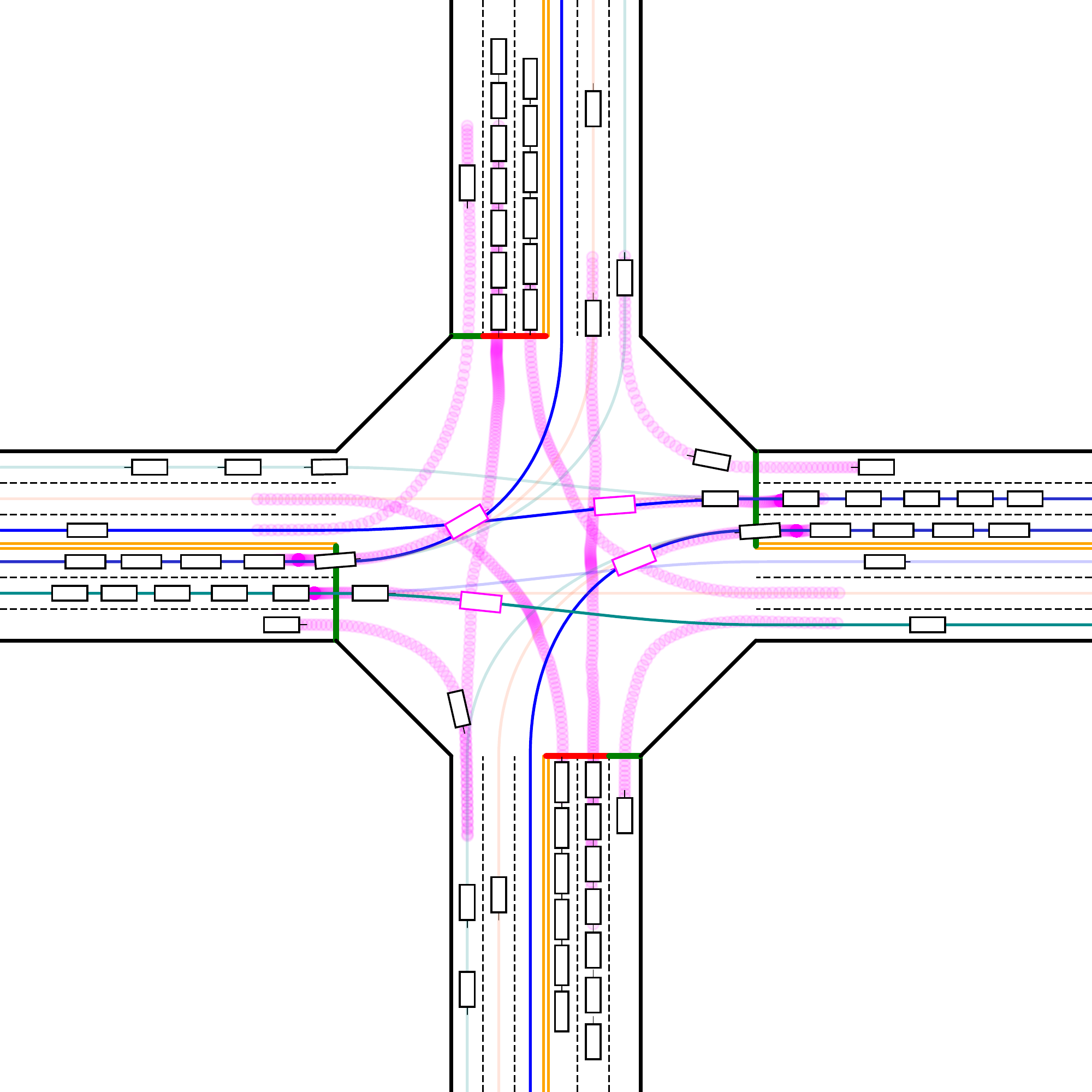}}
\subfloat[t=36.6s]{\includegraphics[width=0.2\linewidth]{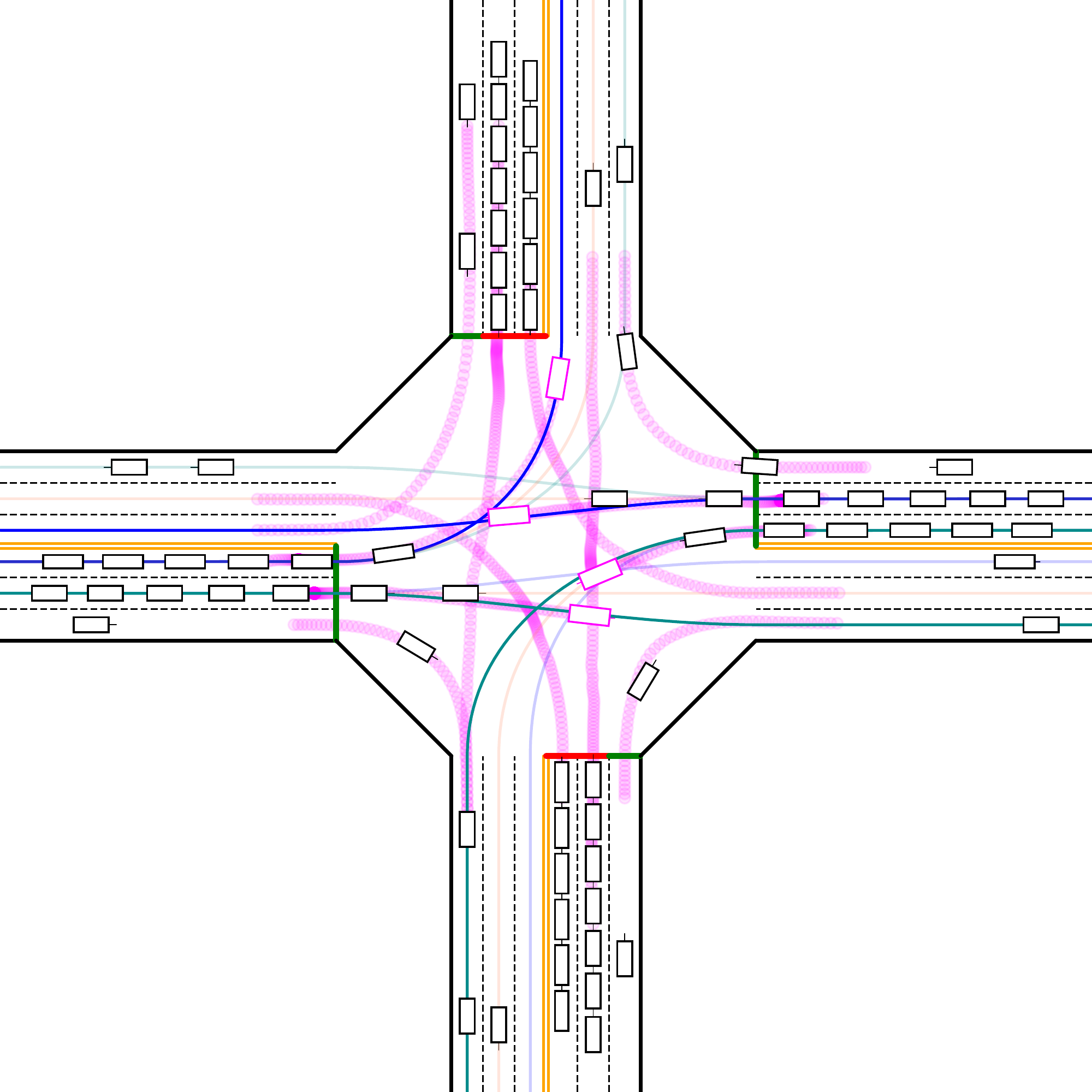}}
\subfloat[t=39.2s]{\includegraphics[width=0.2\linewidth]{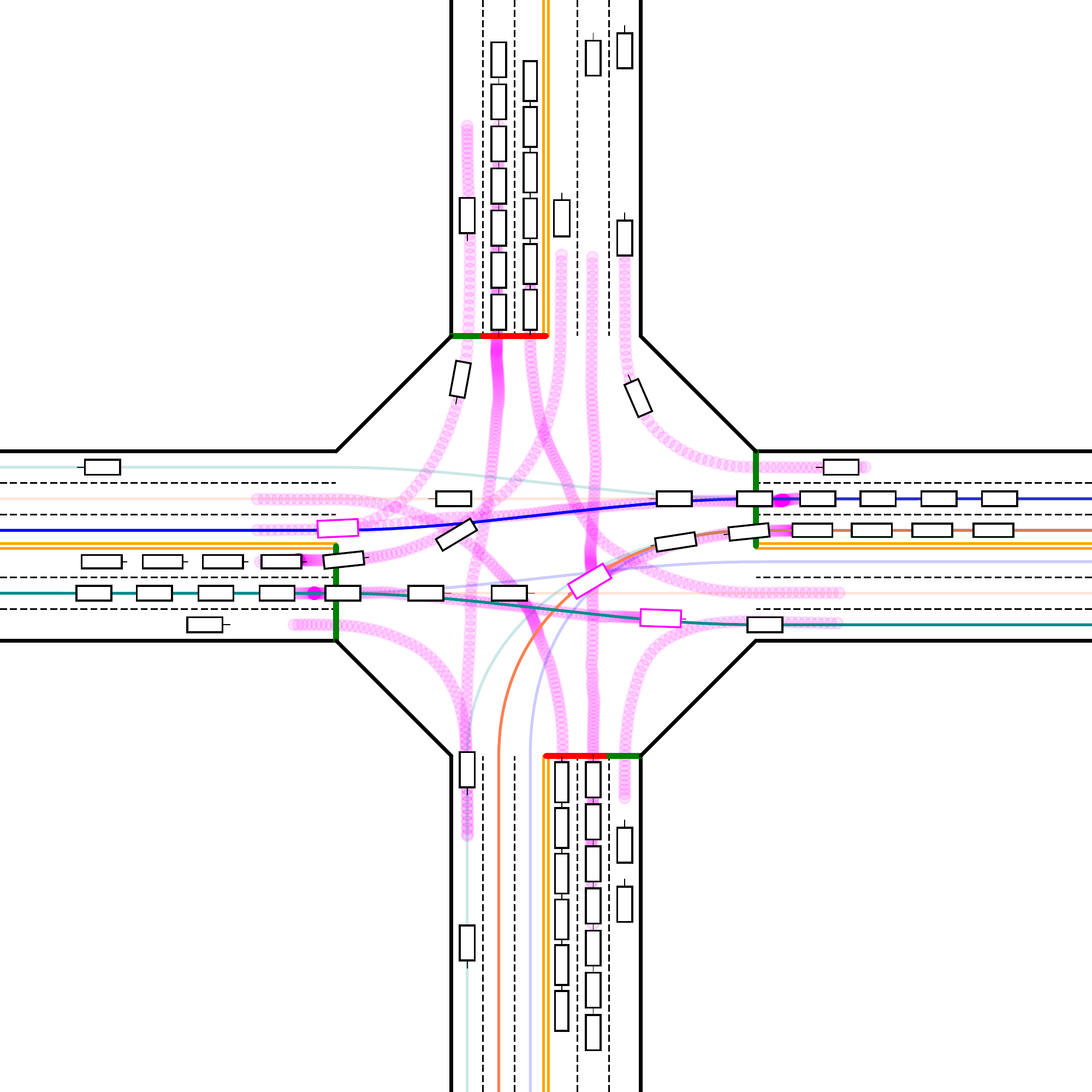}}
\subfloat[t=41.8s]{\includegraphics[width=0.2\linewidth]{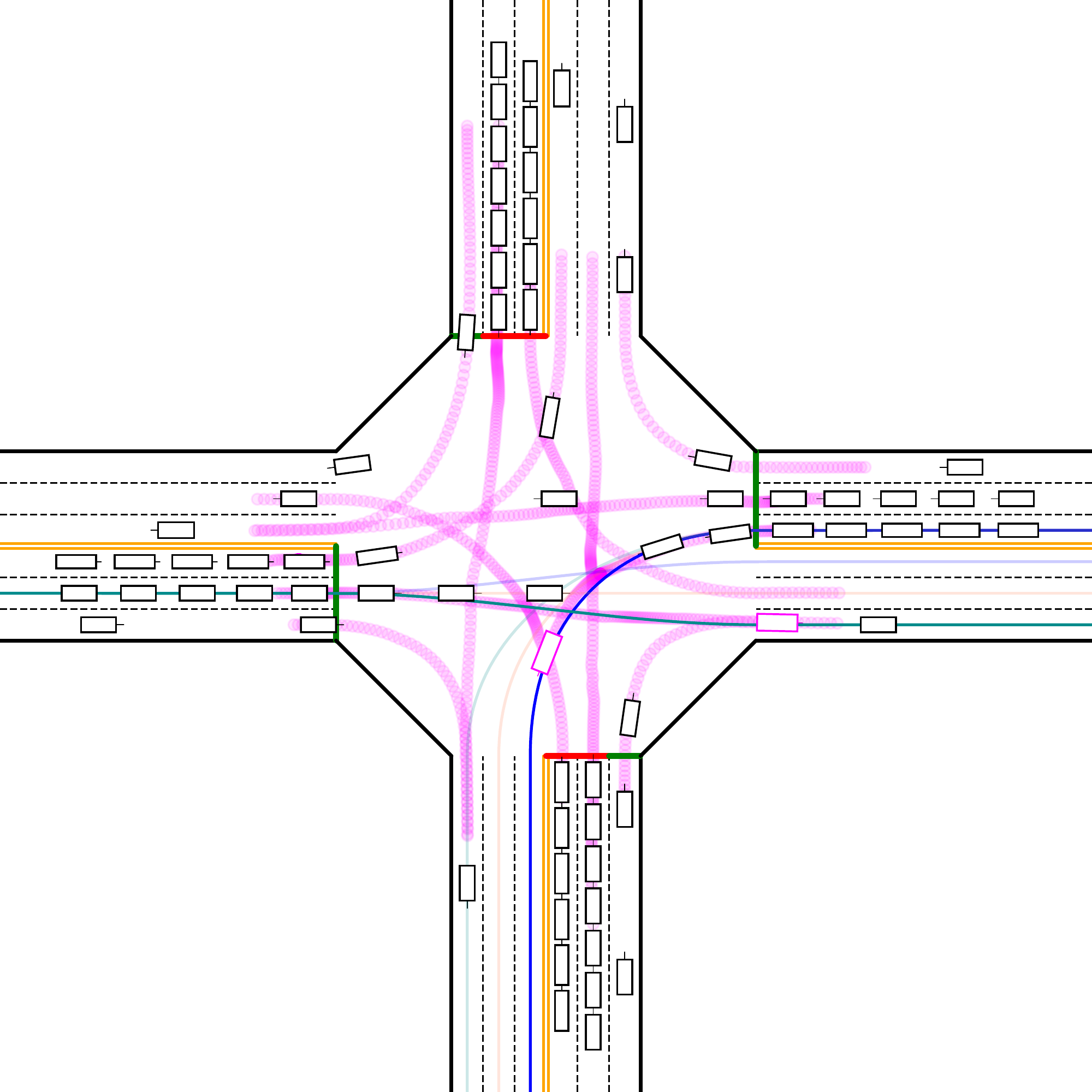}}
\subfloat[t=44.4s]{\includegraphics[width=0.2\linewidth]{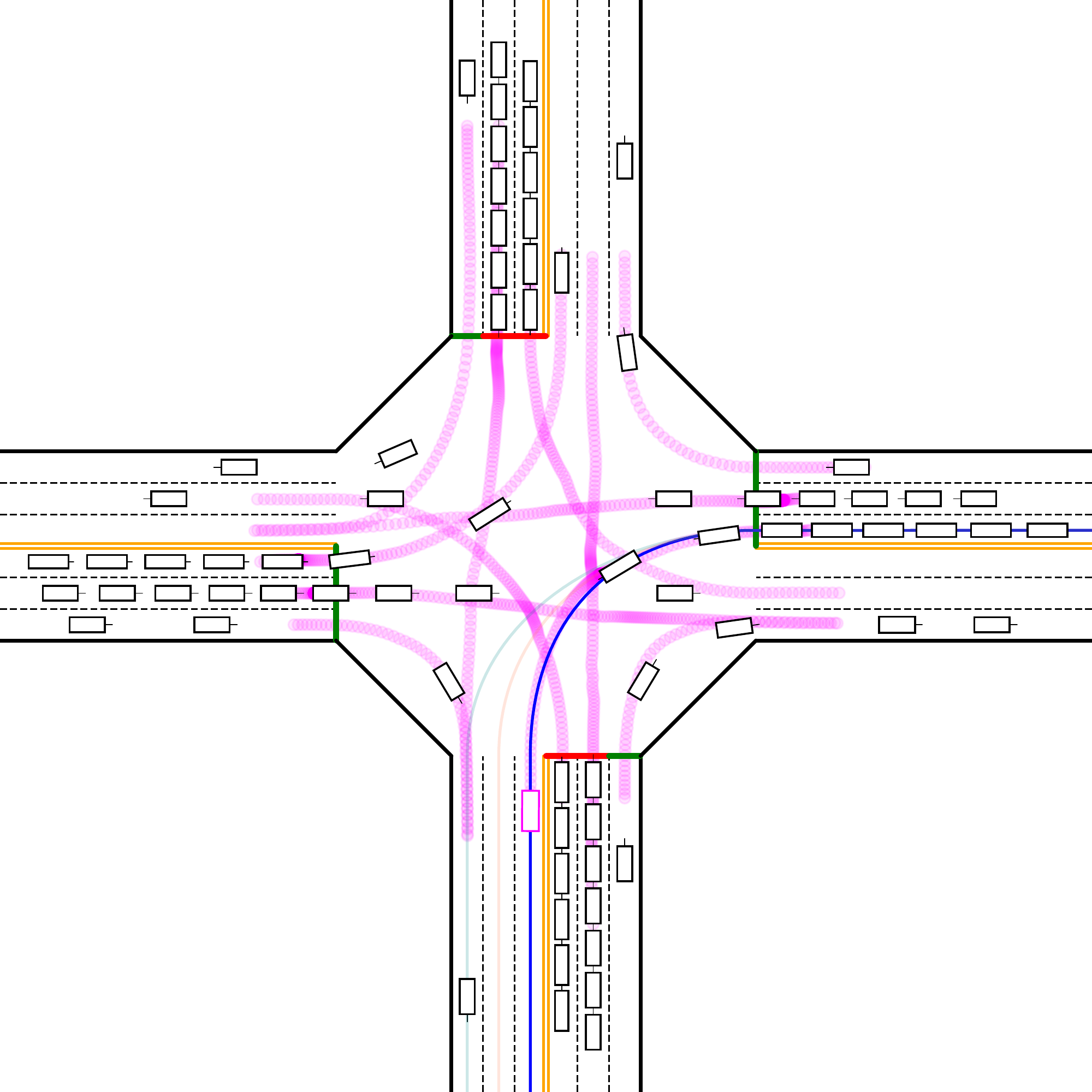}}\\
\caption{Demonstration of the distributed control carried out by the trained policies.}
\label{fig.simulation_multi_demo}
\end{figure*}
\section{Test on real-world roads}
\subsection{Scenario and equipment}
In the real-world test, we choose an intersection of two-way streets located at ($31^{\degree}08'13''$N, $120^{\degree}35'06''$E), shown in Fig. \ref{fig.real_test_true}. The east-west street has a eight-lane dual carriageway from both directions, while the north-south street has a only four-lane dual carriageway. The detailed size and functionality of each lane are illustrated in Fig. \ref{fig.real_test_illus}. To comply with legal requirements, we did not utilize the traffic flow and traffic signals of the real intersection. Instead, these traffic elements are designed and provided by SUMO. The experiment vehicle is CHANA CS55 equipped with RTK GPS, which realizes precise localization of the ego vehicle. In each time step, the ego states gathered from the CAN bus and the RTK are mapped into SUMO traffic to obtain the current traffic states including the states of surrounding vehicles and traffic signals. Then they both are sent to the industrial computer, where the trained policy and value functions are embedded. The computer is KMDA-3211 with a 2.6 GHz Intel Core I5-6200U CPU. Processed by our online algorithm, the safe actions including the steering angle and the expected acceleration are then delivered from the CAN bus to the real vehicle for its real time control. The experiment settings are illustrated in Fig. \ref{fig.real_test_setting}. Similar to section \ref{sec.simulation}, the ego vehicle enters the intersection from south, and is required to complete the same tasks (turn left, go straight, and turn right) under the signal control and a dense traffic flow.

\begin{figure}[htbp]
\centering
\captionsetup[subfigure]{justification=centering}
\subfloat[]{\label{fig.real_test_true}\includegraphics[width=0.24\textwidth]{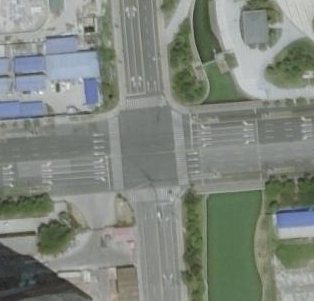}}\quad
\subfloat[]{\label{fig.real_test_illus}\includegraphics[width=0.22\textwidth]{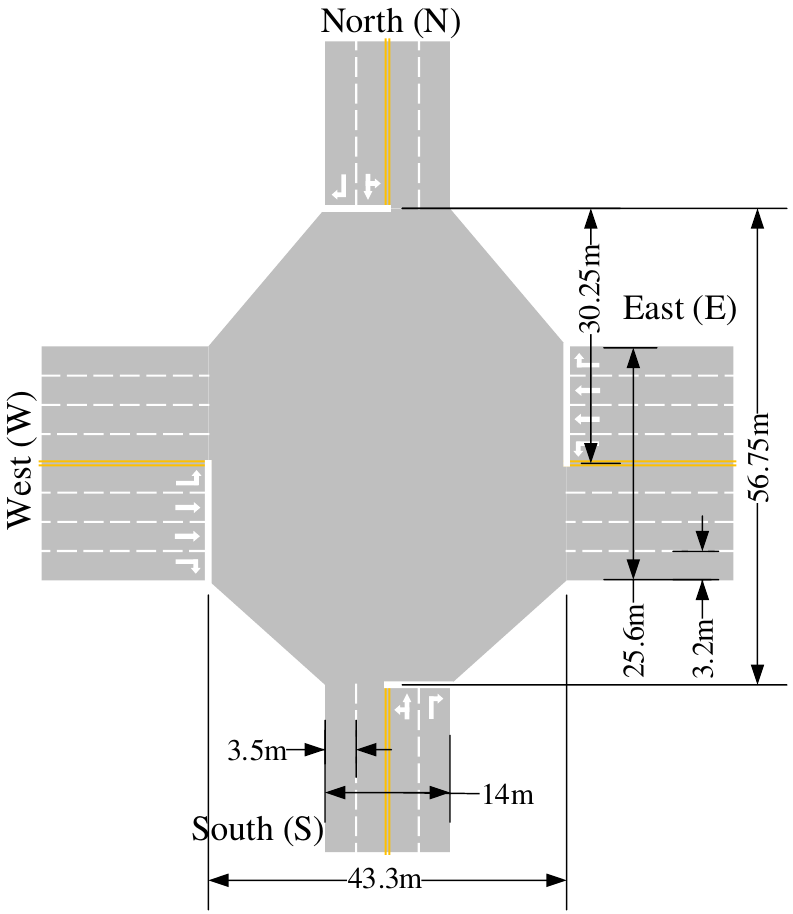}}\\
\caption{The intersection for the real test.}
\label{fig.real_scene}
\end{figure}

\begin{figure}[htbp]
\centerline{\includegraphics[width=0.9\linewidth]{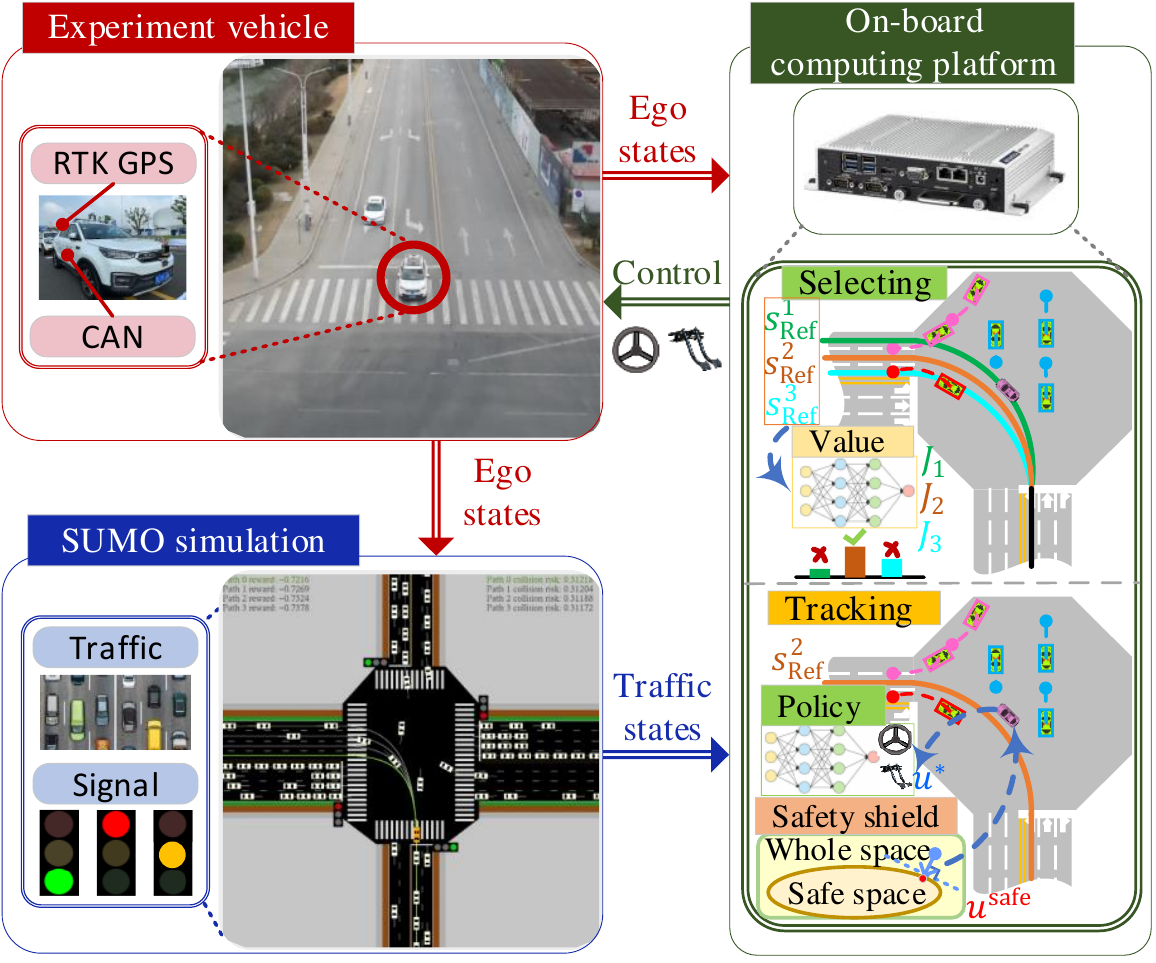}}
\caption{Diagram of the real-world road test.}
\label{fig.real_test_setting}
\end{figure}

\begin{figure*}[htbp]
\centering
\captionsetup[subfigure]{justification=centering}
\subfloat[]{\label{fig.real_test_step1}\includegraphics[width=0.3\linewidth]{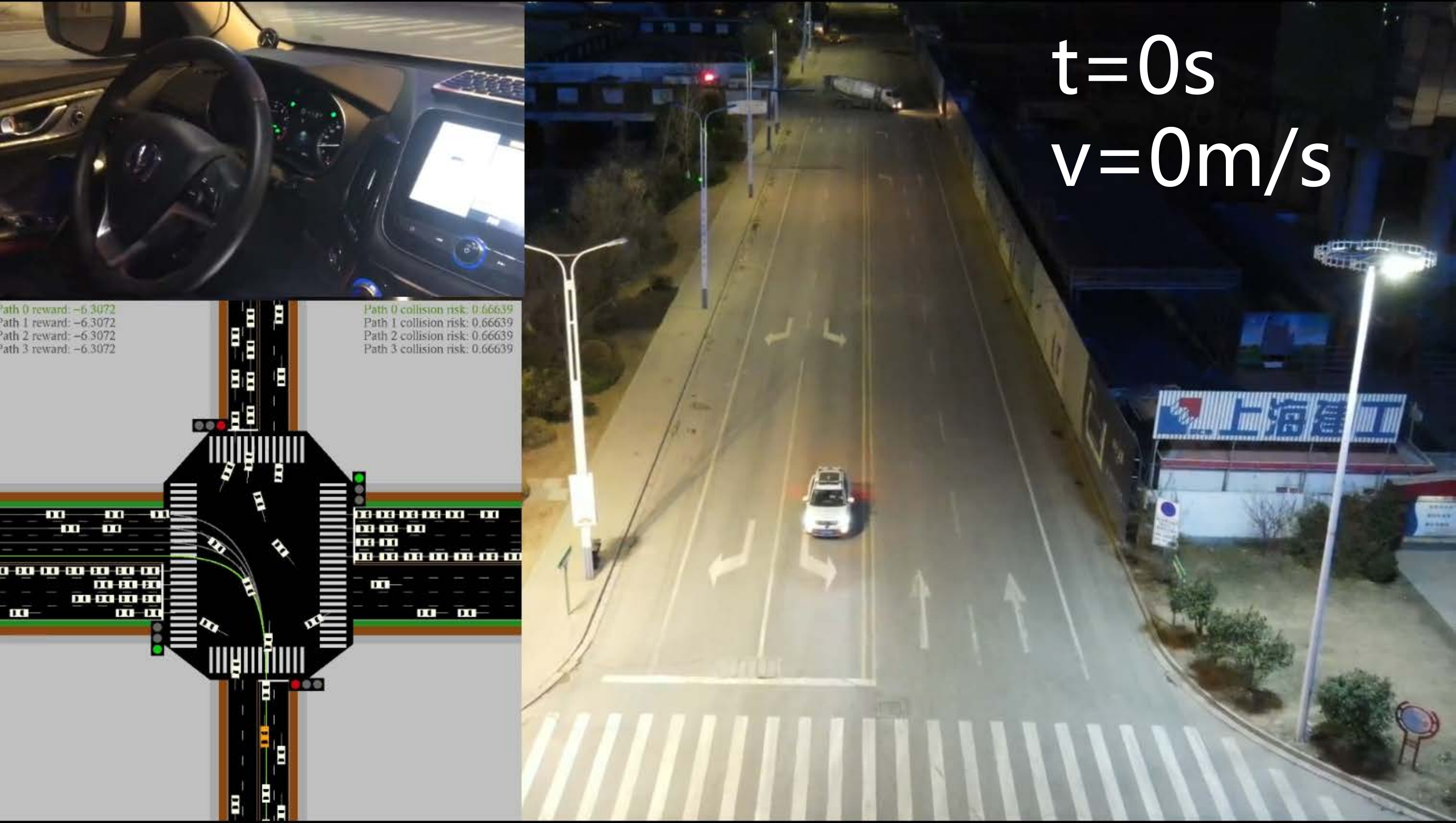}}\ 
\subfloat[]{\label{fig.real_test_step2}\includegraphics[width=0.3\textwidth]{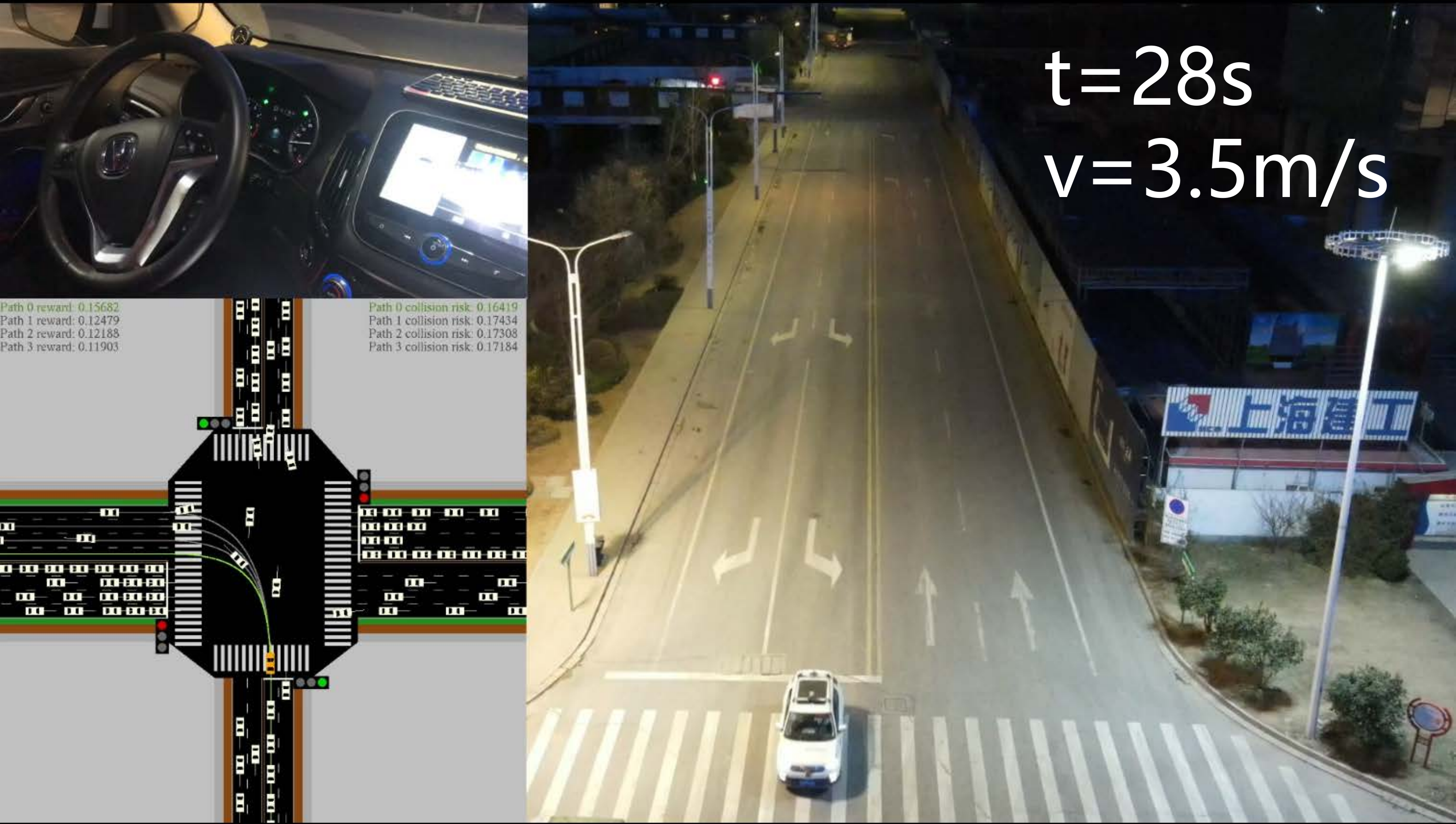}}\ 
\subfloat[]{\label{fig.real_test_step3}\includegraphics[width=0.3\textwidth]{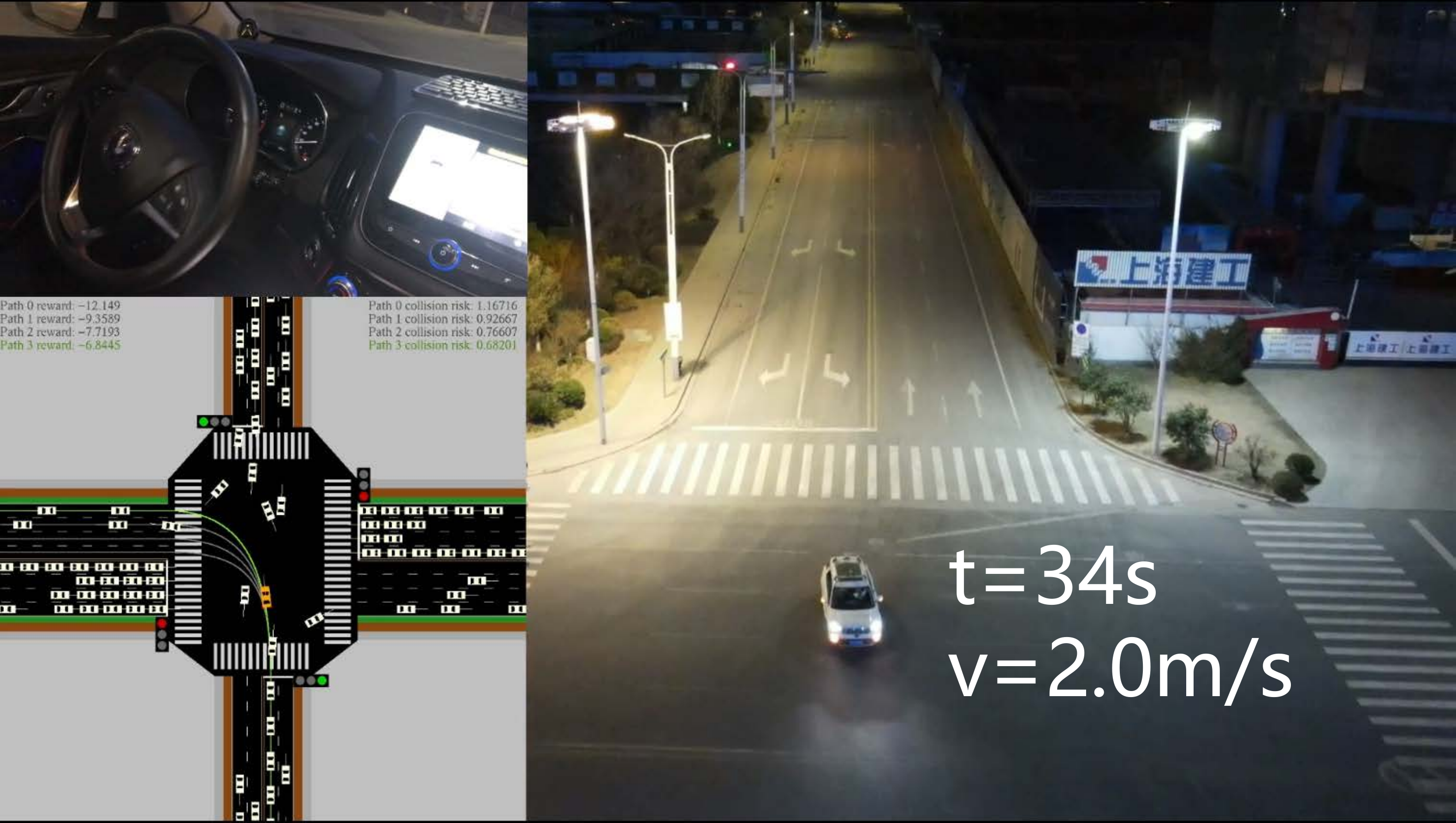}}\\
\subfloat[]{\label{fig.real_test_step4}\includegraphics[width=0.3\textwidth]{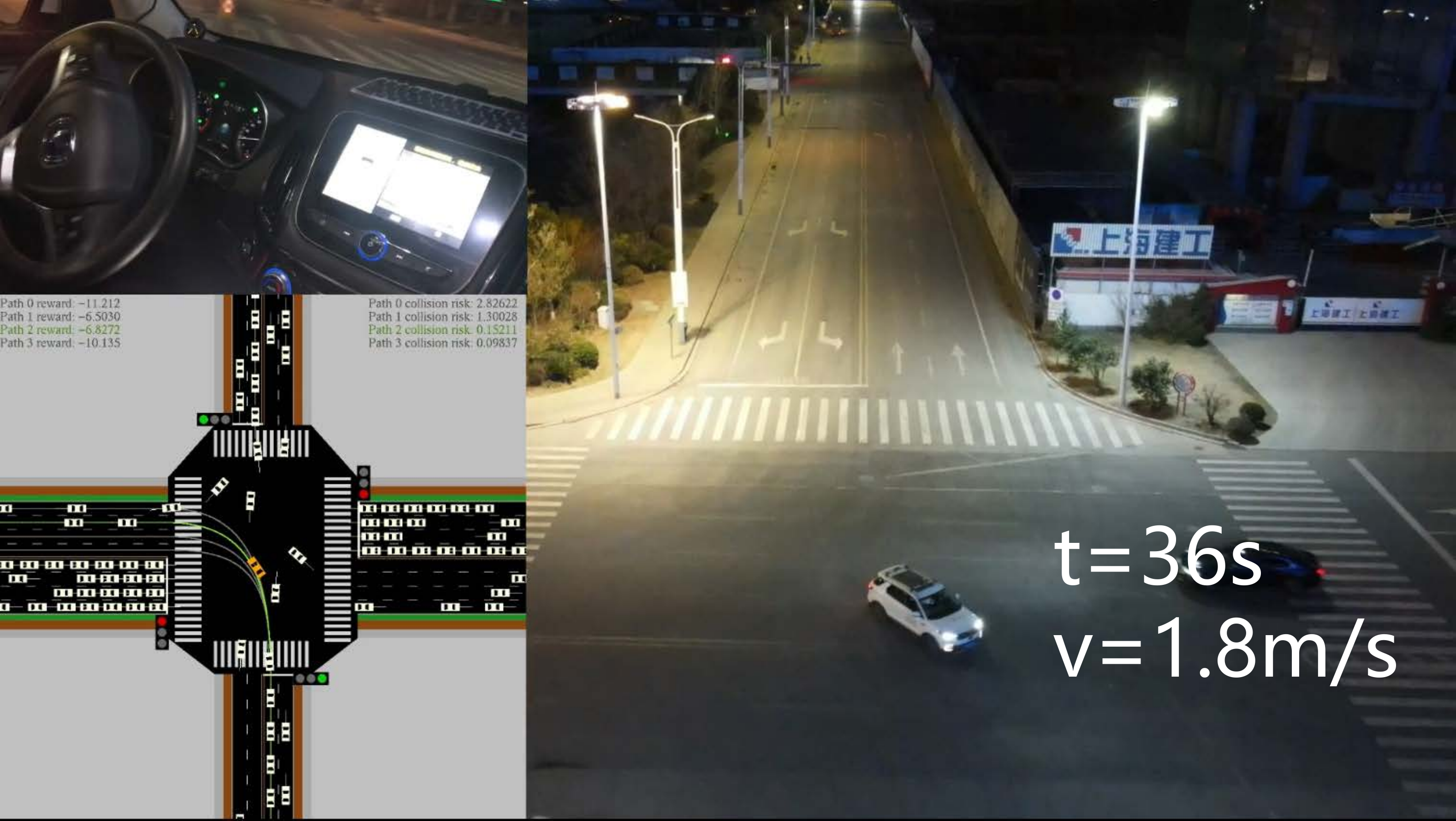}}\ 
\subfloat[]{\label{fig.real_test_step5}\includegraphics[width=0.3\textwidth]{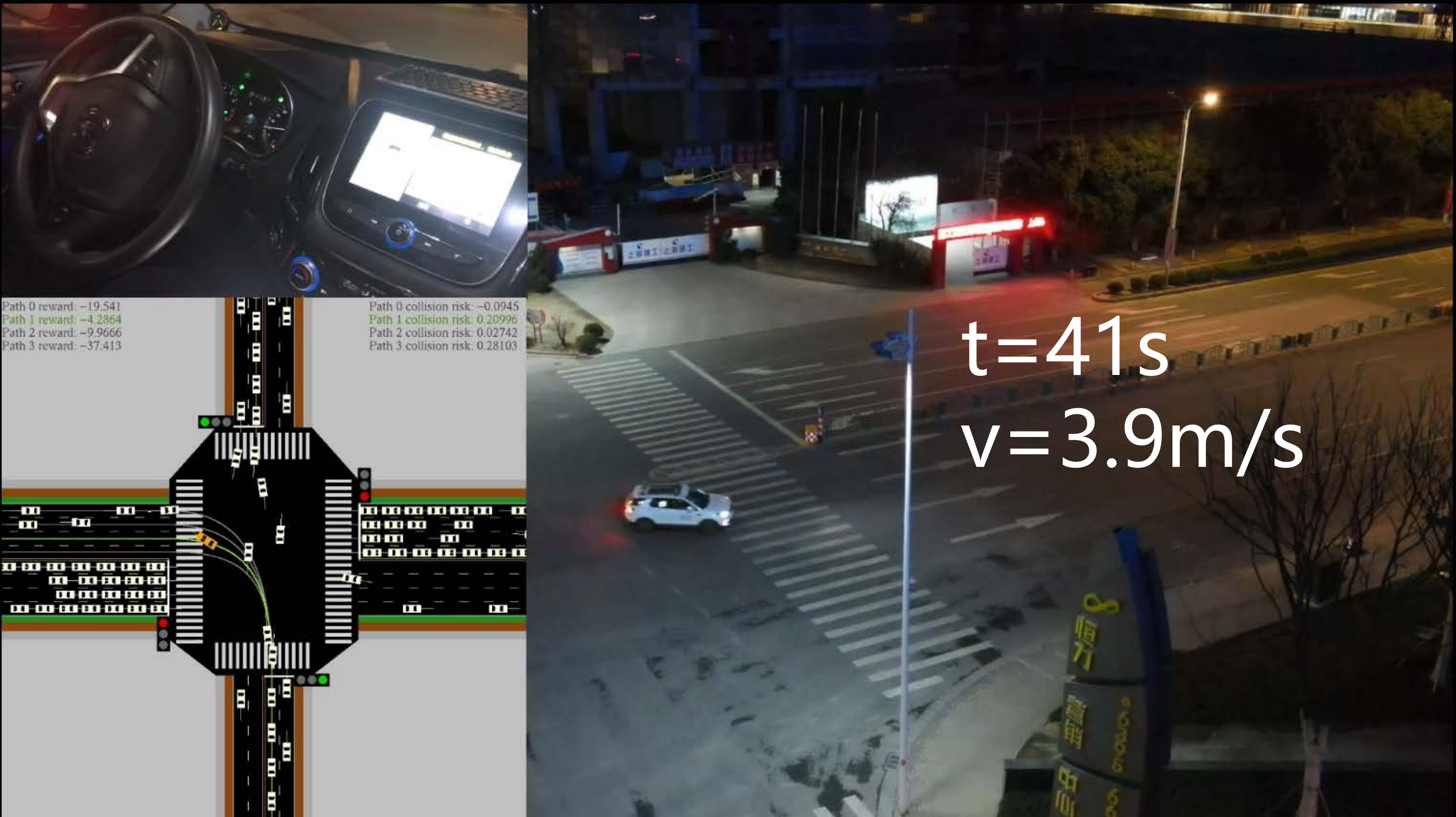}}\ 
\subfloat[]{\label{fig.real_test_step6}\includegraphics[width=0.3\textwidth]{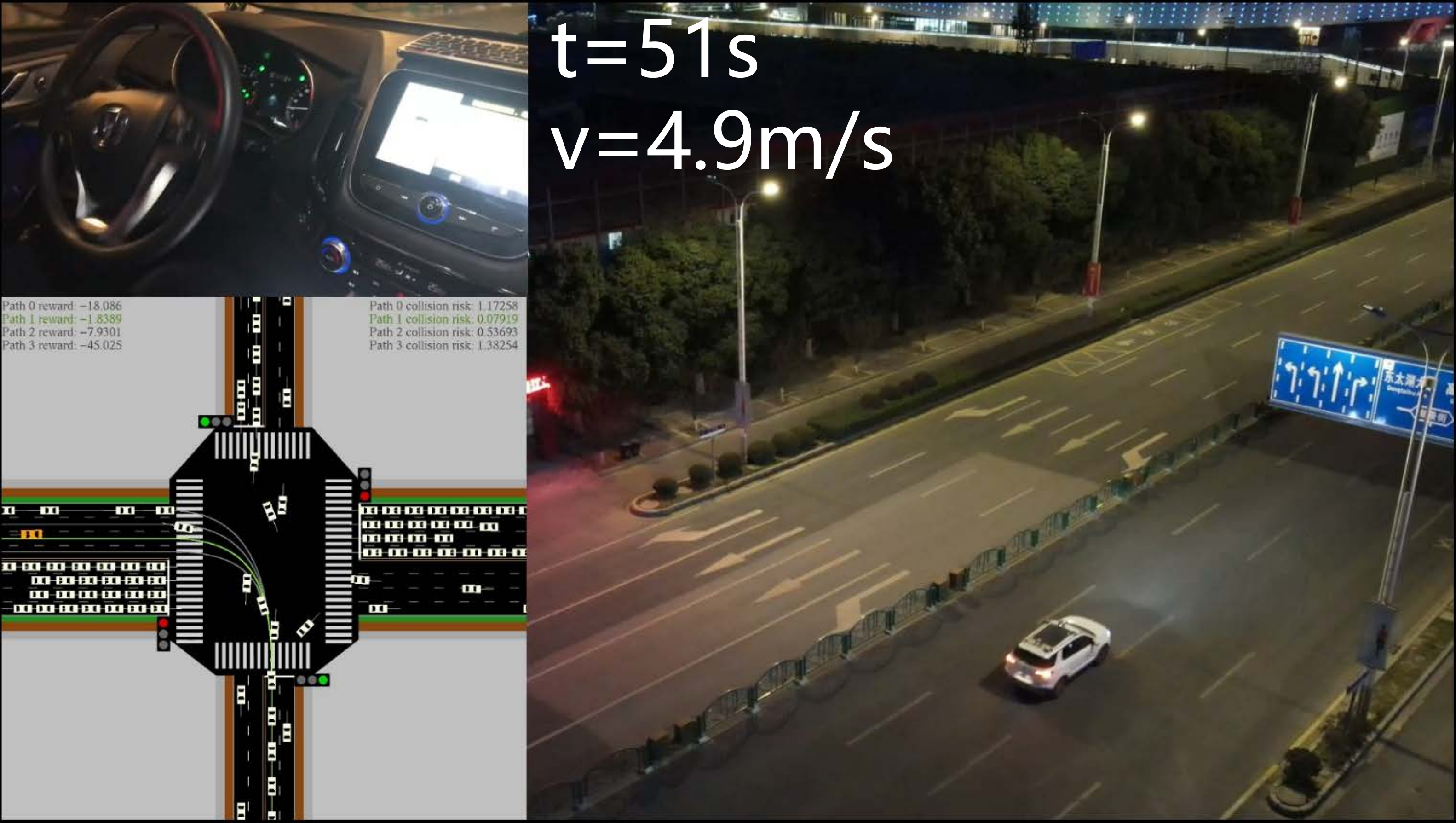}}\ 
\caption{Featured time steps of the left run.}
\label{fig.real_test_demo}
\end{figure*}

\begin{figure}[htbp]
\centering
\captionsetup[subfigure]{justification=centering}
\subfloat[Speed]{\includegraphics[width=0.15\textwidth]{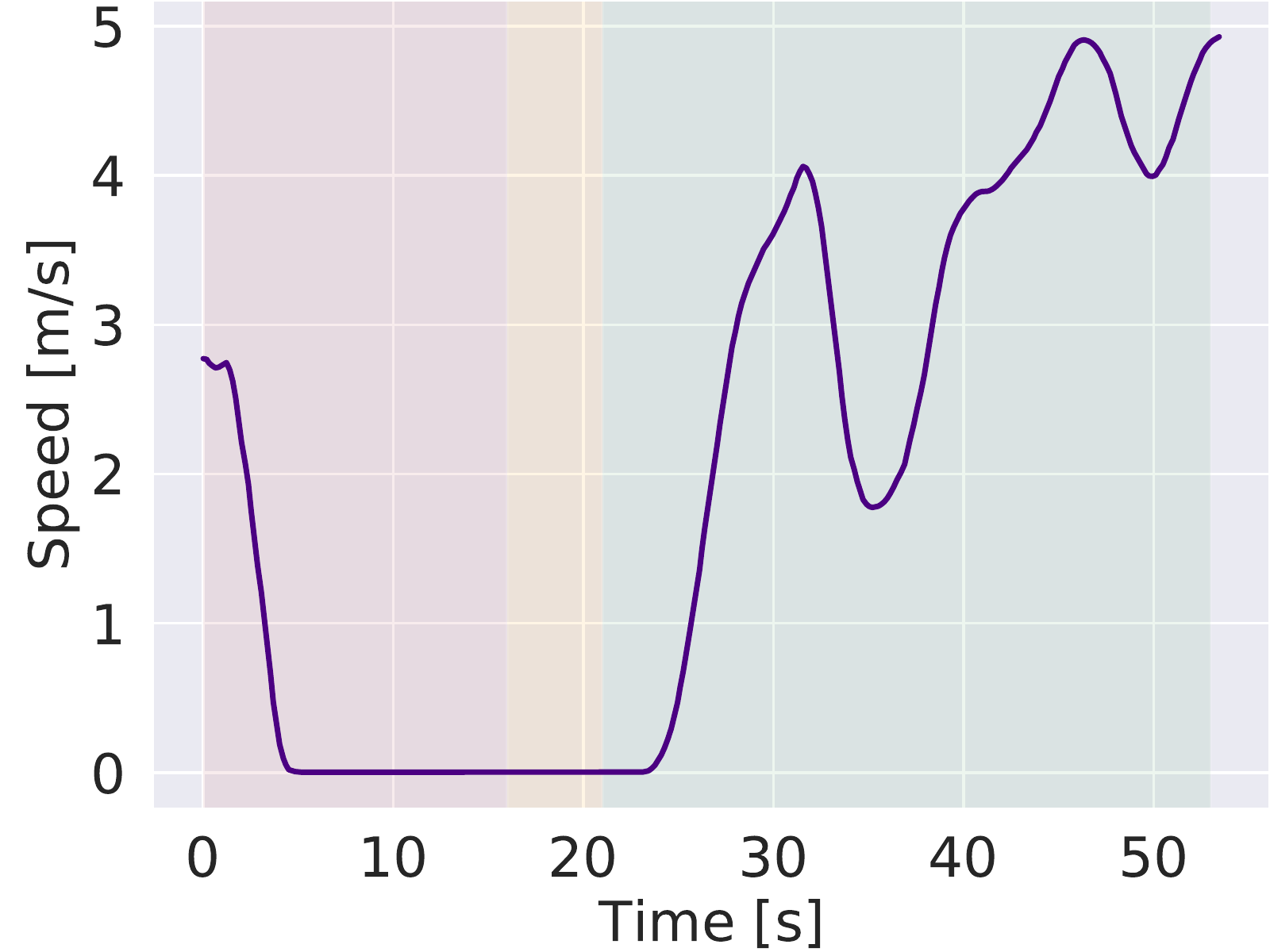}}
\subfloat[Ref index]{\includegraphics[width=0.15\textwidth]{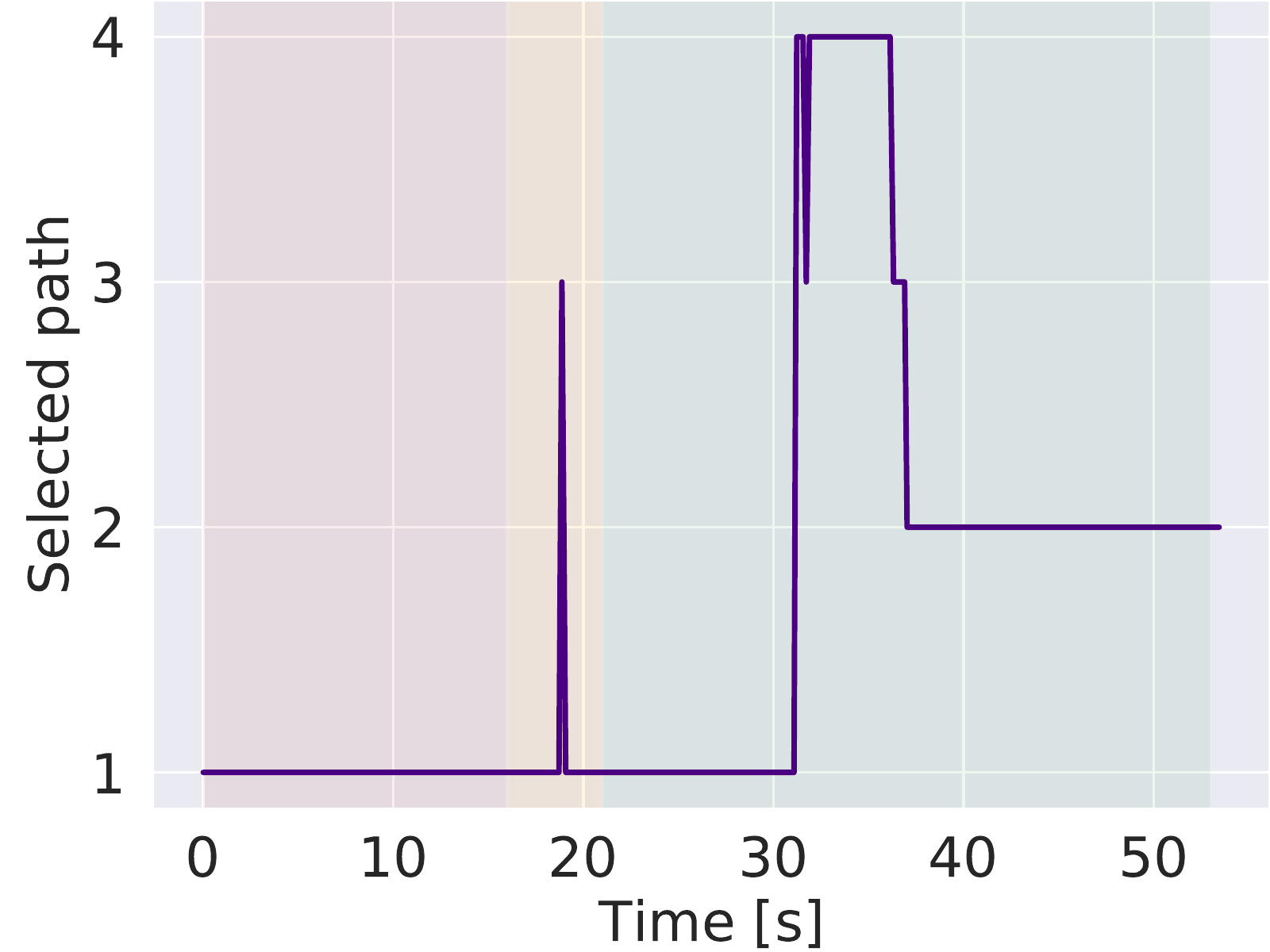}}
\subfloat[Heading angle]{\includegraphics[width=0.15\textwidth]{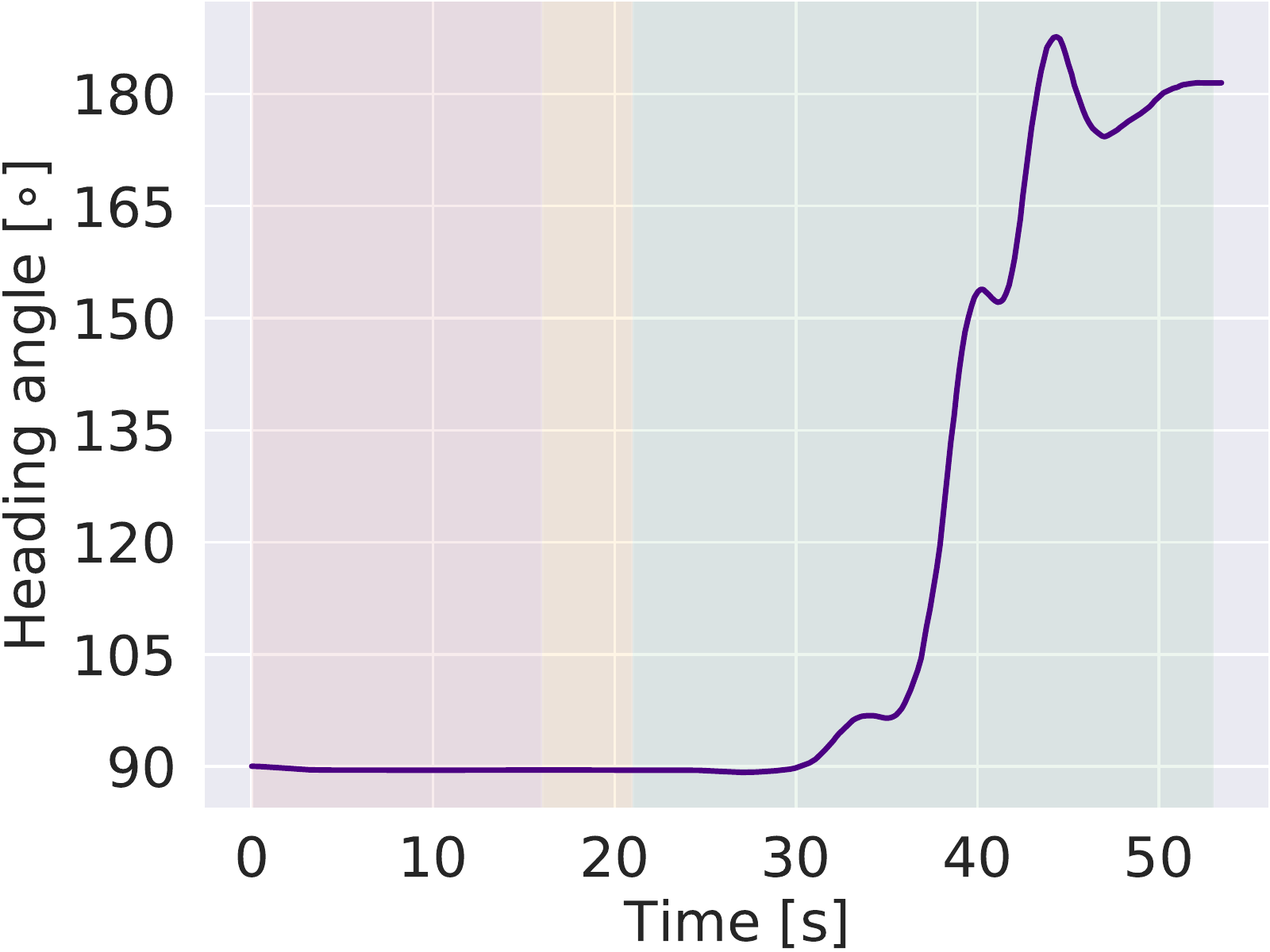}}\\
\subfloat[Position error]{\includegraphics[width=0.15\textwidth]{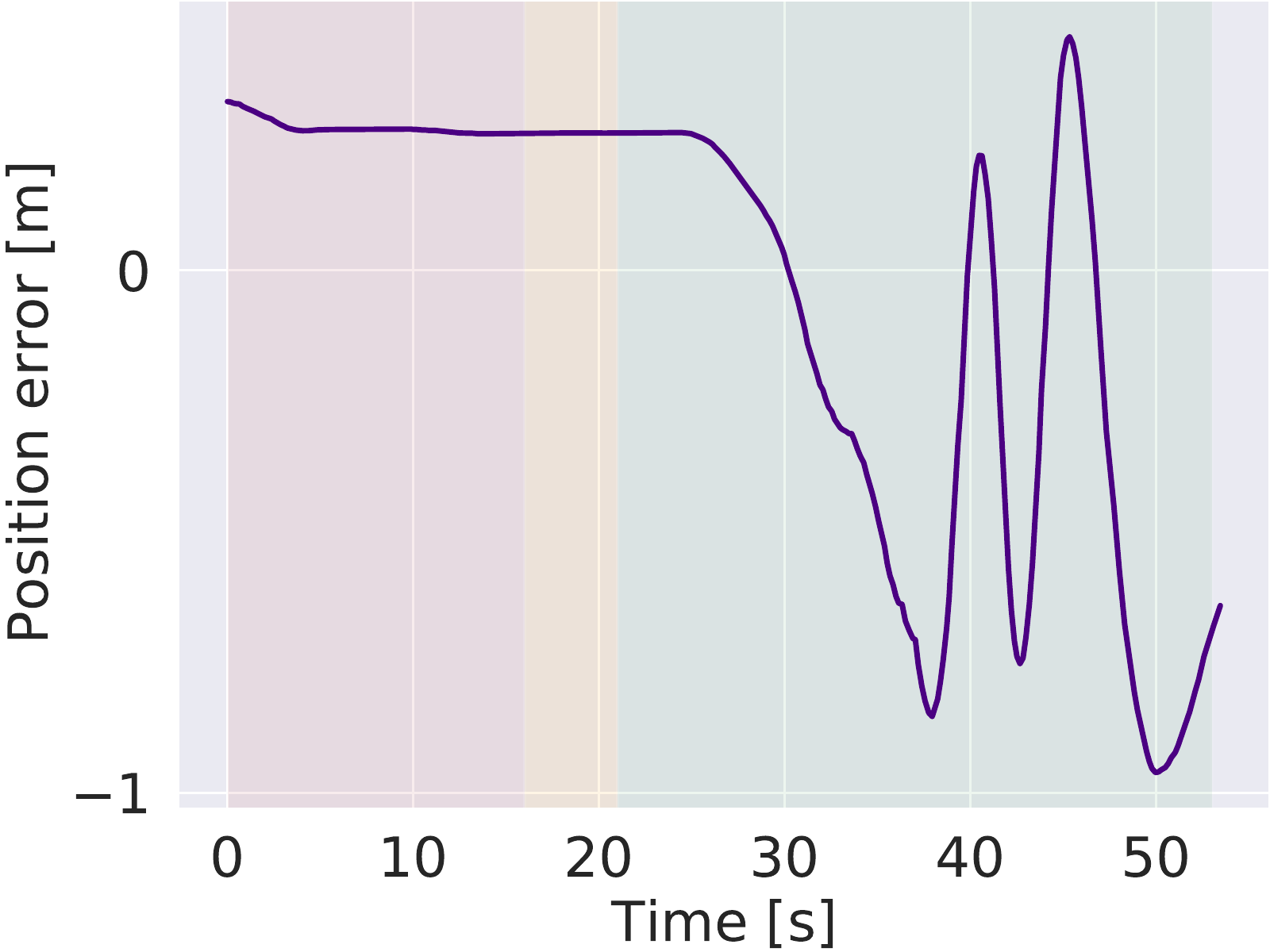}}
\subfloat[Velocity error]{\includegraphics[width=0.15\textwidth]{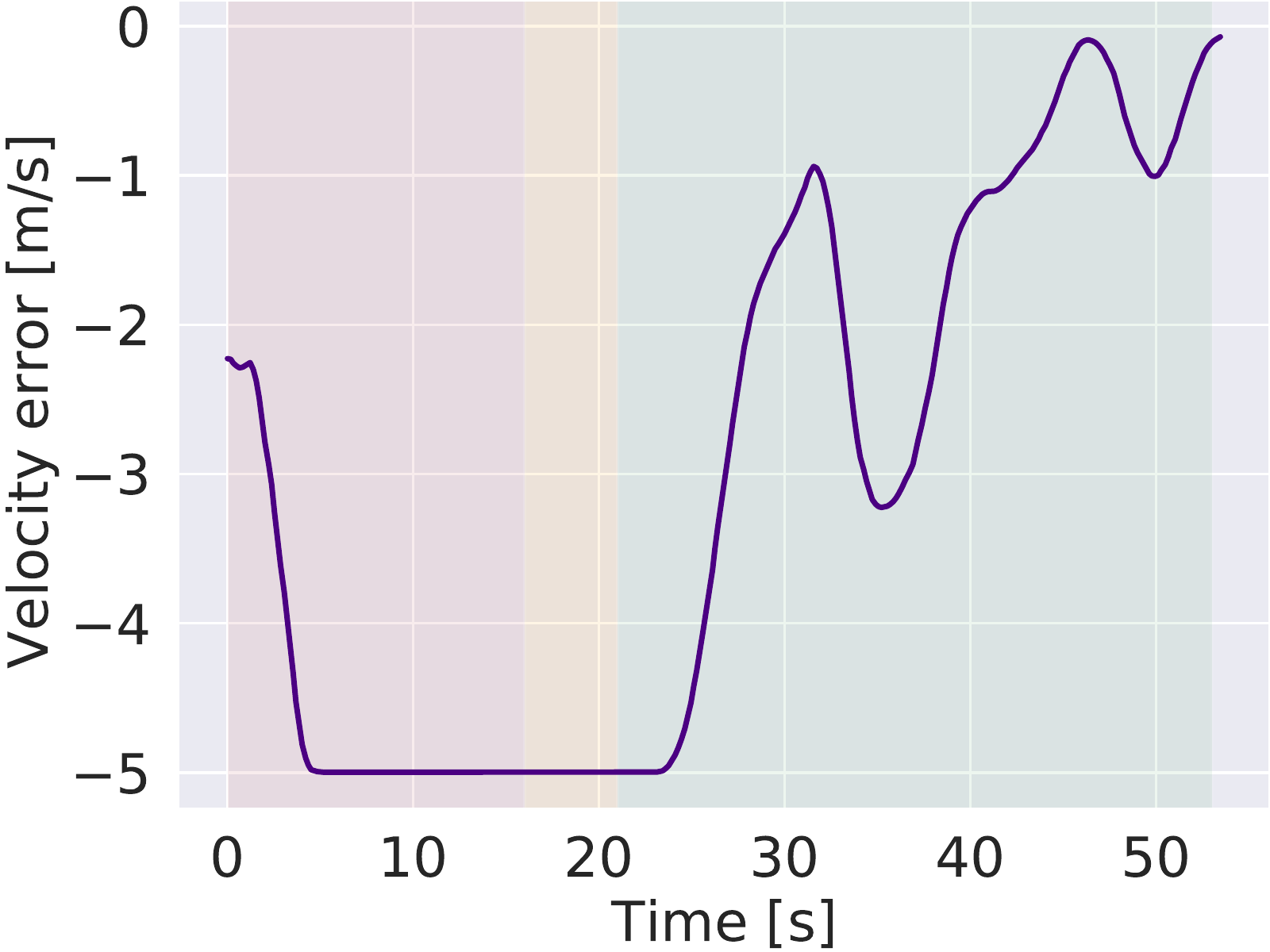}}
\subfloat[Heading error]{\includegraphics[width=0.15\textwidth]{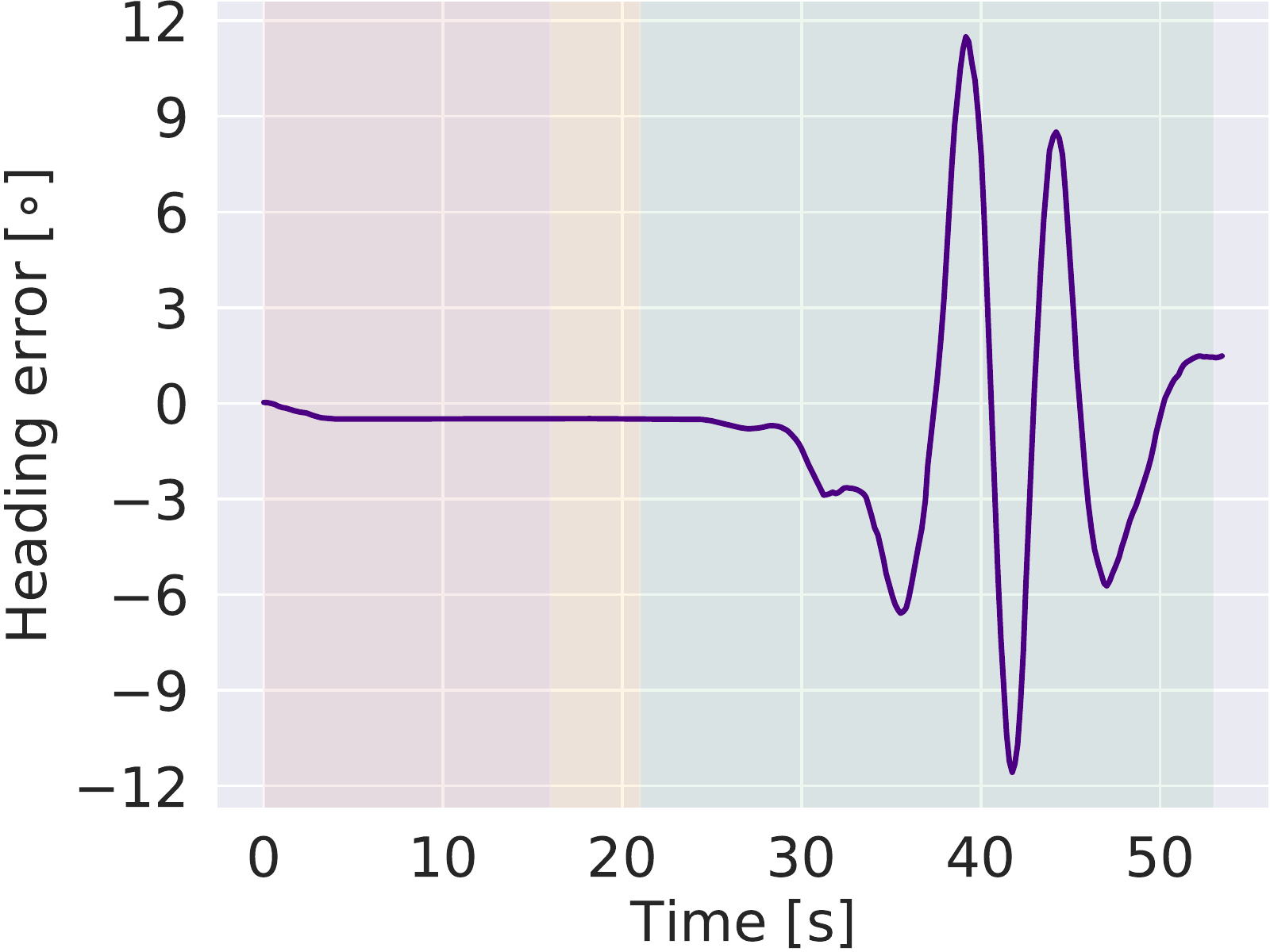}}\\
\subfloat[Steering angle]{\includegraphics[width=0.15\textwidth]{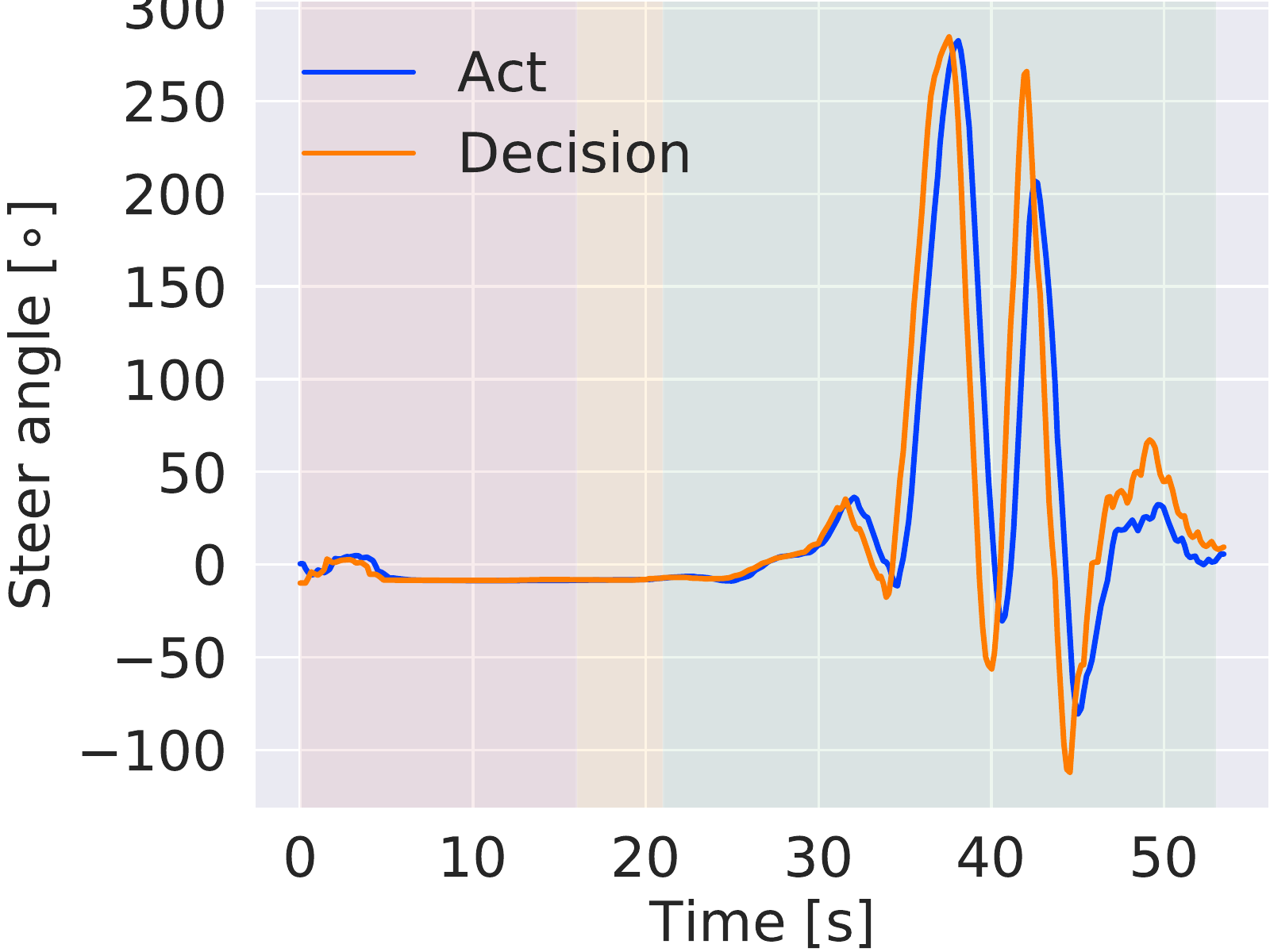}}
\subfloat[Acceleration]{\includegraphics[width=0.15\textwidth]{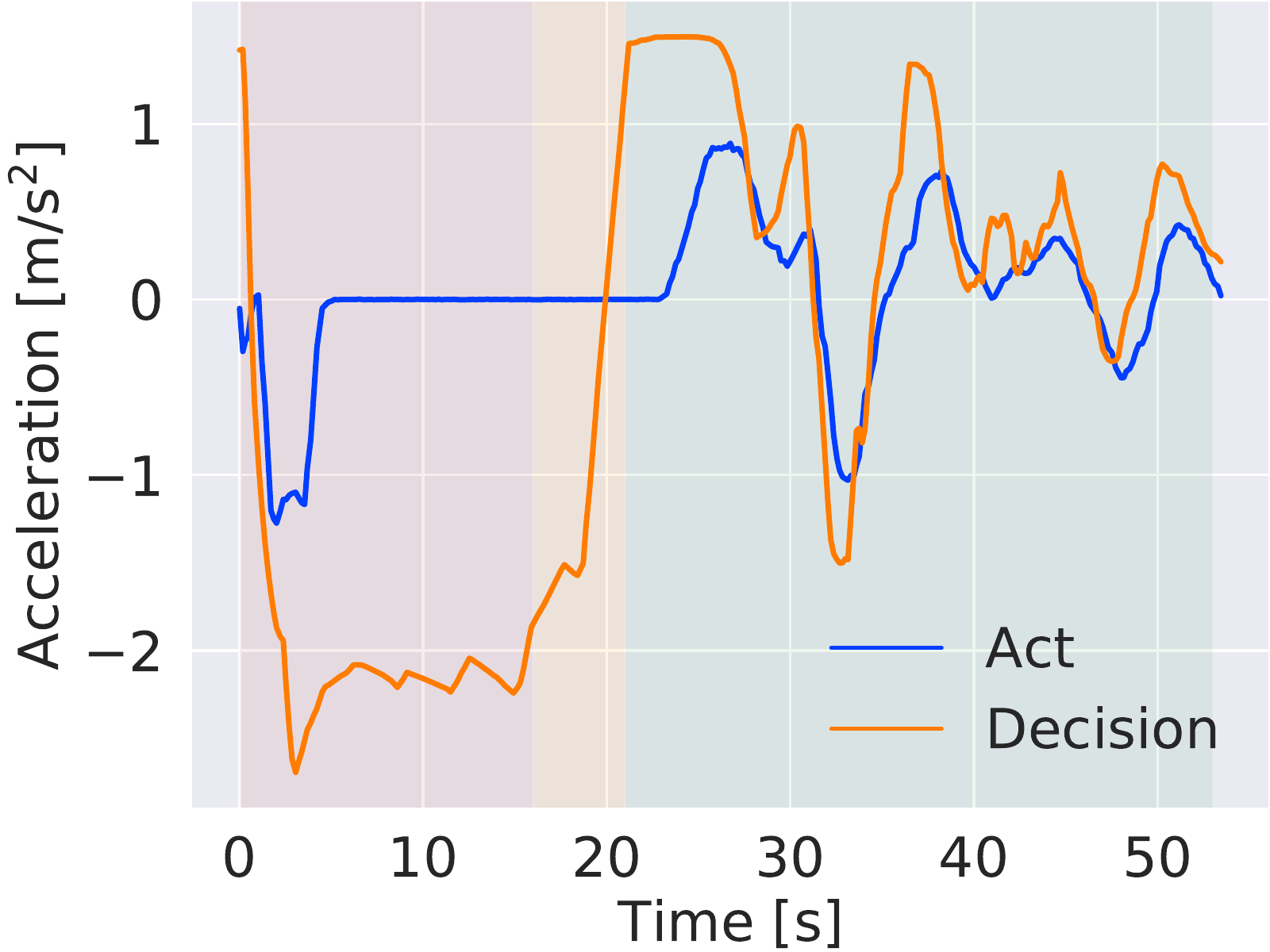}}
\subfloat[Computing time]{\includegraphics[width=0.15\textwidth]{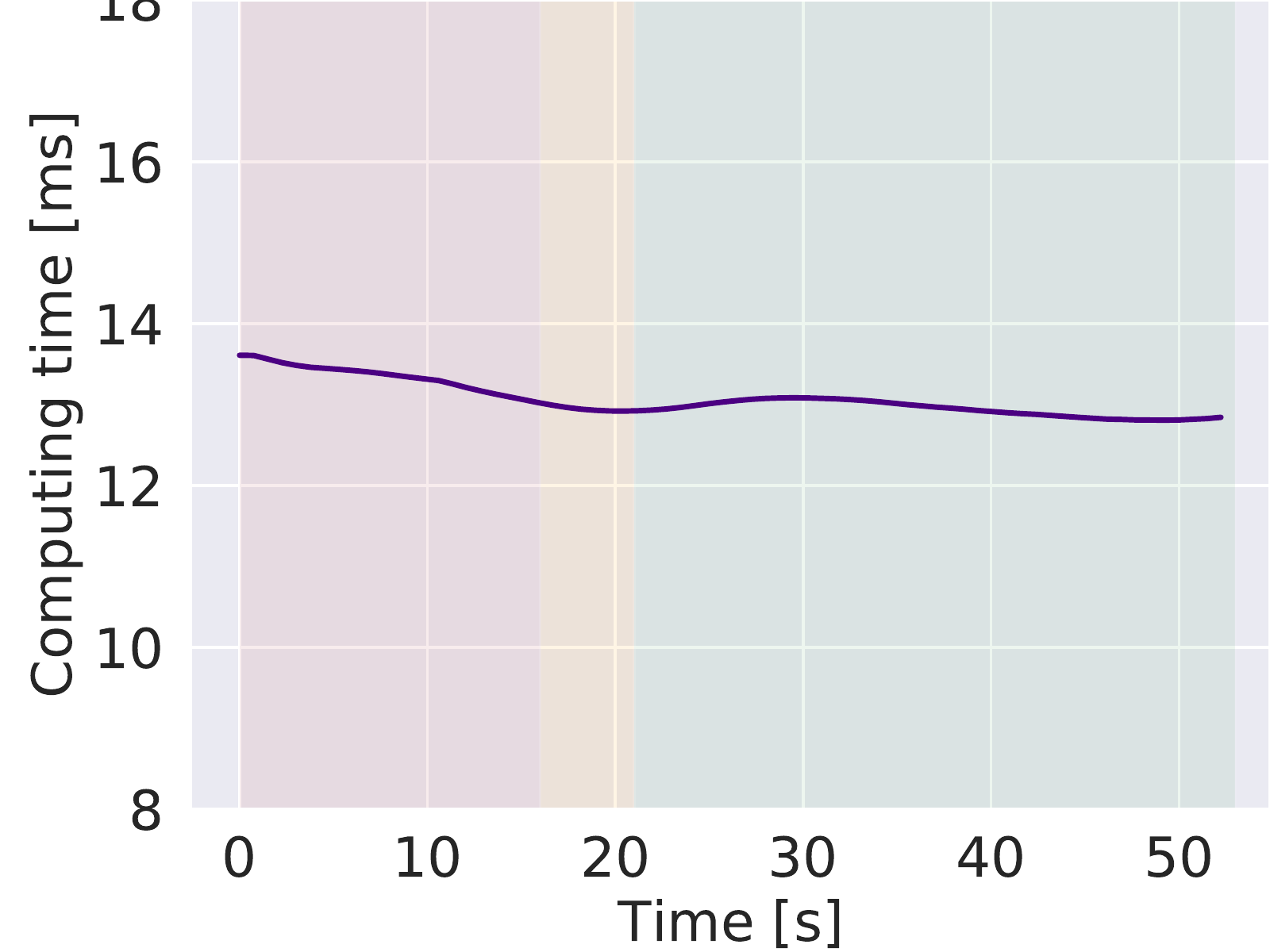}}
\caption{Key parameters in the left run.}
\label{fig.real_test_curves}
\end{figure}

\subsection{Experiment 1:  Functionality verification}
This experiment aims to verify the functionality of the IDC framework under different tasks and scenarios. In total, nine runs were carried out, three for each task. In each run, the ego vehicle is initialized before the south entrance with random states, meanwhile, the surrounding vehicles and signals are also initialized randomly. Following the diagram in Fig. \ref{fig.real_test_setting}, the run keeps going on until the ego passes the intersection successfully, i.e., without colliding with obstacles or breaking traffic rules. The diversity among different runs is guaranteed by using different random seeds. All the videos are available online (https://youtu.be/adqjor5KXxQ).

Since the left-turning is the most complex task with the most potential interactions with surrounding vehicles, we visualize one of the left runs by snapshotting its featured time steps shown in Fig.~\ref{fig.real_test_demo} and drawing the key parameters in Fig. \ref{fig.real_test_curves}. At the beginning, the ego pulls up before the stop line, waiting for the green light (Fig.~\ref{fig.real_test_step1}). When that comes, the ego accelerates into the intersection to reduce the velocity tracking error (Fig.~\ref{fig.real_test_step2}). In the center of the intersection, it encounters a straight-going vehicle with high speed from the opposite direction. In order to avoid collision, the ego slows itself down and switches to the path 4, with which it is able to bypass the vehicle from back (Fig.~\ref{fig.real_test_step3}). However, another straight-going vehicle comes over after the previous one passes through, but with a relative low speed. This time, the ego chooses to accelerate to pass first. Interestingly, as the vehicle approaches, the optimal path is automatically selected away from it, i.e., changing from the path 4 to the path 3 and finally the path 2, to minimize the tracking errors (Fig.~\ref{fig.real_test_step3} and Fig.~\ref{fig.real_test_step4}). Following the path 2, the ego finally passes the intersection successfully. The computing time of all step is within 15ms, showing the superior of our method in terms of the online computing efficiency.

\subsection{Experiment 2: Robustness to noise}
This experiment aims to compare the driving performance under different levels of noises added manually to verify the robustness of the trained policies. Referring to \cite{duan2021DistributedRL}, we take similar measure to divide the noises into 7 levels, i.e. 0-6, where all the noises are in form of Gaussian white noise with different variances varying with the level and are applied in several dimensions of RL states, as shown in Table \ref{tab.noise_level}.

We choose the left-turning task to perform seven experiments, one for each noise level in the Table \ref{tab.noise_level}, to show its influence on the effect of the proposed framework. For each noise level, i.e., each experiment, we make statistical analysis on the parameters related to vehicle stability, namely the yaw rate and lateral speed, and the control quantities, i.e., steering angle and acceleration, as shown in Fig. \ref{fig.real_test_exp2}. Our method is rather robust to low level noises (0-3) in which the distribution parameters including the median value, the standard variance, the quantile values and the bounds have no significant change. However, these parameters, especially the variance and the bounds, are inevitably enlarged if we add stronger noise. The fluctuations of the lateral velocity and the yaw rate are mainly caused by the sensitivity of the steering wheel, because large noises tend to yield large variance of the steering angle, which further leads to the swing of the vehicle body. But nevertheless the stability bounds always remain in a reasonable range, proving the robustness of the proposed method.
\begin{table}
\centering
\caption{Noise level and the corresponding standard deviation.}
\label{tab.noise_level}
\begin{tabular}{cccccccc}
\toprule
Noise level & 0 & 1 & 2 & 3 & 4 & 5 & 6 \\
\midrule
$\delta_p$ [m]& 0 & 0.017 & 0.033 & 0.051 & 0.068 & 0.085 & 0.102 \\
$\delta_{\phi}$ [$\degree$]& 0 & 0.017 & 0.033 & 0.051 & 0.068 & 0.085 & 0.102 \\
$p^j_x,p^j_y$ [m]& 0 & 0.05 & 0.10 & 0.15 & 0.20 & 0.25 & 0.30 \\
$v^j_{\rm lon}$ [m/s]& 0 & 0.05 & 0.10 & 0.15 & 0.20 & 0.25 & 0.30 \\
$\phi^j$ [$\degree$]& 0 & 1.4 & 2.8 & 4.2 & 5.6 & 7.0 & 8.4 \\
\bottomrule
\end{tabular}
\end{table}

\begin{figure}[htbp]
\centering
\captionsetup[subfigure]{justification=centering}
\subfloat[Lateral velocity]{\includegraphics[width=0.24\textwidth]{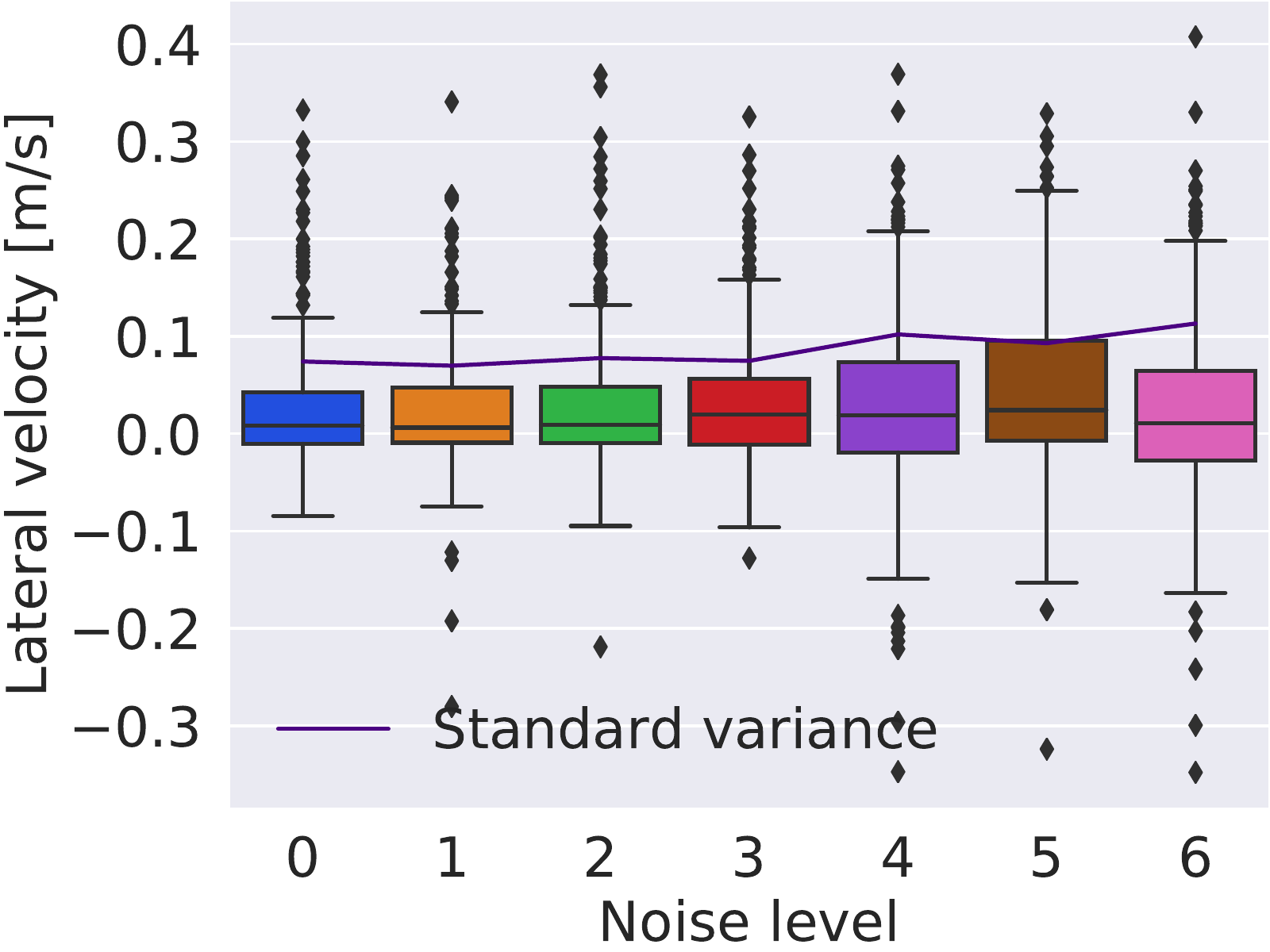}}
\subfloat[Yaw rate]{\includegraphics[width=0.24\textwidth]{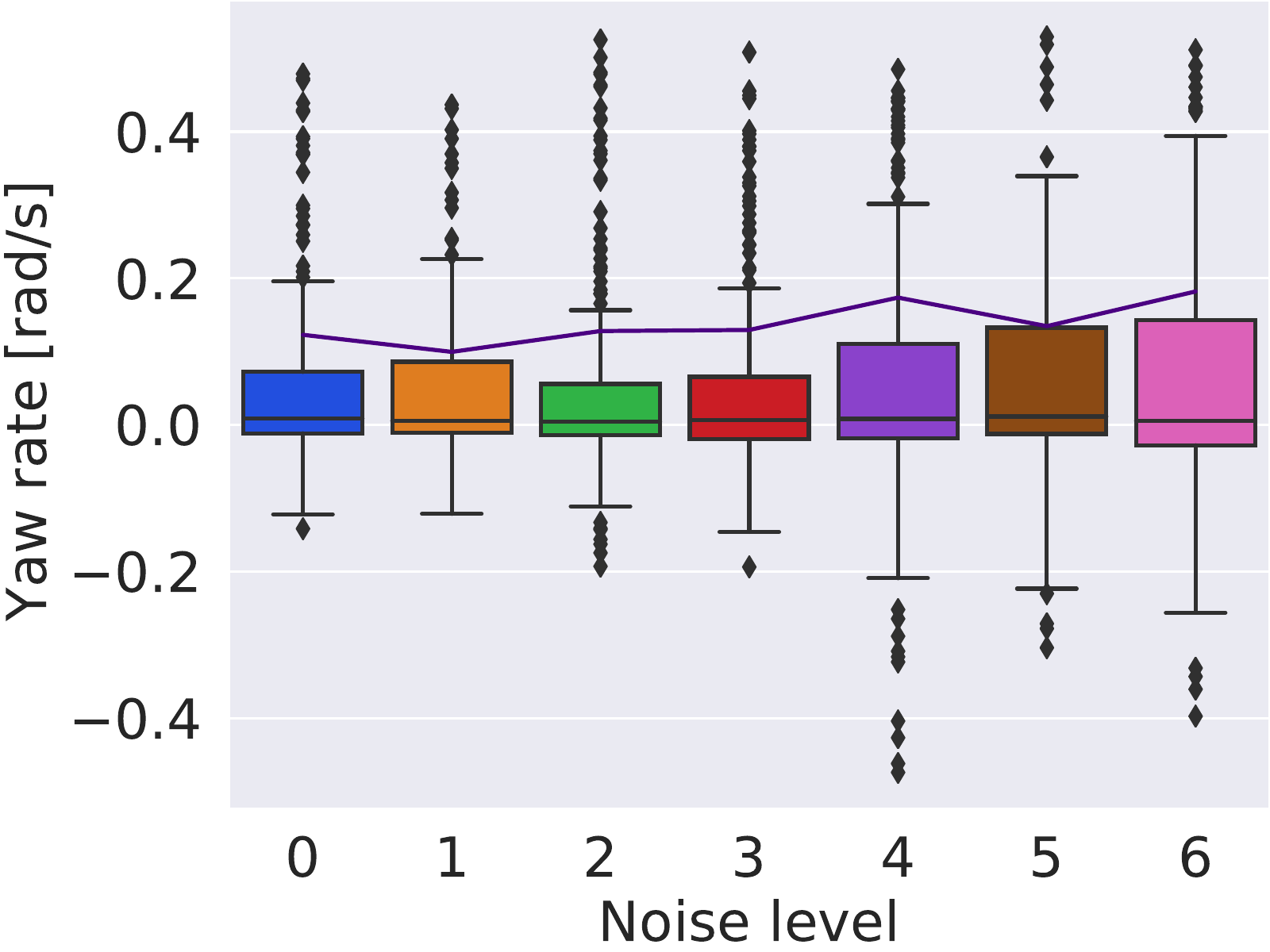}}\\
\subfloat[Steering angle]{\includegraphics[width=0.24\textwidth]{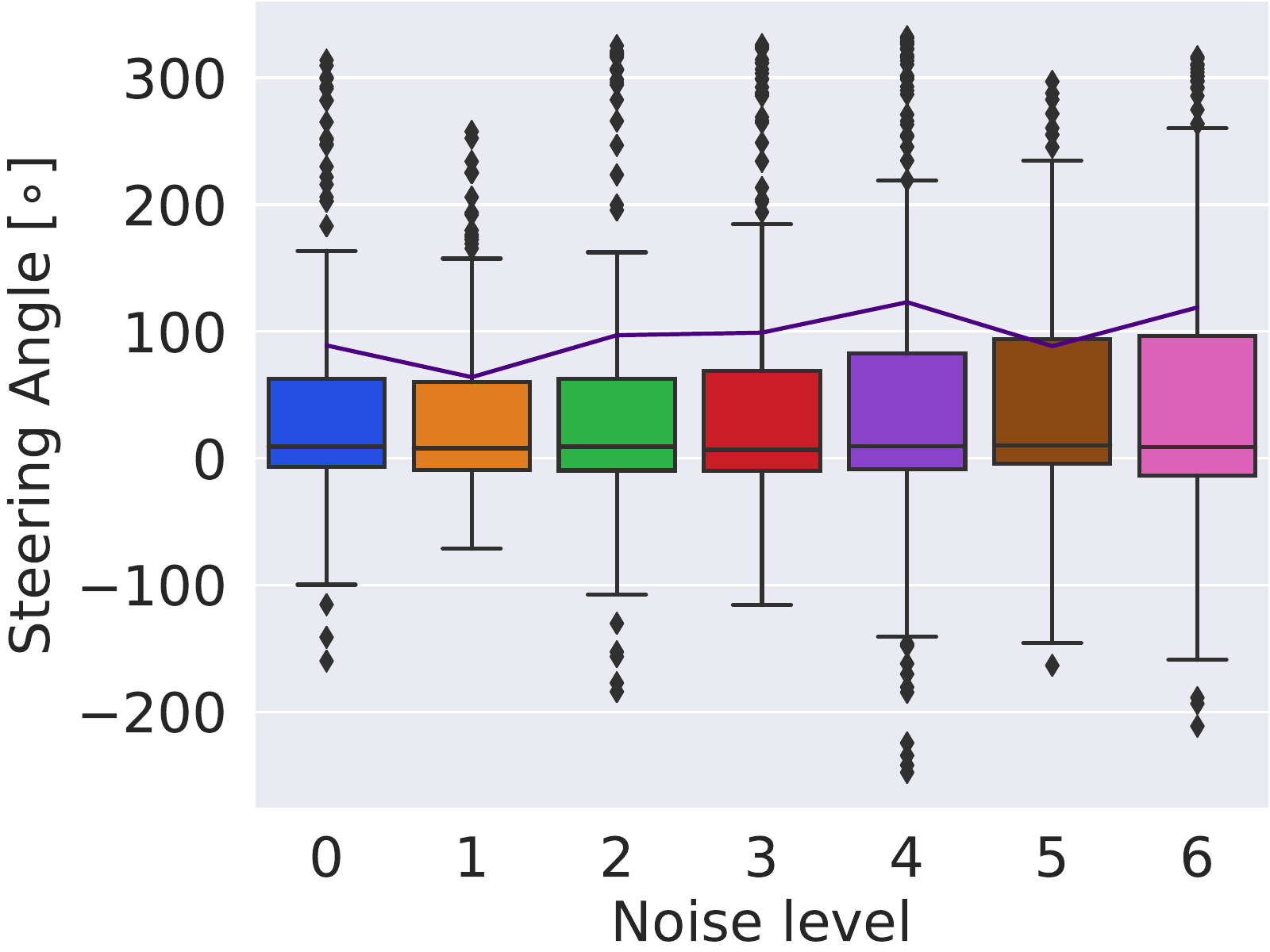}}
\subfloat[Acceleration]{\includegraphics[width=0.24\textwidth]{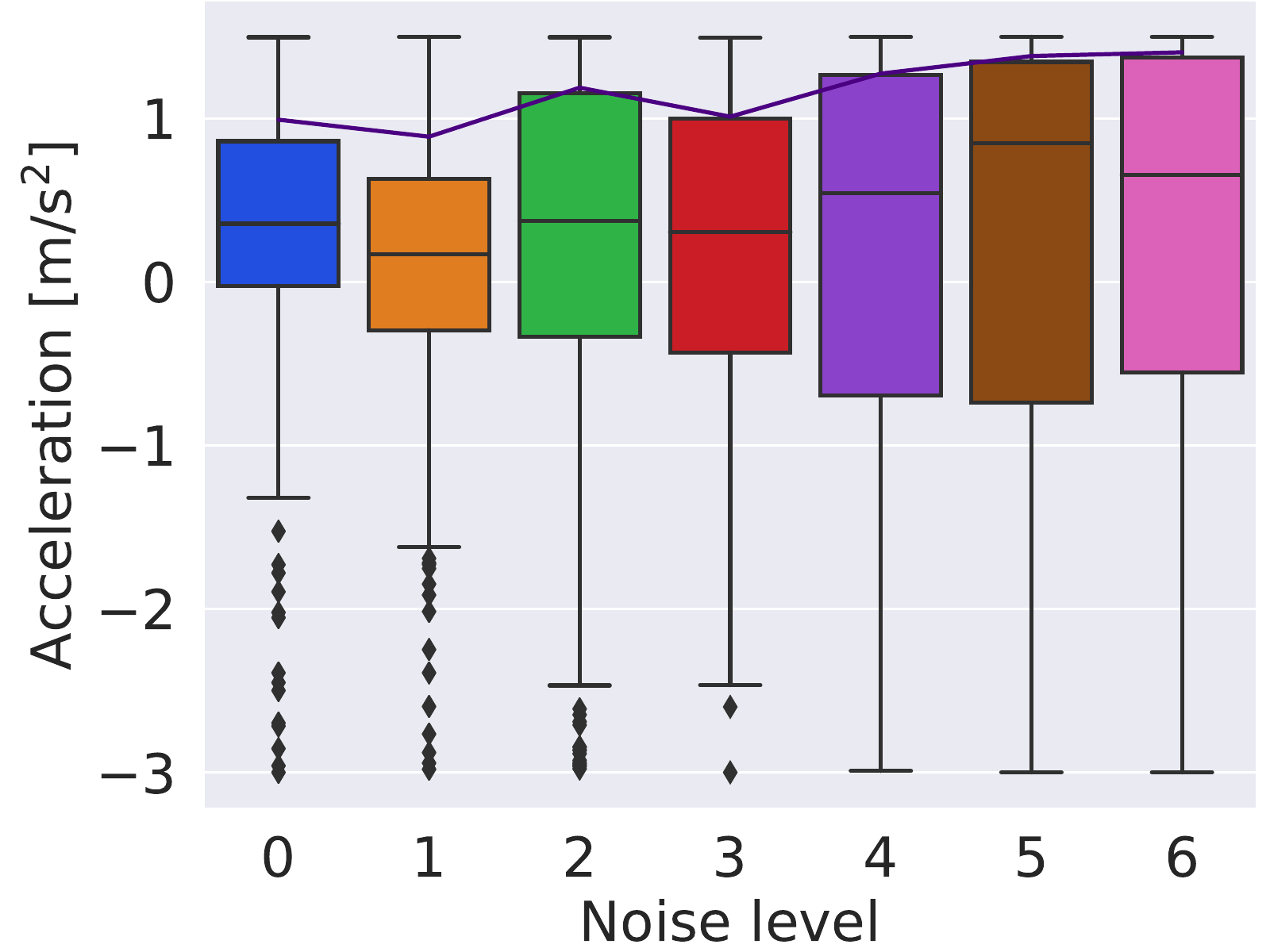}}\\
\caption{States and actions in different noise level.}
\label{fig.real_test_exp2}
\end{figure}

\subsection{Experiment 3: Robustness to human disturbance}
The experiment is to verify the ability of our method to cope with human disturbance. We also use a left-turning case, in which we perform two times of human intervention on the steer wheel, and then switch to the autonomous driving mode. We draw the states of the ego vehicle in Fig. \ref{fig.real_test_exp3}, where the colored region is when the human disturbance is acted on. The first one is acted on 10s, when the ego just enters the crossroad and we turn the steering wheel left to 100$\degree$ from $0\degree$ to make the ego head to the left. After that the driving system turn the steering wheel right immediately to correct the excessive ego heading. The second one happens at 16s, when the ego is turning to the left to pass the crossroad. We turn the steering wheel right from $90\degree$ to $0\degree$ to interrupt the process. After the take-over, the driving system is able to turn the steering wheel left to $240\degree$ right away to continue to complete the turn left operation. Results show that the proposed method is capable of dealing with the abrupt human disturbance on the steering wheel by quick responses to the interrupted state after taking over.

\begin{figure}[htbp]
\centering
\captionsetup[subfigure]{justification=centering}
\subfloat[Speed]{\includegraphics[width=0.24\textwidth]{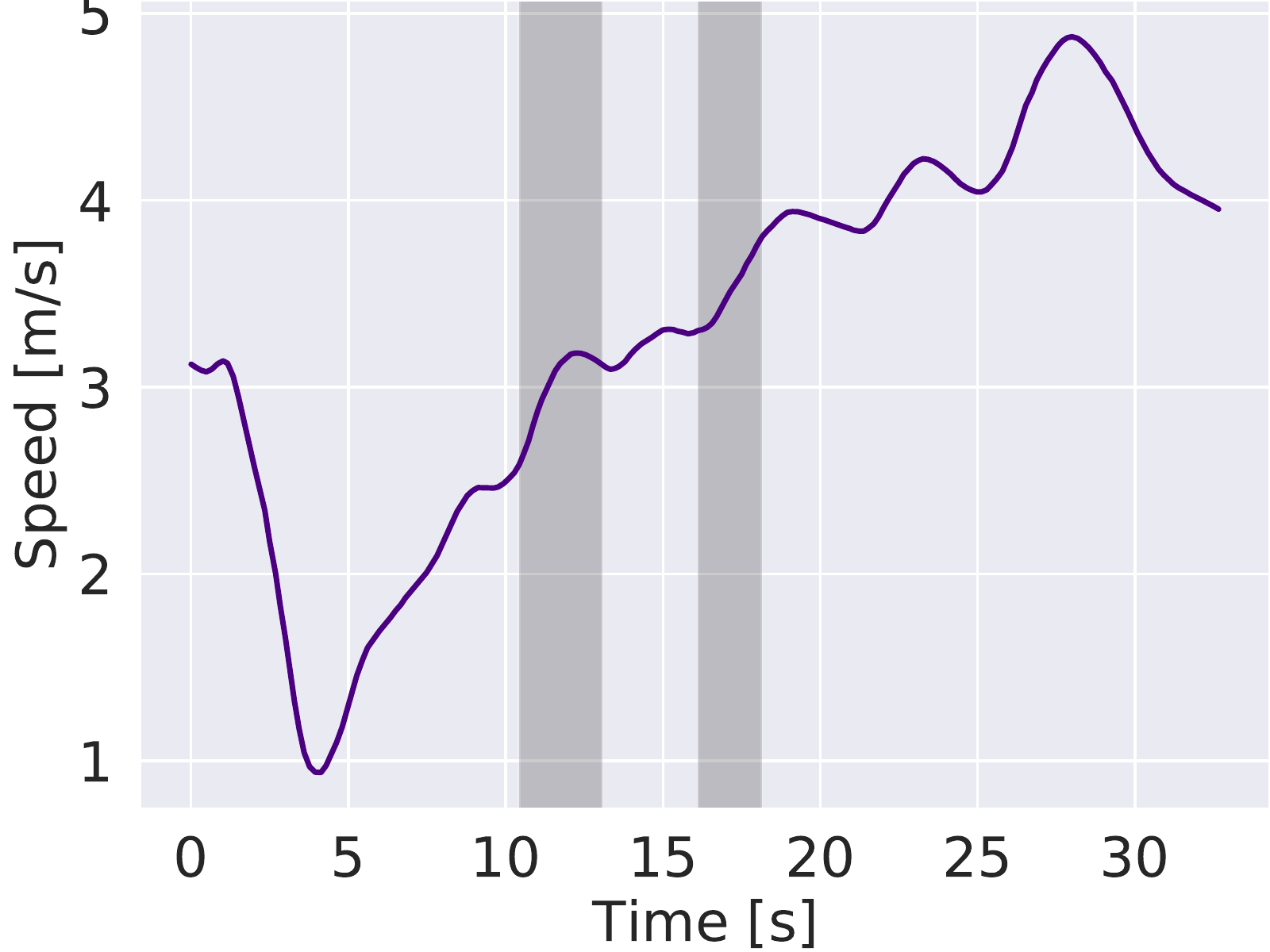}}
\subfloat[Heading angle]{\includegraphics[width=0.24\textwidth]{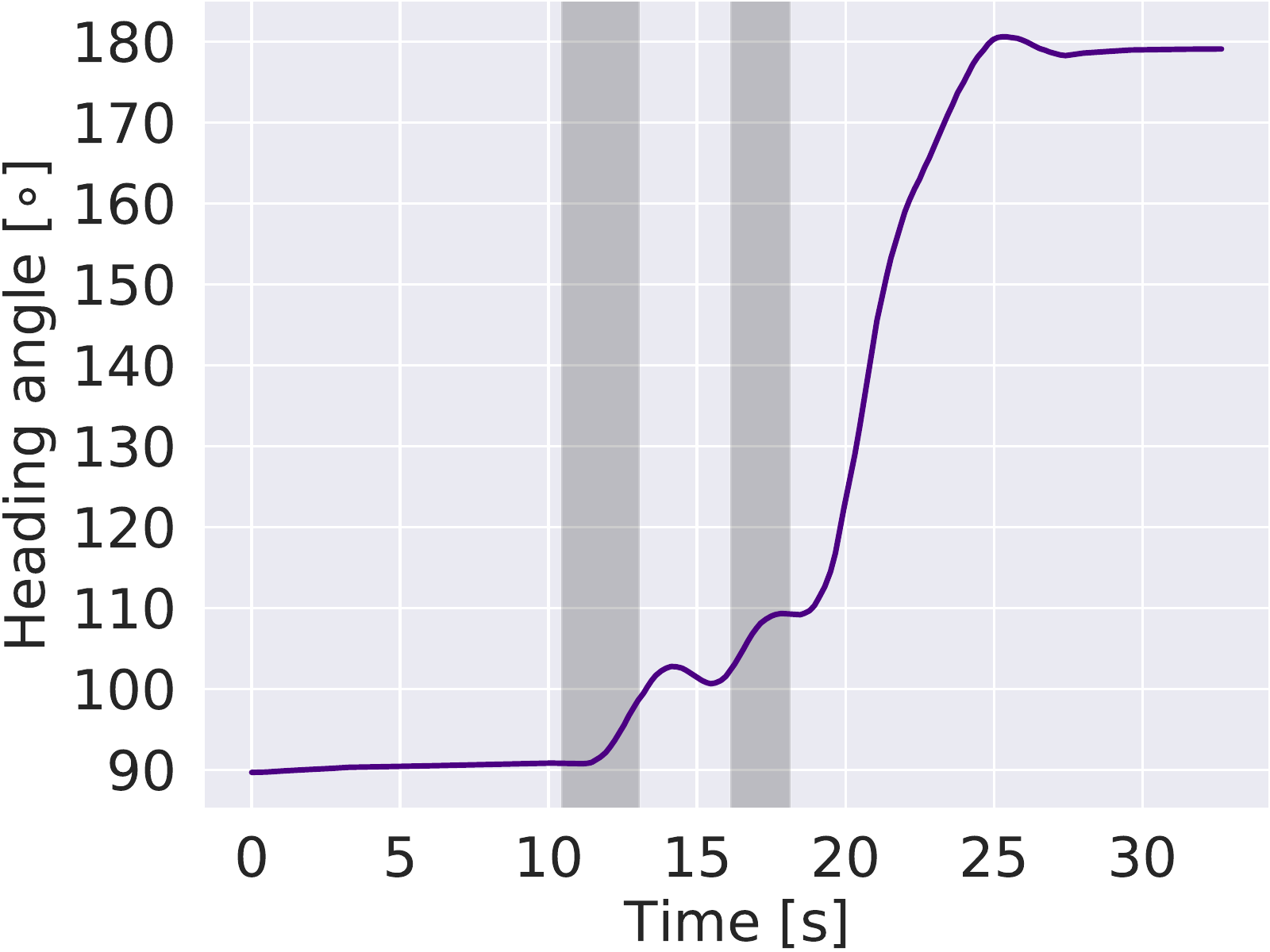}}\\
\subfloat[Steering angle]{\includegraphics[width=0.24\textwidth]{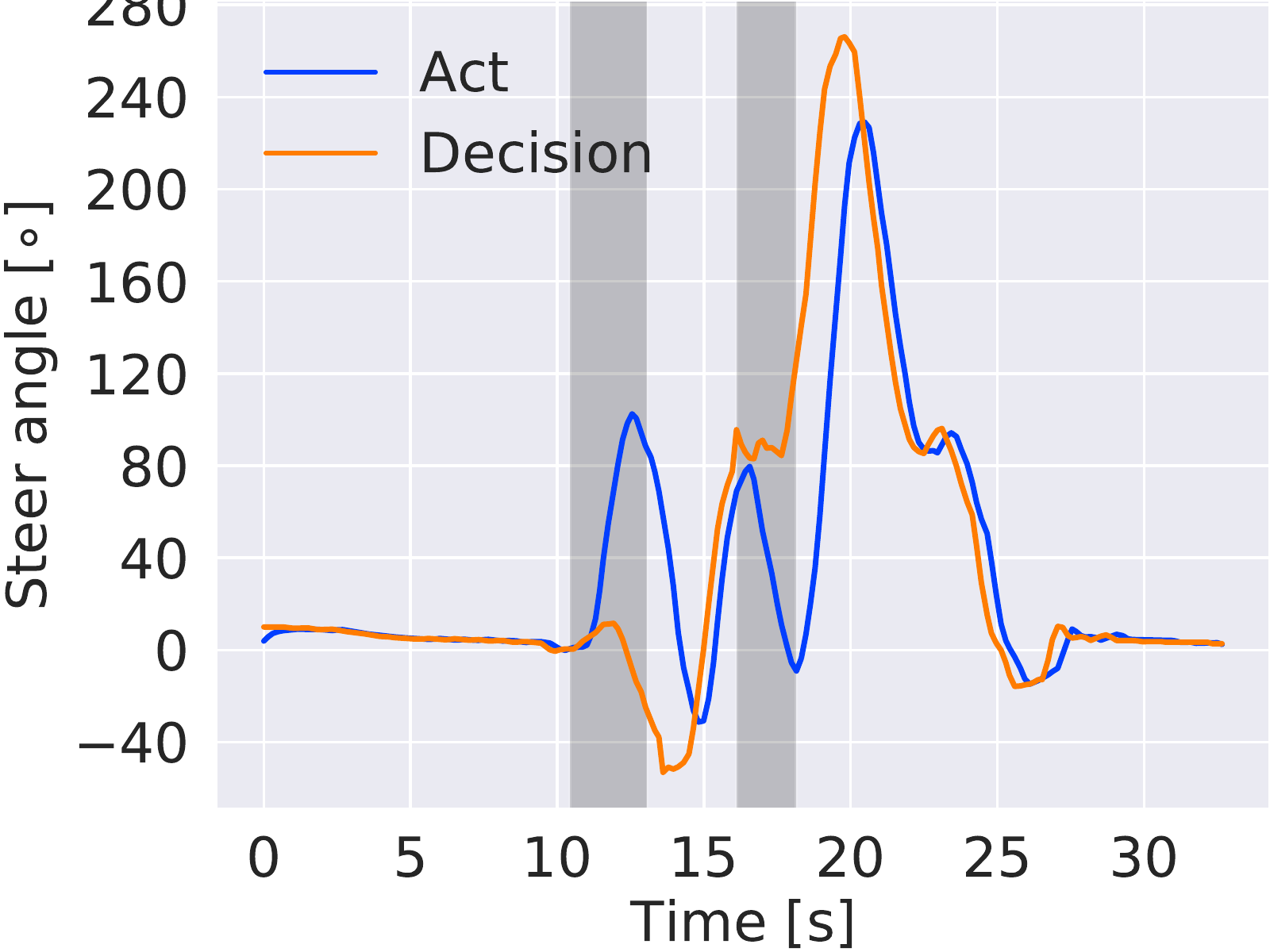}}
\subfloat[Acceleration]{\includegraphics[width=0.24\textwidth]{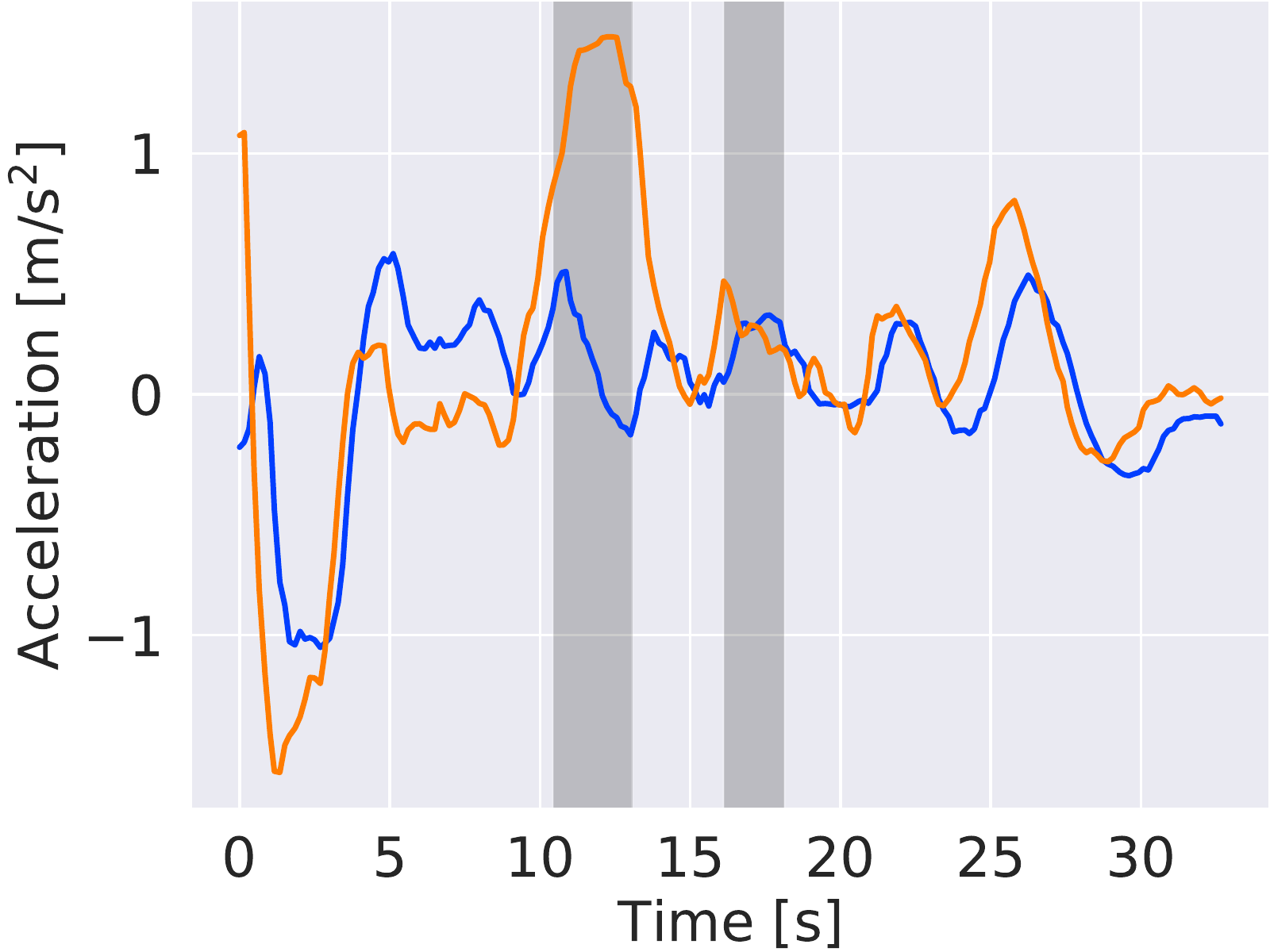}}\\
\caption{States and actions under human disturbance.}
\label{fig.real_test_exp3}
\end{figure}

\section{Conclusion}
In this paper, we propose integrated decision and control (IDC) framework for automated vehicles, for the purpose of building an interpretable learning system with high online computing efficiency and applicability among different driving tasks and scenarios. The framework decomposes the driving task into the static path planning and the dynamic optimal tracking hierarchically. The former is in charge of generating multiple paths only considering static constraints, which are then sent to the latter to be selected and tracked. The latter first formulates the selecting and tracking problem as constrained OCPs mathematically to take dynamic obstacles into consideration, and then solves it offline by a model-based RL algorithm we propose to seek an approximate solution of an OCP in form of neural networks. Notably, these solved approximation functions, namely value and policy, have a natural correspondence to the selecting and tracking problems, which originates the interpretability. Finally, the value and policy functions are used online instead, releasing the heavy computation due to online optimizations. We verify our framework in both simulation and in a real world intersection. Results show that our method has an order of magnitude higher online computing efficiency compared with the traditional rule-based method. In addition, it yields better driving performance in terms of traffic efficiency and safety, and shows great interpretability and adaptability among different driving tasks. In the future, we will place efforts on designing more general state representations to extend the tracking ability of the lower layer among paths in different tasks or even different scenarios.

\bibliographystyle{IEEEtran}
\bibliography{main.bbl}

\end{document}